\def\eqref#1{equation~\ref{#1}}
\def\1{\bm{1}}
\DeclareMathAlphabet{\mathsfit}{\encodingdefault}{\sfdefault}{m}{sl}
\SetMathAlphabet{\mathsfit}{bold}{\encodingdefault}{\sfdefault}{bx}{n}
\newcommand{\Var}{\mathrm{Var}}
\newtheorem{lemma}{Lemma}
\newtheorem{corollary}{Corollary}
\theoremstyle{definition}
\newtheorem{definition}{Definition}
\theoremstyle{remark}
\newcommand{\Range}{\mathrm{range}}
\newcommand{\Ker}{\mathrm{ker}}
\definecolor{codegreen}{rgb}{0,0.6,0}
\definecolor{codegray}{rgb}{0.5,0.5,0.5}
\definecolor{codepurple}{rgb}{0.58,0,0.82}
\definecolor{backcolour}{rgb}{0.95,0.95,0.92}
\definecolor{codepurple}{rgb}{0.58,0,0.82}
\definecolor{papercolor}{HTML}{0668E1}
\definecolor{darkred}{rgb}{0.68,0.05,0.0}
\definecolor{tab_blue}{RGB}{230,245,255} 
\definecolor{tab_purple}{RGB}{245,230,255} 
\definecolor{codeblue}{rgb}{0.25,0.5,0.5}
\definecolor{codekw}{rgb}{0.85, 0.18, 0.50}
\definecolor{darkblue}{rgb}{0.0,0.0,0.65}
\definecolor{darkred}{rgb}{0.68,0.05,0.0}
\definecolor{darkgreen}{rgb}{0.0,0.29,0.29}
\definecolor{darkpurple}{rgb}{0.47,0.09,0.29}
\definecolor{mygreen}{RGB}{0,128,0}
\definecolor{emerald}{RGB}{34,139,34} 
\newcommand{\greenarrow}{\textcolor{emerald}{$\uparrow$}}
\newcommand{\redarrow}{\textcolor{red}{$\downarrow$}}
\newcommand{\algoname}{\textsc{Uml}\xspace}
\newcommand{\algofullname}{\textsc{\textbf{U}npaired \textbf{M}ultimodal  \textbf{L}earner}}
\title{Better Together: Leveraging Unpaired Multimodal Data for Stronger Unimodal Models}
\author{
  Sharut Gupta$^\dagger$,
  Shobhita Sundaram$^\dagger$,
  Chenyu Wang$^\dagger$,
  Stefanie Jegelka$^{\dagger \; \ddagger}$,
  Phillip Isola$^\dagger$ \\
  $^\dagger$MIT CSAIL,  $^\ddagger$TU Munich\\
  \texttt{\{sharut, shobhita, wangchy, stefje, phillipi\}@mit.edu} \\
}
\begin{document}

\maketitle

\begin{figure}[!htb]
    \centering
    \includegraphics[width=.95\linewidth]{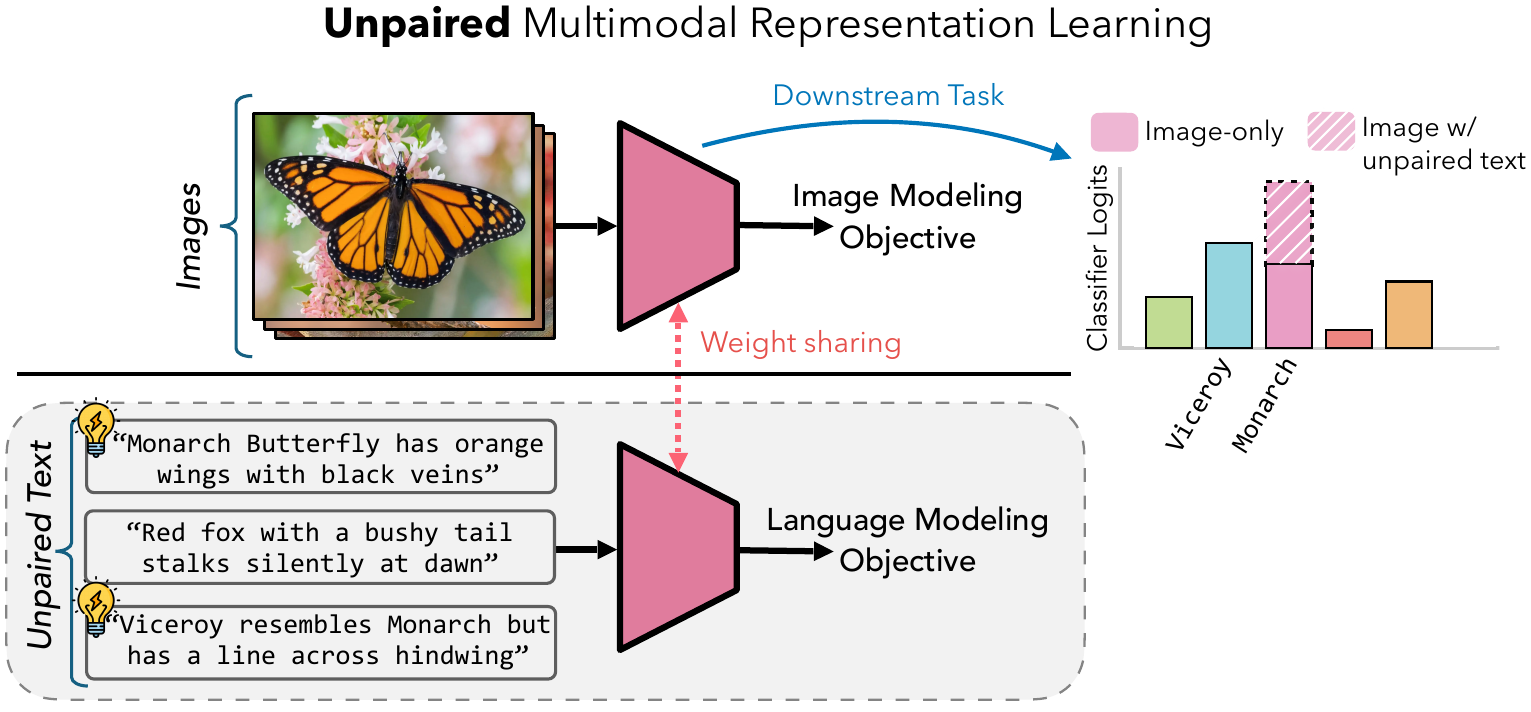}
    \caption{Text provides complementary information beyond images, even when not paired directly; We introduce \algofullname\ (\algoname) whereby sharing model weights across modalities (e.g., image and text) extracts synergies and enhances unimodal representations, outperforming methods that rely only on a single modality (such as images above).\looseness=-1}
    \label{fig:teaser}
\end{figure}

\begin{abstract}
Traditional multimodal learners find unified representations for tasks like visual question answering, but rely heavily on paired datasets. However, an overlooked yet potentially powerful question is: can one leverage auxiliary \emph{unpaired} multimodal data to directly enhance representation learning in a \emph{target} modality? 
We introduce \algoname: \algofullname, a modality-agnostic training paradigm in which a single model alternately processes inputs from different modalities while sharing parameters across them. This design exploits the assumption that different modalities are projections of a shared underlying reality, allowing the model to benefit from cross-modal structure without requiring explicit pairs.
Theoretically, under linear data-generating assumptions, we show that unpaired auxiliary data can yield representations strictly more informative about the data-generating process than unimodal training. Empirically, we show that using unpaired data from auxiliary modalities—such as text, audio, or images—consistently improves downstream performance across diverse unimodal targets such as image and audio. 
Our project page: \url{https://unpaired-multimodal.github.io/} \looseness=-1

\end{abstract}

\section{Introduction}
It is often taken for granted that to model a modality well, one must train on data from that modality. For instance, if one wants an accurate image classifier, one trains on images; if one wants a language model, one trains on text. Recent advances in multimodal learning, however, suggest that multiple modalities can benefit one another: in particular, using text captions paired with images yields richer representations, often surpassing their unimodal counterparts in zero-shot transfer, cross-modal retrieval, and downstream classification~\citep{radford2021learning, singh2022flava,mizrahi20234m, girdhar2023imagebind, bachmann2022multimae, li2023scaling, bachmann20244m, jia2021scaling}. However, these gains have been realized largely through paired data, where one has access to aligned examples $(x,y) \sim p_{\mathcal{X},\mathcal{Y}}$, such as an image $x$ and its corresponding caption $y$. Such paired supervision allows aligning modalities into a shared latent space, where cross-modal correlations can be captured and transferred to downstream tasks.

This reliance on paired corpora also poses a bottleneck. Collecting and curating aligned datasets is expensive and domain-limited, whereas unpaired data—independent samples from $p_{\mathcal{X}}$ and $p_{\mathcal{Y}}$—are naturally abundant. For example, vast image collections and vast text corpora exist independently, but without explicit alignment. This raises a more fundamental question: 

\begin{center}
\begin{tcolorbox}[colback=gray!5!white,
                  colframe=black!50,
                  boxrule=0.9pt,
                  arc=2pt,
                  width=.9\linewidth]
\emph{Can unpaired data from a secondary modality $Y$ enhance representations of the target modality $X$, even in the absence of $(x,y)$ correspondences?}
\end{tcolorbox}
\end{center}

There is growing evidence that the answer may be yes. 
Recent work posits the existence of a shared statistical model of reality, an empirical parallel to Plato’s concept of ideal Forms, suggesting that as deep networks scale, their embeddings across modalities converge toward a unified representation of the underlying world~\citep{huh2024platonic, huang2021makes}. 
This convergence implies that when paired supervision is available, a model can leverage natural co-occurrences between modalities and thus more accurately capture shared semantics. 
Critically, however, achieving convergence does not necessarily require explicit pairs; as long as each modality samples from the same underlying ground-truth latent space, it may be sufficient to align their marginal distributions to uncover the common semantic structure~\citep{timilsina2024identifiable, sturma2023unpaired}. Intuitively, even unpaired data from another modality can provide complementary cues to better estimate the underlying reality (\Cref{fig:teaser}).\looseness=-1

In this work, we formalize this idea through \emph{Unpaired Multimodal Representation Learning}, a framework for improving unimodal representations by leveraging unpaired data across modalities. Theoretically, under linear assumptions, we derive conditions under which incorporating unpaired samples yields strictly more informative representations than unimodal training alone. Notably, in some regimes, a single sample from $Y$ yields a greater per-sample value than a sample from $X$ itself when the goal is to model $X$. Building on these insights, we introduce {\algoname}: \algofullname, a shared-network framework that applies the same set of parameters to inputs from different modalities. By nothing more than weight sharing, i.e., without surrogate objectives or inferred alignments, the model learns modality-agnostic features and naturally transfers information across modalities~\citep{ilyatalk}.
Empirically, we evaluate \algoname on diverse image–text tasks in healthcare, and affective computing, as well as 10 standard visual benchmarks. Unpaired data from auxiliary modalities consistently improves unimodal representations across self-supervised and supervised regimes, in both few-shot and full-data regimes, and with diverse encoders including CLIP, DINOv2, OpenLLaMA, and others. These benefits compound when moving from two to three modalities, with audio, vision, and text each adding complementary signals. We further demonstrate effective cross-modal transfer without paired data by initializing vision models with pretrained language-model weights. Finally, we quantify the \emph{exchange rate} between modalities, mapping how many words equate to one image (and vice versa) for optimal performance. \looseness=-1

To summarize, the key contributions of our work are:
\vspace{-2mm}
\begin{itemize}
    \item We introduce \algoname, a modality-agnostic framework that leverages unpaired data to improve unimodal models. We test it across self-supervised and supervised encoders (CLIP, DINOv2, OpenLLaMA, and others), in both few-shot and full-data regimes, showing consistent gains over a range of image-text benchmarks and extensions to audio.\looseness=-1
    \item We theoretically characterize, under linear assumptions, the conditions where unpaired data yield strictly more informative representations than unimodal training. Remarkably, the conversion ratio between modalities can fall below one: in certain regimes, a single sample from $Y$ contributes more to modeling $X$ than an additional sample from $X$ itself.
    \item We quantify conversion ratios between images and text; i.e., how many words is an image worth for training vision models and show that unpaired multimodal data systematically widens inter-class margins and aligns modalities in weights. We further demonstrate that individual neurons develop strong cross-modal coupling, revealing \emph{multimodal neurons} that respond coherently across vision and text, all without any paired supervision.
\end{itemize}

\section{Paired Multimodal Representation Learning}\label{sec:paired}
A useful way to conceptualize learning across modalities is to posit a shared ground-truth reality, denoted $\mathcal{Z}^*$, which manifests through multiple projections such as images, text, or audio recordings~\citep{huh2024platonic,timilsina2024identifiable,sturma2023unpaired}. The goal of representation learning is to learn an embedding space that captures the structure of $\mathcal{Z}^*$, whether from a single modality or from several jointly. Thus, unimodal representations are inherently limited by what one projection alone can reveal: a single camera view may contain occlusions; an audio recording lacks visual details; textual descriptions may lack layout information.

\par These limitations motivate multimodal representation learning, which extends the unimodal setting to learn from multiple modalities together. By integrating heterogeneous projections, multimodal learning leverages their interplay to recover a more complete view of the  shared latent reality. 
For brevity, consider two modalities with observable data denoted by $\mathcal{X}$ (e.g., images) and $\mathcal{Y}$ (e.g., text) and samples $x \in \mathcal{X}$ and $y \in \mathcal{Y}$. Mathematically, \textit{Multimodal representation learning} seeks encoders $f_{X}: \mathcal{X} \to \mathcal{Z}$ and $f_{Y}: \mathcal{Y} \to \mathcal{Z}$ mapping each modality into a shared embedding space $\mathcal{Z}$. Different frameworks optimize for desirable properties of $\mathcal{Z}$. Contrastive methods encourage instances of the same concept (such as an image and its text label) to be close together \citep{radford2021learning, zhai2023sigmoid, jia2021scaling}. Fusion approaches learn shared layers for multimodal inputs with reconstruction ~\citep{singh2022flava, li2019visualbert,lu2019vilbert,bachmann2022multimae} or contrastive objectives \citep{roy2025shared}. Generative methods often structure $\mathcal{Z}$ to enable translation between modalities\citep{li2022blip, rombach2022high}. Other methods disentangle common factors from modality-specific factors \citep{wang2024information, liang2023factorized}.\looseness=-1 

\begin{figure}[!htb]
    \centering
    \includegraphics[width=1.\linewidth]{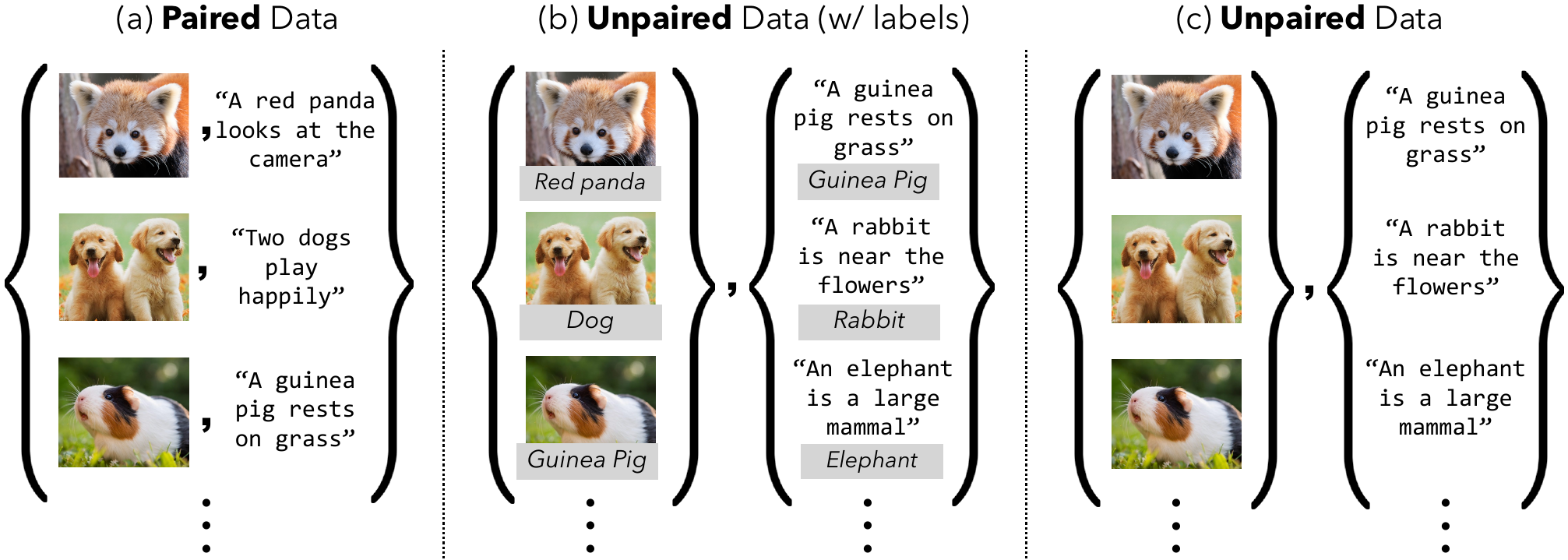}
    \caption{(a) Paired learning uses ${(x_i, y_i)}$ with known correspondences. We instead study Unpaired learning: (b) with labels, using ${(x_i, c_i)}$ and ${(y_j, \hat{c}_j)}$, where $c_i$ and $\hat{c}_j$ denote labels for $x_i$ and $y_j$, but no cross-modal correspondences; and (c) without any labels or correspondences, using $\{x_i\}$ and $\{y_j\}$.\looseness=-1}
    \label{fig:paired_unpaired}
\end{figure}

\par Common to all of these prior work, however, is a requirement for pre-existing knowledge of one-to-one correspondences between the modalities, i.e., samples $(x,y) \sim P_{X,Y}$ (as shown in~\Cref{fig:paired_unpaired}(a)), which are assumed to be aligned views of the same underlying entity in $\mathcal{Z}^*$. We ask whether it is possible to recover this shared structure \emph{without} such correspondences---namely, when only unpaired samples from the marginals $P_X$ and $P_Y$ are available, as shown in~\Cref{fig:paired_unpaired}(c)---and, if so, how this affects downstream performance individually on the primary modality $X$. 
\section{Unpaired Multimodal Representation Learning}\label{sec:prelim}

Let $\mathcal{D}_{\mathcal{X}} = \{x_i\}^{N_x}_{i=1}$ and $\mathcal{D}_{\mathcal{Y}} = \{y_j\}^{N_y}_{j=1}$ be datasets of $N_x$ and $N_y$ samples respectively, drawn from marginal distributions $P_X$ and $P_Y$ as shown in~\Cref{fig:paired_unpaired}(c). The joint distribution $P_{X,Y}$ is unknown. Critically, in contrast to paired multimodal learning, for any given $x_i \in \mathcal{D}_{\mathcal{X}}$ and $y_j \in \mathcal{D}_{\mathcal{Y}}$, there is no assumption that $x_i$ and $y_i$ are projections of the same underlying entity $z \in \mathcal{Z}$.
This is in contrast with paired multimodal learning, which assumes access to labeled one-to-one correspondences between $\mathcal{D}_{\mathcal{X}}$ and $\mathcal{D}_{\mathcal{Y}}$. 

Our objective is to learn mappings $f_X: \mathcal{X} \rightarrow \mathcal{Z}$ and $f_Y: \mathcal{Y} \rightarrow \mathcal{Z}$ from each modality to a common embedding space that captures the shared structures between the unpaired datasets. We refer to this as \emph{Unpaired Multimodal Representation Learning}. In contrast to previous works~\citep{lin2023multimodality,lee2025can,girdhar2022omnivore,chada2023momo} (discussed in~\Cref{sec:related}), we leverage unpaired data \emph{without} inferring alignment, incorporating paired data, or assuming pre-aligned embeddings. Intuitively, without known 
$\mathcal{D}_{\mathcal{X}}$-to- $\mathcal{D}_{\mathcal{Y}}$ 
correspondences, even unpaired data can be useful if modalities capture different axes of information pertaining to the underlying $\mathcal{Z}^*$. We formalize this intuition in~\Cref{sec:theory}, within the popularly studied framework of linear models, and introduce a modality-agnostic algorithm \algoname in~\Cref{sec:alg-desc} for learning representations from unpaired data.\looseness=-1

\subsection{Theoretical Perspectives}\label{sec:theory}

We adopt the widely used modeling choice that each modality can be expressed as a linear combination of shared and modality-specific components~\citep{sturma2023unpaired,timilsina2024identifiable,huang2021makes}. This formulation inherently accounts for informational differences across modalities by incorporating modality-specific features~\citep{wang2024information}. Prior works~\citep{sturma2023unpaired,timilsina2024identifiable} have developed sufficient conditions for recovering joint distributions and the shared causal structure under this linear model. Building on this framework, we ask how joint training with unpaired modalities can still enhance representations within a single modality.\looseness=-1

\noindent \textbf{Data Generating Process.} Assume that all factors of variation in reality live in a single $d$-dimensional space, $\mathcal{Z}^*$, i.e., $\theta \in \mathbb{R}^d$, modeled using a linear data-generating pipeline. This parameter can further be decomposed as $\theta \equiv [\theta_c, \theta_x, \theta_y]^\top$ where $ \theta_c \in \mathbb{R}^{d_c}, \theta_x \in \mathbb{R}^{d_x}, \theta_y \in \mathbb{R}^{d_y}$ and $d_c + d_x + d_y = d$. Here, $\theta_c$ captures the \emph{common} (shared) parameters that affect both modalities, $\theta_x$ denotes the parameters that only affect modality $X$, and $\theta_y$ denotes the parameters that only affect modality $Y$. We observe two datasets, one from each modality $\{X_i\}_{i=1}^{N_x} \in \mathbb{R}^m$ and $\{Y_j\}_{j=1}^{N_y} \in \mathbb{R}^n$, each reflecting partial measurements of the ground truth latent space $\mathcal{Z}^*$:
\begin{align}
    X_i 
  \;&=\;
  A_{c,i}\,\theta_c \;+\; A_{x,i}\,\theta_x \;+\; \epsilon_{X,i},
  \quad
  \epsilon_{X,i} \sim \mathcal{N}\bigl(0,\,\sigma_x^2 I_{m}\bigr) \\
   Y_j 
  \;&=\;
  B_{c,j}\,\theta_c \;+\; B_{y,j}\,\theta_y \;+\; \epsilon_{Y,j},
  \quad
  \epsilon_{Y,j} \sim \mathcal{N}\bigl(0,\,\sigma_y^2 I_{n}\bigr).
\end{align}
Here, $A_{c,i},A_{x,i},B_{c,j},B_{y,j}$ are known design blocks capturing how each sample probes the latent factors and $\varepsilon_{X,i}$, $\varepsilon_{Y,j}$ represent the independent measurement noise. 

In our linear setting, estimating the true latent state $\theta$---and hence the underlying reality $\mathcal{Z}^*$---is governed by the Fisher information matrix $I(\theta)\;=\;-\mathbb{E}\bigl[\nabla^2_{\theta}\,\ell(\theta)\bigr]$~\citep{fisher1922mathematical,fisher1925theory}, which measures how sharply the likelihood “curves” around the true $\theta$. 
High curvature along a particular axis means the data tightly constrain that component, driving down estimator variance there.
Because \(X\) and \(Y\) samples are independent given $\mathcal{Z}^*$, their curvature contributions add pointwise, resulting in the joint Fisher information being simply the sum of the unimodal blocks. Thus, loosely speaking, any nonzero contribution from the unpaired \(Y\)-samples strictly increases curvature and thus strictly tightens the variance bound along those directions in \(\theta_c\).  We formalize these insights in~\Cref{thm:variance_reduction} and~\Cref{thm:directional_variance_reduction}. Our results also generalize naturally to more than two modalities since the total contribution of all auxiliary modalities (excluding the primary modality $X$) can be obtained by summing their individual Fisher information matrices. Complete proofs and additional background are provided in~\Cref{sec: thms_proofs}.

\begin{restatable}[Variance Reduction with Unpaired Multimodal Data]{theorem}{variancemain}
\label{thm:variance_reduction}
Let \(\hat{\theta}_X, \hat{\theta}_Y\) be the least-squares estimators for \(\theta\) 
using only $\{X_i\}$ and only $\{Y_j\}$ and let \(\hat{\theta}_{X,Y}\) be the joint estimator using both unpaired datasets. Then, under the assumption that at least one $B_{c,j}$, where $j \in \{1,2,... N_y \}$, has full rank, the common-factor covariance satisfies the strict Loewner ordering i.e. $\mathrm{Var}\bigl(\hat{\theta}_{X,Y}\bigr)_{\theta_c,\theta_c} \prec \mathrm{Var}\bigl(\hat{\theta}_X\bigr)_{\theta_c,\theta_c},\quad$
or equivalently, the Fisher information on $\theta_c$ strictly increases when combining both modalities, despite not having sample-wise pairing: $
  (I_X + I_Y)_{\theta_c,\theta_c} 
  \succ
  (I_X)_{\theta_c,\theta_c}.
$
\end{restatable}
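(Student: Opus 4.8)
The plan is to pass from the estimators to their Fisher information matrices, exploit the block sparsity forced by the data-generating process, reduce the claim to the $\theta_c$ sub-block via a Schur complement, and close with the antitonicity of matrix inversion. \emph{Step 1 (information matrices).} Stacking the $X$-samples, the Gaussian log-likelihood of $\{X_i\}_{i=1}^{N_x}$ depends on $\theta$ only through $(\theta_c,\theta_x)$, so its Fisher information $I_X$ is supported on those coordinates --- equal there to $\sigma_x^{-2}\sum_i[A_{c,i}\ A_{x,i}]^{\top}[A_{c,i}\ A_{x,i}]$ and zero in every entry touching $\theta_y$; symmetrically $I_Y$ is supported on $(\theta_c,\theta_y)$, equal there to $\sigma_y^{-2}\sum_j[B_{c,j}\ B_{y,j}]^{\top}[B_{c,j}\ B_{y,j}]$ and zero in every entry touching $\theta_x$. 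Since the two datasets are independent given $\mathcal{Z}^*$, the joint log-likelihood is the sum, so $I_{X,Y}=I_X+I_Y$. In this homoscedastic Gaussian linear model least squares is the MLE and attains the Cram\'{e}r--Rao bound, hence on the coordinates it identifies $\mathrm{Var}(\hat\theta_X)=I_X^{-1}$ and $\mathrm{Var}(\hat\theta_{X,Y})=(I_X+I_Y)^{-1}$, and the theorem becomes a statement about these matrices.

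\emph{Step 2 (reduce to the common block).} For a positive-definite block matrix, the $\theta_c$-block of its inverse equals the inverse of the Schur complement of the complementary (nuisance) block --- the \emph{effective information on $\theta_c$}. For $I_X$ the nuisance block is just $\theta_x$, giving $S_X:=(I_X)_{\theta_c\theta_c}-(I_X)_{\theta_c\theta_x}(I_X)_{\theta_x\theta_x}^{-1}(I_X)_{\theta_x\theta_c}$ and $\mathrm{Var}(\hat\theta_X)_{\theta_c,\theta_c}=S_X^{-1}$ (positive definite by the standing identifiability of $(\theta_c,\theta_x)$ from $X$). For $I_{X,Y}$ the nuisance block is $(\theta_x,\theta_y)$, but it is block-diagonal ($X$ never sees $\theta_y$, $Y$ never sees $\theta_x$), the $\theta_c$--$\theta_x$ cross-term comes only from $I_X$, and the $\theta_c$--$\theta_y$ cross-term only from $I_Y$; the Schur complement therefore separates additively, $S_{X,Y}=S_X+S_Y$ with $S_Y:=(I_Y)_{\theta_c\theta_c}-(I_Y)_{\theta_c\theta_y}(I_Y)_{\theta_y\theta_y}^{-1}(I_Y)_{\theta_y\theta_c}$ (a pseudoinverse if $\theta_y$ is not separately identified from $Y$). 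Thus $\mathrm{Var}(\hat\theta_{X,Y})_{\theta_c,\theta_c}=(S_X+S_Y)^{-1}$.

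\emph{Step 3 (positivity and conclusion).} It remains to show $S_Y\succ0$. Using the variational form of the Schur complement, $v^{\top}S_Y v=\sigma_y^{-2}\min_{w}\sum_j\|B_{c,j}v-B_{y,j}w\|^2$ for $v\in\mathbb{R}^{d_c}$; dropping all terms but $j=j_0$ and minimizing gives $S_Y\succeq\sigma_y^{-2}B_{c,j_0}^{\top}(I-P_{j_0})B_{c,j_0}$, where $P_{j_0}$ projects onto $\mathrm{col}(B_{y,j_0})$, and the hypothesis that $B_{c,j_0}$ has full column rank --- together with the standing non-confounding of the common and $Y$-specific designs --- makes this last matrix, hence $S_Y$, strictly positive definite. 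Then $S_{X,Y}=S_X+S_Y\succ S_X$, and since $M\mapsto M^{-1}$ is operator-antitone on the positive-definite cone, $\mathrm{Var}(\hat\theta_{X,Y})_{\theta_c,\theta_c}=(S_X+S_Y)^{-1}\prec S_X^{-1}=\mathrm{Var}(\hat\theta_X)_{\theta_c,\theta_c}$, the claimed strict Loewner ordering. The equivalent information statement is either a restatement of $S_{X,Y}\succ S_X$ or, at the level of the raw blocks, follows from $(I_X+I_Y)_{\theta_c,\theta_c}=(I_X)_{\theta_c,\theta_c}+(I_Y)_{\theta_c,\theta_c}\succeq(I_X)_{\theta_c,\theta_c}+\sigma_y^{-2}B_{c,j_0}^{\top}B_{c,j_0}\succ(I_X)_{\theta_c,\theta_c}$. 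With more than two modalities one adds $I_{Y^{(1)}}+I_{Y^{(2)}}+\cdots$ and repeats Steps 2--3.

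The easy ingredients are the additivity $I_{X,Y}=I_X+I_Y$, the antitonicity of inversion, and --- once the block-diagonality of the nuisance part is noticed --- the Schur-complement bookkeeping of Step 2. The main obstacle is the positivity in Step 3: upgrading ``one $B_{c,j_0}$ has full rank'' to strict positive-definiteness of the \emph{profiled} information $S_Y$, since naively profiling out $\theta_y$ could annihilate all $\theta_c$-information if the $Y$-specific directions were aligned with the common ones. Pinning this down cleanly requires invoking the design-identifiability conditions assumed in the cited linear-model analyses, so that $\mathrm{col}(B_c)\cap\mathrm{col}(B_y)=\{0\}$ and the stacked common design has full column rank; given that, the remaining inequalities are purely mechanical.
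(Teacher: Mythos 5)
Your proof takes a genuinely different route from the paper's, and the comparison is instructive. The paper interprets $\mathrm{Var}(\hat\theta_X)_{\theta_c,\theta_c}$ simply as $\bigl((I_X)_{\theta_c,\theta_c}\bigr)^{-1}$ --- i.e.\ it inverts the $\theta_c$-sub-block of the Fisher information directly, which is the variance one would obtain if the nuisance parameters $(\theta_x,\theta_y)$ were known. Under this reading, the proof is a one-liner: $(I_Y)_{\theta_c,\theta_c}=\sum_j B_{c,j}^\top B_{c,j}\succ 0$ from the full-rank hypothesis, so the $\theta_c$-block strictly increases, and Lemma~\ref{lem:order-reversal} inverts the Loewner order. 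The ``or equivalently'' in the statement is then literally true, and only the stated hypothesis is used.

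You instead compute the \emph{marginal} variance of $\hat\theta_c$ --- the $\theta_c$-block of the full inverse $I^{-1}$ --- via the Schur complement. This is the textbook notion of the asymptotic variance of $\hat\theta_c$ when the nuisance parameters must also be estimated, and arguably the more faithful reading of ``common-factor covariance.'' Your observation that the nuisance block of $I_{X,Y}$ is block-diagonal (so that $S_{X,Y}=S_X+S_Y$) is correct and nicely exploits the structure of the model. But, as you candidly note, this comes at a price: the hypothesis that one $B_{c,j_0}$ has full column rank only yields $S_Y\succeq \sigma_y^{-2}B_{c,j_0}^\top(I-P_{j_0})B_{c,j_0}$, and to conclude $S_Y\succ 0$ you additionally need $\mathrm{col}(B_{c,j_0})\cap\mathrm{col}(B_{y,j_0})=\{0\}$ (or an equivalent non-confounding condition), which is not part of the theorem's stated hypothesis. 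So your argument, if left as written, does not close: it establishes a stronger and more interesting conclusion (reduction of the true marginal variance), but under a strictly stronger assumption than the paper invokes. The paper sidesteps this entirely by not profiling out the nuisance block. If you want to pursue your version, you should either add the non-confounding condition explicitly or note that the first claim of the theorem, under the standard interpretation of marginal covariance, is not implied by the stated hypothesis alone --- the paper's ``or equivalently'' only holds under its own block-inverse convention.
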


\Cref{thm:variance_reduction} says that although there is no sample-wise pairing, simply adding an unpaired modality \(Y\), which carries non-degenerate information about every direction in the shared subspace, can only tighten our uncertainty about the shared parameters \(\theta_c\).  Geometrically, the uncertainty ellipsoid for \(\theta_c\) shrinks in every direction where \(Y\) contributes any curvature, and never expands elsewhere. In practice, however, no single data sample covers every latent axis. In these settings, while global shrinkage (\Cref{thm:variance_reduction}) no longer applies, we still get \emph{directional} gains. We capture this in~\Cref{thm:directional_variance_reduction}.\\

\begin{restatable}[Directional Variance Reduction with Unpaired Multimodal Data]{theorem}{variancedirectional}
\label{thm:directional_variance_reduction}
Let all notation be as in~\Cref{thm:variance_reduction}, and let \(v\in\mathbb{R}^{d_c}\setminus\{0\}\). If there exists at least one index $j \in \{1,2,... N_y \}$ such that $B_{c,j}v \neq 0$, then $v^\top\,(I_X + I_Y)_{\theta_c,\theta_c}\,v
>
v^\top\,(I_X)_{\theta_c,\theta_c}\,v.
$
Further, the variance of the estimator in direction $v$ is strictly reduced if $v \notin \text{range}((I_X)_{\theta_c,\theta_c})$.\looseness=-1
\end{restatable}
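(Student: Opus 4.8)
Both claims reduce to the block structure of the Fisher information, which is explicit because the two observation models are linear--Gaussian with known designs, so the negative log-likelihoods are quadratic in $\theta=[\theta_c;\theta_x;\theta_y]$. As recorded before \Cref{thm:variance_reduction}, $(I_X)_{\theta_c,\theta_c}=\sigma_x^{-2}\sum_i A_{c,i}^\top A_{c,i}$, $(I_X)_{\theta_c,\theta_x}=\sigma_x^{-2}\sum_i A_{c,i}^\top A_{x,i}$, the $\theta_y$-rows and columns of $I_X$ vanish, and symmetrically $(I_Y)_{\theta_c,\theta_c}=\sigma_y^{-2}\sum_j B_{c,j}^\top B_{c,j}$ with the $\theta_x$-block of $I_Y$ vanishing; by conditional independence of the samples, $I_{X,Y}=I_X+I_Y$. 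The inequality on the raw $(\theta_c,\theta_c)$ blocks then needs nothing more than a subtraction: for $v\in\mathbb{R}^{d_c}\setminus\{0\}$,
\[
v^\top(I_X+I_Y)_{\theta_c,\theta_c}v-v^\top(I_X)_{\theta_c,\theta_c}v=v^\top(I_Y)_{\theta_c,\theta_c}v=\frac{1}{\sigma_y^2}\sum_{j=1}^{N_y}\|B_{c,j}v\|^2\ge 0,
\]
which is strictly positive exactly when some $B_{c,j}v\neq0$, the stated hypothesis. No profiling of nuisance parameters is needed for this part.

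For the variance statement one must convert the information block into the marginal covariance of $\hat\theta$ restricted to $\theta_c$, which --- since $\theta_x$ is a nuisance for $\hat\theta_X$ and $\theta_x,\theta_y$ are nuisances for $\hat\theta_{X,Y}$ --- is the (generalized) inverse of a Schur complement: $\mathrm{Var}(\hat\theta_X)_{\theta_c,\theta_c}=S_X^{\dagger}$ with $S_X=(I_X)_{\theta_c,\theta_c}-(I_X)_{\theta_c,\theta_x}(I_X)_{\theta_x,\theta_x}^{\dagger}(I_X)_{\theta_x,\theta_c}$, and similarly $\mathrm{Var}(\hat\theta_{X,Y})_{\theta_c,\theta_c}=S_{X,Y}^{\dagger}$ for the Schur complement of the $\theta_c$-block in $I_X+I_Y$. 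The elementary fact I would use is that for a PSD block matrix $\left(\begin{smallmatrix}P&Q\\Q^\top&R\end{smallmatrix}\right)$ one has $\ker P\subseteq\ker Q^\top$, hence $\mathrm{range}(S)\subseteq\mathrm{range}(P)$ for $S=P-QR^{\dagger}Q^\top$. Taking $P=(I_X)_{\theta_c,\theta_c}$ gives $\mathrm{range}(S_X)\subseteq\mathrm{range}((I_X)_{\theta_c,\theta_c})$, so $v\notin\mathrm{range}((I_X)_{\theta_c,\theta_c})$ forces $v\notin\mathrm{range}(S_X)$; equivalently, $v^\top\theta_c$ is not estimable from $X$ alone and $v^\top\mathrm{Var}(\hat\theta_X)_{\theta_c,\theta_c}v=+\infty$ (its Cram\'er--Rao bound is infinite). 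It then suffices to show that adding the $Y$-data makes $v^\top\theta_c$ estimable, i.e.\ $v\in\mathrm{range}(S_{X,Y})$, whence $v^\top\mathrm{Var}(\hat\theta_{X,Y})_{\theta_c,\theta_c}v<\infty$ --- a strict reduction. Under the full-rank hypothesis on some $B_{c,j}$ (as in \Cref{thm:variance_reduction}) together with identifiability of the combined model, $(I_Y)_{\theta_c,\theta_c}$ is positive definite along $v$ and its contribution is not absorbed by $\theta_y$, placing $v$ in $\mathrm{range}(S_{X,Y})$.

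I expect the main obstacle to be precisely this last step: bridging from ``the raw block $(I_Y)_{\theta_c,\theta_c}$ carries positive information along $v$'' to ``the \emph{profiled} information $S_{X,Y}$ carries information along $v$'', i.e.\ ruling out that the $Y$-data's information about $v^\top\theta_c$ is entirely confounded with the latent nuisance $\theta_y$ --- this is exactly where the rank hypothesis does its work, and it also pins down the precise sense ($+\infty$ versus finite) in which ``variance strictly reduced'' is to be read. Everything else --- computing $I_X,I_Y$ from the quadratic log-likelihoods, additivity of Fisher information, the block-inversion formula for the marginal covariance, the PSD-block kernel lemma, and (if one also wants finite-to-finite comparisons) antitonicity of the matrix inverse under the Loewner order --- is standard bookkeeping.
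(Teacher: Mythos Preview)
Your argument for the Fisher-information inequality is identical to the paper's: subtract, recognize $\sigma_y^{-2}\sum_j\|B_{c,j}v\|^2$, done.

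For the variance claim you take a genuinely different route. You profile out the nuisance blocks via Schur complements, writing $\mathrm{Var}(\hat\theta_X)_{\theta_c,\theta_c}=S_X^\dagger$ and $\mathrm{Var}(\hat\theta_{X,Y})_{\theta_c,\theta_c}=S_{X,Y}^\dagger$, and then argue about ranges of $S_X$ and $S_{X,Y}$. The paper does \emph{not} profile: it simply identifies the directional variance with the pseudoinverse of the raw block $M_X:=(I_X)_{\theta_c,\theta_c}$ (declaring the variance $+\infty$ along unidentifiable directions). Under that convention the proof is a two-liner: $v\notin\mathrm{range}(M_X)$ gives infinite $X$-only variance, while $v^\top M_{XY}v>0$ from Part~1 forces $v\in\mathrm{range}(M_{XY})$, so $v^\top M_{XY}^\dagger v<\infty$. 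No Schur complements, no nuisance-parameter bookkeeping.

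What each approach buys: the paper's block-level argument is short and needs only the hypothesis $B_{c,j}v\neq0$, but it is proving a statement about the block pseudoinverse rather than the true marginal covariance of the MLE (the paper acknowledges this in a footnote). Your profiled approach is the statistically honest one, and the obstacle you flag is real --- with only $B_{c,j}v\neq0$ one can have $v\notin\mathrm{range}(S_{X,Y})$ (e.g.\ $Y_j=\theta_c+\theta_y+\epsilon$ in one dimension confounds $\theta_c$ with $\theta_y$ completely). To close that gap you import the full-rank hypothesis on some $B_{c,j}$ from \Cref{thm:variance_reduction}, but that hypothesis is \emph{not} part of \Cref{thm:directional_variance_reduction}'s statement. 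So your proof is correct for a stronger claim under a stronger assumption; the paper's proof is correct for its weaker block-level reading under the stated assumption alone.
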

\noindent Building on~\Cref{thm:directional_variance_reduction}’s insights that any nonzero curvature in a given direction yields strict variance reduction,~\Cref{cor:variance_contraction} and~\Cref{cor:rescue_unidentifiable} in the Appendix (i) quantify the precise contraction factor when both modalities inform the same direction, and (ii) demonstrate how an otherwise ill‐posed direction becomes well‐posed once the second modality contributes.\\

Thus far, we have studied incorporating data from an auxiliary modality without considering sample size constraints. A natural next question is how to allocate a fixed budget of $N$ additional samples: is it better to collect them from the same modality \(X\), or from complementary modality \(Y\)? We formally address this in~\Cref{thm:complementary_gain} below.\\ 

\begin{restatable}[Data from Auxiliary Modality Can Outperform More of the Same]{theorem}{complementarygain}
\label{thm:complementary_gain}
Define for any $m$, $I_X^{(m)} =\sum_{i=1}^{m}A_{c,i}^\top A_{c,i}$  and $I_{Y}^{(m)} = \sum_{j=1}^{m}B_{c,j}^\top B_{c,j}$. If $\mathrm{range}\bigl(I_Y^{(m)}\bigr)
\;\not\subseteq\;
\mathrm{range}\bigl(I_X^{(m)}\bigr)$, then there exists a nonzero \(v\in\mathbb{R}^{d_c}\) such that $v^\top I_Y^{(m)}\,v > v^\top I_{X}^{(m)} v$.
\end{restatable}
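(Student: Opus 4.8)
The plan is to reduce the claim to an elementary fact about symmetric positive semidefinite matrices, namely that for such a matrix $M$ one has $\mathrm{range}(M) = \ker(M)^\perp$, so that the failure of range containment in the hypothesis is the same as a failure of kernel containment, and the latter directly hands us the desired vector $v$.

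First I would record the basic structure: both $I_X^{(m)} = \sum_{i=1}^{m} A_{c,i}^\top A_{c,i}$ and $I_Y^{(m)} = \sum_{j=1}^{m} B_{c,j}^\top B_{c,j}$ are symmetric and positive semidefinite, with $v^\top I_X^{(m)} v = \sum_{i=1}^m \|A_{c,i} v\|^2$ and $v^\top I_Y^{(m)} v = \sum_{j=1}^m \|B_{c,j} v\|^2$. In particular $v^\top I_X^{(m)} v = 0$ iff $v \in \ker(I_X^{(m)})$, and likewise for $Y$; this sum-of-squares representation is what will give strictness later. Next I would take orthogonal complements in the hypothesis: since $\mathrm{range}(M) = \ker(M)^\perp$ for symmetric $M$, the condition $\mathrm{range}(I_Y^{(m)}) \not\subseteq \mathrm{range}(I_X^{(m)})$ is equivalent to $\ker(I_X^{(m)}) \not\subseteq \ker(I_Y^{(m)})$. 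Hence I can pick $v \in \ker(I_X^{(m)}) \setminus \ker(I_Y^{(m)})$, which is automatically nonzero since the zero vector lies in every kernel.

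Finally I would evaluate the two quadratic forms at this $v$: $v^\top I_X^{(m)} v = 0$ since $v \in \ker(I_X^{(m)})$, while $v^\top I_Y^{(m)} v > 0$ since $v \notin \ker(I_Y^{(m)})$ and $I_Y^{(m)}$ is positive semidefinite, so its quadratic form vanishes only on the kernel. This yields $v^\top I_Y^{(m)} v > 0 = v^\top I_X^{(m)} v$, which is the assertion. The main obstacle is essentially nil on the analytic side; the one place to be careful is the equivalence ``range containment fails $\iff$ kernel containment fails,'' which uses the symmetry of $I_X^{(m)}$ and $I_Y^{(m)}$ so that range and kernel are genuine orthogonal complements, and requires taking complements in the correct order ($\mathrm{range}(B) \subseteq \mathrm{range}(A) \iff \ker(A) \subseteq \ker(B)$ for symmetric $A,B$). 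A secondary subtlety worth spelling out explicitly is that the inequality is strict, which comes from the sum-of-squares form of $v^\top I_Y^{(m)} v$ rather than merely from positive semidefiniteness; if a quantitative margin were desired one could add that the gap is at least the smallest nonzero eigenvalue of the compression of $I_Y^{(m)}$ to $\ker(I_X^{(m)})$, though this is not needed for the stated result.
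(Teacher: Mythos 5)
Your proof is correct, and it lands on the same underlying mechanism as the paper's: produce a nonzero $v \in \ker\bigl(I_X^{(m)}\bigr) \setminus \ker\bigl(I_Y^{(m)}\bigr)$, so that $v^\top I_X^{(m)} v = 0$ while $v^\top I_Y^{(m)} v > 0$ by positive semidefiniteness. The difference is purely in how that $v$ is obtained. You take the orthogonal complement of the hypothesis directly: using $\mathrm{range}(M) = \ker(M)^\perp$ for symmetric $M$, the failure of $\mathrm{range}(I_Y^{(m)}) \subseteq \mathrm{range}(I_X^{(m)})$ is equivalent to the failure of $\ker(I_X^{(m)}) \subseteq \ker(I_Y^{(m)})$, which immediately exhibits the desired $v$. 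The paper's argument is slightly more constructive: it picks a concrete witness $w \in \mathrm{range}(I_Y^{(m)}) \setminus \mathrm{range}(I_X^{(m)})$, decomposes $w = w_{\parallel} + v$ with $w_{\parallel} \in \mathrm{range}(I_X^{(m)})$ and $v \in \mathrm{range}(I_X^{(m)})^\perp = \ker(I_X^{(m)})$, and then shows $v \notin \ker(I_Y^{(m)})$ by contradiction (if $I_Y^{(m)} v = 0$ then $v \perp w$, but $w \cdot v = \|v\|^2 > 0$). Your route avoids both the explicit decomposition and the contradiction step by invoking the complement duality once and in the right direction, so it is arguably cleaner; the paper's route has the minor advantage of producing $v$ as the $\ker(I_X^{(m)})$-component of a specific range vector $w$, which can be useful if one wants to track where the ``blind-spot'' direction comes from. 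Both arguments are correct, and both correctly use symmetry of the $I^{(m)}$ matrices to identify range and kernel complements. One small point worth keeping explicit, as you did, is the strictness of $v^\top I_Y^{(m)} v > 0$: for a symmetric PSD matrix the quadratic form vanishes exactly on the kernel, which either follows from the eigendecomposition or, as in your write-up, from the sum-of-squares form $\sum_j \|B_{c,j} v\|^2$.
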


\Cref{thm:complementary_gain} shows that if the second modality covers a “blind spot” of the first, adding additional samples from the first modality does not increase the Fisher information in that direction; however, unpaired samples from the second modality provide strictly positive information along that axis. \looseness=-1

\begin{wrapfigure}{r}{0.35\textwidth}
    \vspace{-6mm}
    \centering
    \includegraphics[width=0.35\textwidth]{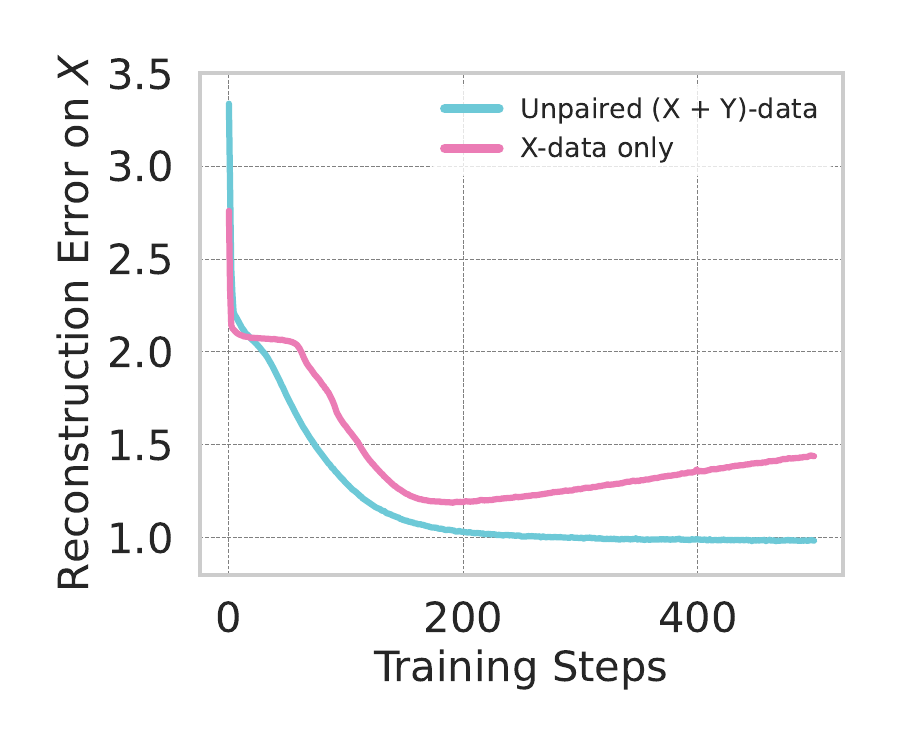}
    \caption{Adding unpaired $Y$ samples boosts $X$ reconstruction more than adding extra $X$ samples.\looseness=-1}
    \label{fig:gaussian}
    \vspace{-14mm}
\end{wrapfigure}

To empirically validate the above result, we design an illustrative Gaussian experiment (\Cref{app:gaussian_details}) in which both modalities are generated as noisy linear projections of the same underlying Gaussian latent variable $\theta_c$. We allocate a fixed budget of samples, splitting evenly between $X$ and $Y$ (details in ~\Cref{app:gaussian_details}). Joint training with \algoname significantly improves reconstruction fidelity on $X$ (\Cref{fig:gaussian}) compared to training on $X$ alone. Strikingly, this also reveals that the \emph{effective exchange rate between modalities is below one}, meaning a single additional sample from
$Y$ is worth more than an additional sample from $X$, even when the test task is on $X$. In \Cref{sec:mrs_img_text}, we extend this idea by quantifying the exchange rate between images and text on real world benchmarks.

\subsection{UML: \algofullname}\label{sec:alg-desc}

\begin{figure}[!htb]
    \centering
    \includegraphics[width=1.\linewidth]{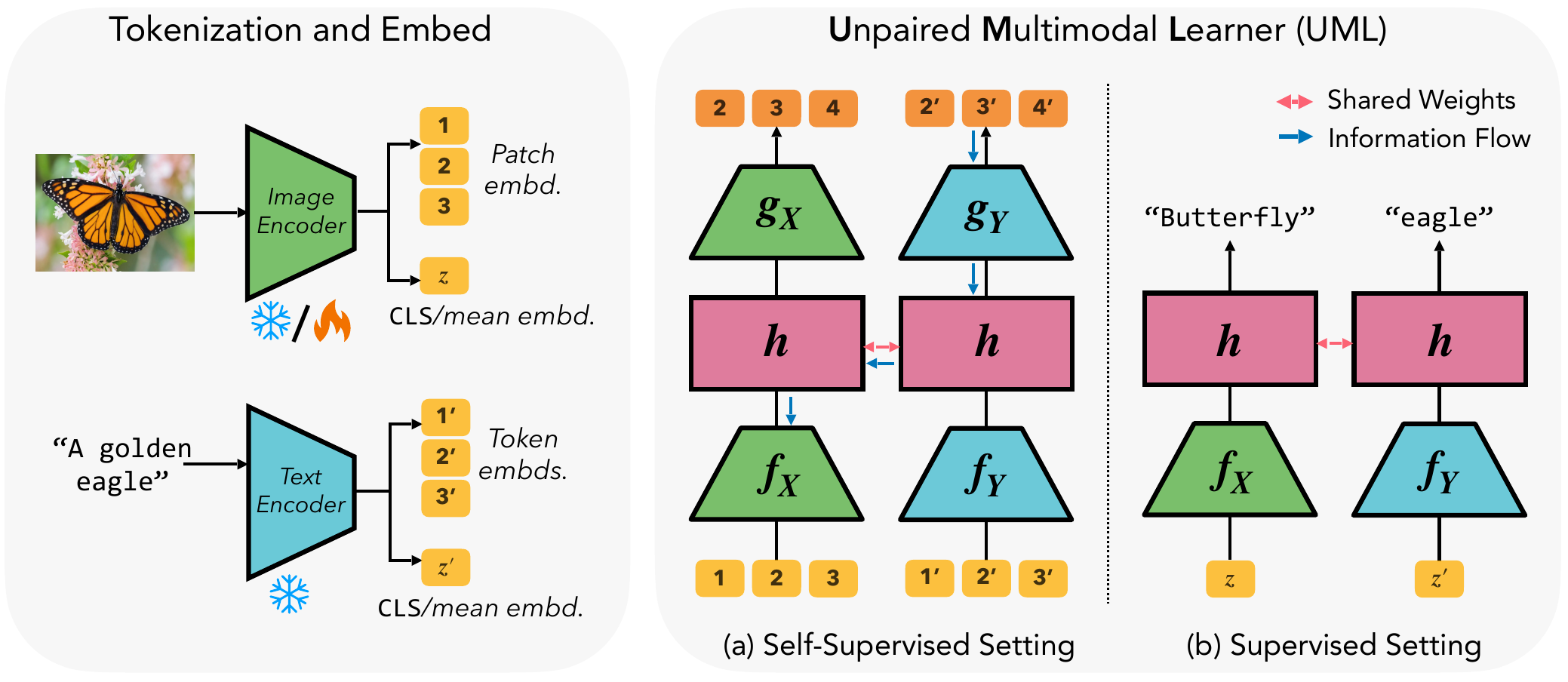}
    \caption{(Left) Inputs from different modalities (e.g., images or text) are tokenized into patch or token embeddings using pretrained encoders or processed features; (Right) \algoname can be trained under two settings: (a) Self-supervised, where patch/token embeddings are passed through a shared network and modality-specific decoders to perform next-token/patch prediction; (b) Supervised, where mean/CLS embeddings are fed through the shared classifier to predict labels within each modality.\looseness=-1}
    \label{fig:uml}
\end{figure}

The theory developed above establishes that adding an unpaired auxiliary modality strictly increases Fisher information along shared directions, thereby reducing estimator variance. We now translate these insights into a practical framework: \algoname (\algofullname). The central idea is to share parameters across modalities: since both $X$ and $Y$ are projections of the same underlying reality $\mathcal{Z}^*$, forcing them through shared weights can extract synergies by accumulating training gradients on the same parameters.
This accumulation could be viewed as the practical analogue of Fisher information addition, ensuring that even without paired samples, auxiliary modalities tighten estimates of shared structure.
While prior work has explored shared weights in partially paired settings~\citep{sturma2023unpaired}, for related visual streams such as video and depth~\citep{girdhar2022omnivore}, or within already-aligned spaces like CLIP~\citep{lin2023multimodality}, we study and show their pronounced impact in the most general case of unpaired modalities with unaligned encoders.
 
Consider unpaired samples or preprocessed features $x \sim P_X$ and $y \sim P_Y$, each encoded into embeddings $z_X = f_X(x)$ and $z_Y = f_Y(y)$ as shown in~\Cref{fig:uml}. These embeddings are passed through a shared network $h$, producing $r_X = h(z_X)$ and $r_Y = h(z_Y).$ 
This shared $h$ is the sole coupling between modalities and the locus of cross-modal transfer.
We consider two regimes to train these networks: (a) self-supervised learning for fully unpaired data (\Cref{fig:paired_unpaired}(c)) and (b) supervised learning for unpaired data with labels as shown in~\Cref{fig:paired_unpaired}(b). In the self-supervised setting, each modality has its own decoder ($g_X$ and $g_Y$) trained to reconstruct the input or predict the next token/patch depending on the form of embeddings $z_X$ and $z_Y$ (refer to~\Cref{fig:uml}(a)). The training objective (Pseudocode in the appendix:~\Cref{alg:code_ssl}) is\looseness=-1
\[
\mathcal{L}_{\text{UML-SSL}}
= \mathbb{E}_{x \sim P_X} \, \ell\big(g_X(r_X), \, x\big)
+ \mathbb{E}_{y \sim P_Y} \, \ell\big(g_Y(r_Y), \, y\big),
\]
where $\ell$ is the mean squared error for continuous targets or cross-entropy for discrete tokens. In the supervised setting, when labels are available, $c_X$ for $x$ and $c_Y$ for $y$, a shared classifier $c(\cdot)$ is trained on top of $r_X$ and $r_Y$. Since both $c(\cdot)$ and $h(\cdot)$ are shared across modalities, they can be viewed as a single shared head $h$ (refer to~\Cref{fig:uml}(b)). The supervised loss (Pseudocode in the appendix:~\Cref{alg:code_sup}) is\looseness=-1
\[
\mathcal{L}_{\text{UML-Sup}}
= \mathbb{E}_{(x, c_X)} \, \ell_{\text{CE}}\big(c(r_X), \, c_X\big)
+ \mathbb{E}_{(y, c_Y)} \, \ell_{\text{CE}}\big(c(r_Y), \, c_Y\big).
\]

In both scenarios, although supervision is modality-specific, the shared head $h$ receives updates from both modalities. Consequently, gradients from $h$ also flow into $f_X$, effectively transferring information from $f_Y$ and thus $\mathcal{Y}$ even without paired samples. At inference, we discard the auxiliary pathway and use $r_X$ as the representation for the target modality, training a linear probe on top for downstream tasks. \algoname also naturally extends to more than two modalities by incorporating additional modality-specific encoder/decoder heads or classification layers. \looseness=-1

\section{Experimental Results}

In this section, we empirically evaluate \algoname across diverse benchmarks and configurations, leading to three main takeaways: (i) auxiliary modalities consistently boost target image representations in both self-supervised (\Cref{sec:ssl_setting}) and supervised regimes (\Cref{sec:sup_setting}), with particularly strong gains in fine-grained and low-shot tasks; (ii) the benefits compound as we move from two to three modalities, with audio, vision, and text each contributing complementary signals (\Cref{sec:sup_setting}); and (iii) weight sharing across modalities generalizes beyond co-training, as pretrained language model weights \emph{transfer} effectively to vision (\Cref{sec:transfer_learning}). Finally, we quantify a marginal rate of substitution between modalities, asking how many text samples equate to an image (\Cref{sec:mrs_img_text}) and report the emergence of \emph{multimodal neurons} that implicitly align modalities (such as vision and language), even under unpaired supervision (\Cref{sec:multimodal_neuron}).

\subsection{\textsc{UML}\xspace in Self-Supervised Setting}\label{sec:ssl_setting}


\begin{table}[!htb]
\centering

\begin{minipage}[t]{1.\linewidth}
\centering
\caption{\algoname (Ours) achieves higher top-1 linear probe accuracy than unimodal baselines on learned representations, averaged over three seeds, across both MultiBench and vision–text benchmarks.}\label{tab:ssl_results}
\resizebox{.85\textwidth}{!}{%
\begin{tabular}{lllllllll}
\toprule
& \multicolumn{8}{c}{Dataset} \\
\cmidrule(lr){2-9}
& \multicolumn{5}{c}{\cellcolor{tab_blue}MultiBench} & \multicolumn{3}{c}{\cellcolor{tab_purple}Standard vision-text} \\
\cmidrule(lr){2-6}\cmidrule(lr){7-9}
Method & \rotatebox{90}{\textsc{Mustard}\xspace} & \rotatebox{90}{\textsc{Mimic}\xspace} & \rotatebox{90}{\textsc{Mosei}\xspace} & \rotatebox{90}{\textsc{Mosi}\xspace} & \rotatebox{90}{\textsc{Ur-funny}\xspace}  & \rotatebox{90}{Oxford Pets} & \rotatebox{90}{UCF101} & \rotatebox{90}{DTD} \\
\midrule
Unimodal & 59.66  & 55.16 & 70.62 & 56.17  & 56.99 & 85.04 & 79.86 & 78.13\\
Ours & 63.28~\greenarrow & 57.10 ~\greenarrow & 71.98~\greenarrow  & 58.16~\greenarrow  & 57.34~\greenarrow  &  86.32~\greenarrow & 80.98~\greenarrow & 78.49~\greenarrow\\
\bottomrule
\end{tabular}
}
\end{minipage}

\end{table}

To study \algoname in the self-supervised setting, we use the real-world multimodal benchmark from MultiBench~\citep{liang2021multibench}, which includes curated datasets such as text and images, with downstream labels covering a variety of domains, including healthcare, affective computing, and multimedia research and three standard classification benchmarks~\citep{parkhi2012cats,cimpoi2014describing,soomro2012ucf101}. We follow the same setting (dataset splitting, encoder architecture, pre-extracted features) as~\citep{liang2023factorized,liang2021multibench}. For training (refer to \Cref{fig:uml}(a)), we use linear encoders ($f_X$ and $f_Y$) and decoders ($g_X$ and $g_Y$) and use an autoregressive transformer as the shared network ($h$). At inference, we average the transformer output embeddings and use the resulting embedding for linear probing on the downstream classification tasks. We report test accuracy using embeddings from the primary modality (image), and to ensure fair comparisons, we perform rigorous hyperparameter tuning for all methods and repeat each experiment with three random seeds. For more details, refer to~\Cref{app:multibench_ds_details} and~\Cref{app:multibench_training_details}.  As shown in~\Cref{tab:ssl_results}, \algoname significantly outperforms its unimodal counterpart across all benchmarks, particularly on \textsc{Mustard}\xspace where unique information from the text modality expresses sarcasm, such as ironic tone of voice. \looseness=-1

\subsection{\textsc{UML}\xspace in Supervised setting}\label{sec:sup_setting}
We evaluate \algoname\ on 9 widely-used visual classification benchmarks~\citep{fei2004learning,parkhi2012cats,krause20133d,nilsback2008automated,bossard2014food,xiao2010sun,cimpoi2014describing,soomro2012ucf101} in three settings:  
(1) {Full fine-tuning}: initializing from a pretrained vision backbone and updating all parameters on the target dataset.  
(2) {Few-shot linear probing}: freezing the vision backbone and training a linear classifier on $k$ labeled samples per class ($k=1,2,4,8,16$). 
(3) {Full-dataset linear probing}: freezing the vision backbone and training a linear classifier on the entire training dataset, discussed in~\Cref{app:linearprobe_full_ds_section}. 
In all cases, we enrich image representations with unpaired text embeddings, using ViT-S/14 DINOv2 as the vision encoder and OpenLLaMA-3B as the frozen text encoder. To construct conceptually related yet unpaired text data, we generate text templates with varying amounts of semantic information about the dataset. For further details and specific prompts, refer to~\Cref{app:experiment_details}. 
To train \algoname, we initialize the classifier with the average text embedding of each class, giving a strong prior for image–class alignment (refer to~\Cref{app:ours_init_ablation} for ablation).

\vspace{-2mm}
\begin{table}[!htb]
\centering
\caption{\algoname (Ours) outperforms unimodal baseline on image classification with ViT-S/14 DINOv2 and OpenLLaMA-3B in both settings: (i) {Full finetuning} and (ii) {Few-shot linear probing ($k=1,2,4$)}. For complete results, refer to~\Cref{app:ours_init_ablation}.\looseness=-1}
\resizebox{\textwidth}{!}{%
\begin{tabular}{ll*{10}{l}}
\toprule
& & \multicolumn{10}{c}{{Dataset}} \\
\cmidrule(lr){3-12}
Shot & Method & 
\rotatebox{90}{Stanford Cars} & 
\rotatebox{90}{SUN397} & 
\rotatebox{90}{FGVC Aircraft} & 
\rotatebox{90}{DTD} & 
\rotatebox{90}{UCF101} & 
\rotatebox{90}{Food101} & 
\rotatebox{90}{Oxford Pets} & 
\rotatebox{90}{Oxford Flowers} & 
\rotatebox{90}{Caltech101} & 
\rotatebox{90}{Average} 
\\
\midrule
\rowcolor{tab_blue} \multicolumn{12}{l}{{\textit{Full-finetuning}}} \\
 &  Unimodal & 79.45 & 66.20 & 66.99 & 72.16 & 83.18 & 80.65 & 90.67 & 99.18 & 95.45 & 81.54\\
 & Ours & 86.39~\greenarrow & 66.03~\redarrow & 73.44~\greenarrow & 74.27~\greenarrow & 84.69~\greenarrow & 81.97~\greenarrow &  91.72~\greenarrow & 99.82~\greenarrow & 97.60~\greenarrow & 83.99~\greenarrow\\

\midrule
\rowcolor{tab_purple} \multicolumn{12}{l}{{\textit{Few-shot Linear Probing}}} \\
\multirow{2}{*}{1} & Unimodal & 13.18 & 34.15 & 14.09 & 36.60 & 46.74 & 35.18  & 63.51 & 89.62 & 76.66 & 45.52\\
& Ours & 16.49~\greenarrow & 41.79~\greenarrow & 15.63~\greenarrow & 42.04~\greenarrow & 52.33~\greenarrow & 42.27~\greenarrow  & 73.59~\greenarrow & 93.64~\greenarrow & 84.52~\greenarrow  & 51.36~\greenarrow\\
\midrule

\multirow{2}{*}{2} & Unimodal & 24.68 & 47.88 & 23.09 & 47.75 & 56.81 & 48.54 & 75.32 & 96.02 & 86.90 & 56.33 \\
& Ours & 28.65~\greenarrow & 53.15~\greenarrow & 24.78~\greenarrow & 53.25~\greenarrow & 63.86~\greenarrow & 54.44~\greenarrow &  81.41~\greenarrow & 97.63~\greenarrow & 90.55~\greenarrow & 60.85~\greenarrow\\
\midrule

\multirow{2}{*}{4} & Unimodal & 38.76 & 57.51 & 32.10 & 59.69 & 67.75 & 60.79 & 83.89 & 98.59 & 93.48 & 65.84 \\
& Ours & 43.17~\greenarrow & 60.89~\greenarrow & 33.86~\greenarrow & 62.43~\greenarrow & 71.13~\greenarrow & 63.88~\greenarrow & 87.36~\greenarrow & 99.17~\greenarrow & 94.96~\greenarrow & 68.53~\greenarrow\\
\bottomrule
\end{tabular}
}
\label{tab:main_classification_table}
\end{table}

\noindent \textbf{Unpaired Textual Data Improves Visual Classification.}
As shown in \Cref{tab:main_classification_table}, across both full fine-tuning and few-shot linear probing, \algoname\ consistently improves over unimodal baselines across all datasets, with the largest gains on fine-grained tasks (e.g., Stanford Cars, FGVC Aircraft) where unpaired text sharpens class boundaries, and in low-shot regimes where textual cues help disambiguate visually similar classes. Additional results on other shots, datasets, model scales, and prompt variants are reported in~\Cref{app:all_additional_results}\looseness=-1. 

\par We also evaluate the robustness of \algoname-trained models under test-time distribution shifts. A 16-shot linear probe with DINOv2 is trained on ImageNet and tested across four distribution-shifted target datasets: ImageNet-V2, ImageNet-Sketch, ImageNet-A, and ImageNet-R. \algoname\ consistently outperforms the unimodal baseline (\Cref{fig:img_ood_imagenet_dinov2_vits}), showing that language priors yield more transferable features. Additional robustness results are provided in~\Cref{app:robustness}.

\par Finally, for all these settings, we also replace the independent vision (DINOv2) and text encoders (OpenLLaMA) with those of CLIP; since CLIP embeddings are already aligned, the gains from \algoname\ are even stronger (\Cref{app:clip_classification_finetuning},~\Cref{app:clip_full_dataset_linprobe},~\Cref{app:clip_fewshot_linprobe},).

\par \emph{Additional Ablations.} For all experiments, we keep the text encoder frozen; Unfreezing the text encoder yields slightly weaker gains but still outperforms the unimodal baseline (\Cref{app:freeze_text_enc_ablation}).
Swapping in different text encoders yields consistent gains (\Cref{app:text_encoders}), while richer, more diverse captions provide especially strong boosts in few-shot settings (\Cref{app:text_complexity}).
Further, training with semantically unrelated modalities (\Cref{app:unrelated_text}) does not improve over the unimodal counterpart, confirming that gains stem from semantic correlations rather than spurious ones (\Cref{app:unrelated_text}). 
Finally, unpaired multimodal data systematically widens inter-class margins and aligns modalities in weights (\Cref{app:analysis_of_classifier}).\looseness=-1

\begin{figure}[!htb]
    \centering
    \includegraphics[width=1.\linewidth]{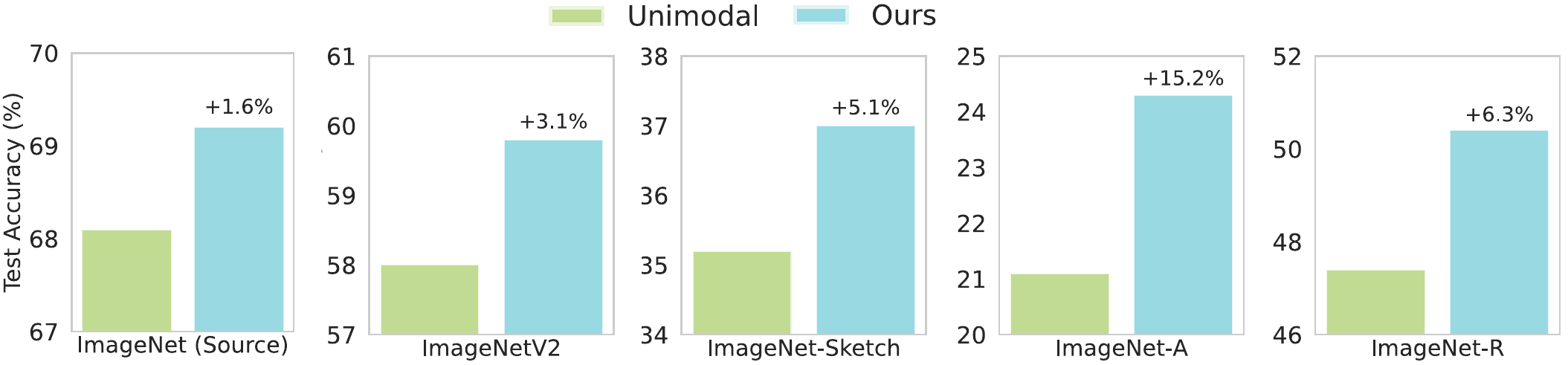}
    \caption{Our approach \algoname is much more robust than its unimodal counterpart across four test time distribution shifted target test sets. All results are averaged across three random seeds.}
    \label{fig:img_ood_imagenet_dinov2_vits}
\end{figure}

\noindent \textbf{Unpaired Image and Textual Data Improve Audio classification.}
We extend \algoname\ to an audio–vision–text setting using the ImageNet-ESC benchmark~\citep{lin2023multimodality}, which links ImageNet objects and captions with ESC-50 environmental sounds. The benchmark has two versions: ImageNet-ESC-27 and ImageNet-ESC-19.
For example, the ``barking" ESC-50 class maps to ImageNet dog breeds. 
The benchmark has two versions: ImageNet-ESC-27 (loosely matched visual-audio pairs) and ImageNet-ESC-19 (only accurate visual-audio matches). 
For audio encoding, we use AudioCLIP with an ES-ResNeXT backbone~\citep{guzhov2021esresne}, while images and text are encoded by DINOv2 and OpenLLaMA-3B encoders. For further details, refer to~\Cref{app:experiment_details}\looseness=-1. 

\begin{figure}[!htb]
    \centering
    \includegraphics[width=1.\linewidth]{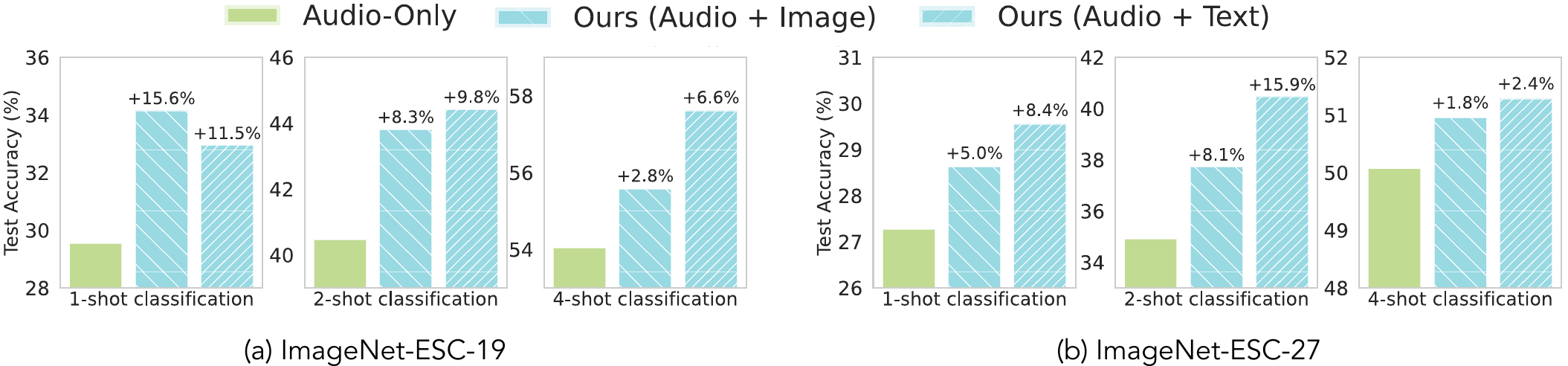}
    \caption{\algoname (Ours) improves audio classification using unpaired image and text samples on ImageNet-ESC-19 and ImageNet-ESC-27. All results are averaged across three random seeds.\looseness=-1}
    \label{fig:imagenet_esc19_27_combined_audio_main}
\end{figure}

\par Unpaired image and text data consistently improve audio classification (\Cref{fig:imagenet_esc19_27_combined_audio_main}), with larger gains from CLIP’s aligned encoders (\Cref{app:audio_clip_img_text_clip}). Conversely, both audio and text also enhance image classification, showing that transfer works in both directions (\Cref{app:audio_clip}).\looseness=-1

\par Finally, we study the full three-modality case, where we treat two modalities as auxiliaries and one as the target—for example, using both image and text as auxiliaries for audio classification. As shown in~\Cref{app:two_modalities_at_once}, performance improves monotonically with each added auxiliary modality, with the strongest gains achieved when all three modalities (audio, vision, and text) are used together.

\subsection{Transfer Learning}\label{sec:transfer_learning}

\begin{wrapfigure}{r}{0.5\textwidth} 
\vspace{-20pt}
    \centering
    \begin{subcaptiongroup} 
    \begin{minipage}{0.24\textwidth} 
        \includegraphics[width=\textwidth]{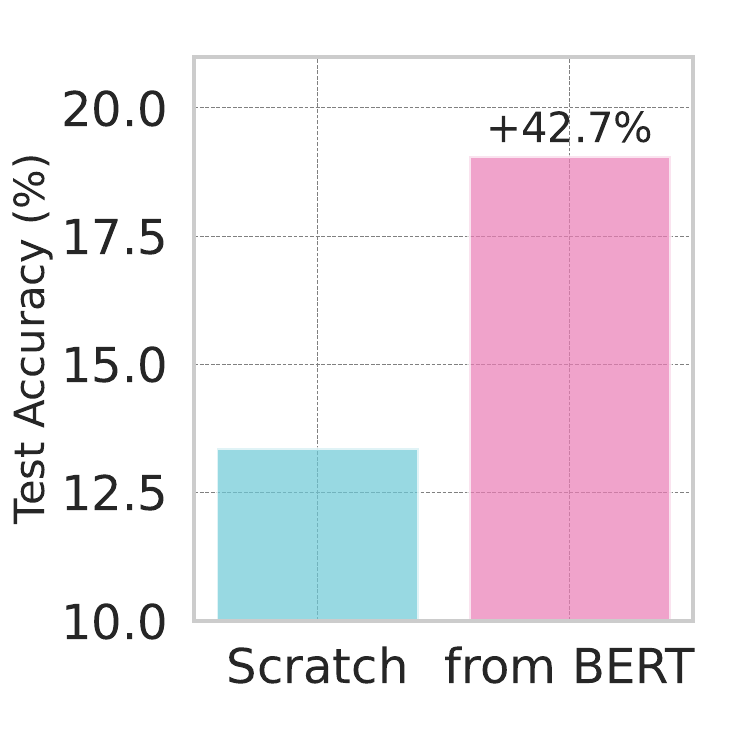}
    \end{minipage}%
    \hspace*{-8pt} 
    \begin{minipage}{0.24\textwidth} 
        \centering
        \includegraphics[width=\textwidth]{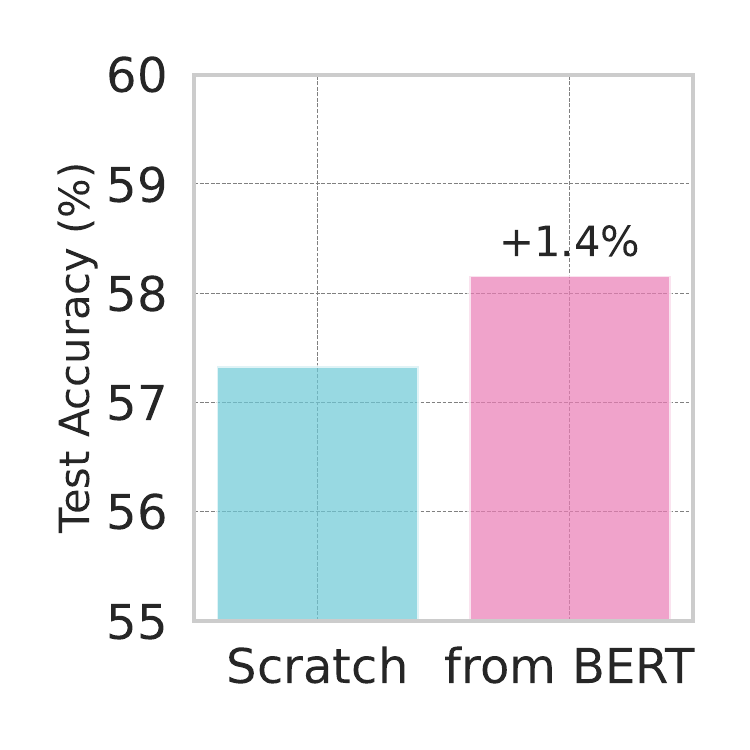}
    \end{minipage}
     \end{subcaptiongroup}
     \caption{Image classifier trained from BERT initialization outperforms training from scratch for (left) full fine-tuning; (right) frozen backbone\looseness=-1}
     \label{fig:transfer_all}
     \vspace{-12pt}
\end{wrapfigure}

Thus far, we have explored how sharing model weights while co-training with multiple unpaired modalities improves the learned representation. But weight sharing need not be restricted to co-training: if modalities capture the same underlying latents, then pretrained weights from one should also serve as a useful initialization for another. We therefore study if \textit{transferring} knowledge from one modality can enhance performance in another by initializing the transformer layers of a ViT~\citep{dosovitskiy2020image} with pretrained BERT~\citep{devlin2019bert} weights and evaluating on ImageNet (details in \Cref{app:transfer}). 
Patch embeddings are learned from scratch through a linear layer, augmented by a CLS token and positional embeddings. 
As shown in~\Cref{fig:transfer_all}, initializing with BERT boosts performance for both frozen and unfrozen backbones. Our results indicate that the semantic knowledge of language models provides a strong initialization for vision compared to training from scratch.

\subsection{Marginal Rate-of-Substitution Between Modalities}\label{sec:mrs_img_text}
Having established that unpaired modalities boost representation learning, robustness, and transfer, we now ask a more fundamental question: what is the relative value of each modality? If both images and text provide views of the same semantic space, can we quantify their exchange rate---\emph{How many words is an image worth?} \Cref{fig:isoplot_pets_clip_rn50} and~\Cref{fig:isoplot_pets_dino_vits} visualize image-text conversion ratios using test accuracy isolines on Oxford-Pets~\citep{parkhi2012cats} dataset. These map the number of texts equivalent to an image for the same performance. Aligned CLIP encoder (1 image $\approx$ 228 words) is more efficient than non-aligned DINOv2 and OpenLLaMA encoders (1 image $\approx$ 1034 words). Indeed, in some cases, an image may quite literally be worth a thousand words. We also observe little or no additional benefit from adding more text beyond a few samples. This is likely because we do not control for increasing complexity, so adding sentences does not guarantee extra information. Further results on additional datasets and key details, refer to~\Cref{app:mrs}.

\begin{figure}[!htb]
\centering
\begin{minipage}{0.49\textwidth}
\centering
\includegraphics[width=\textwidth]{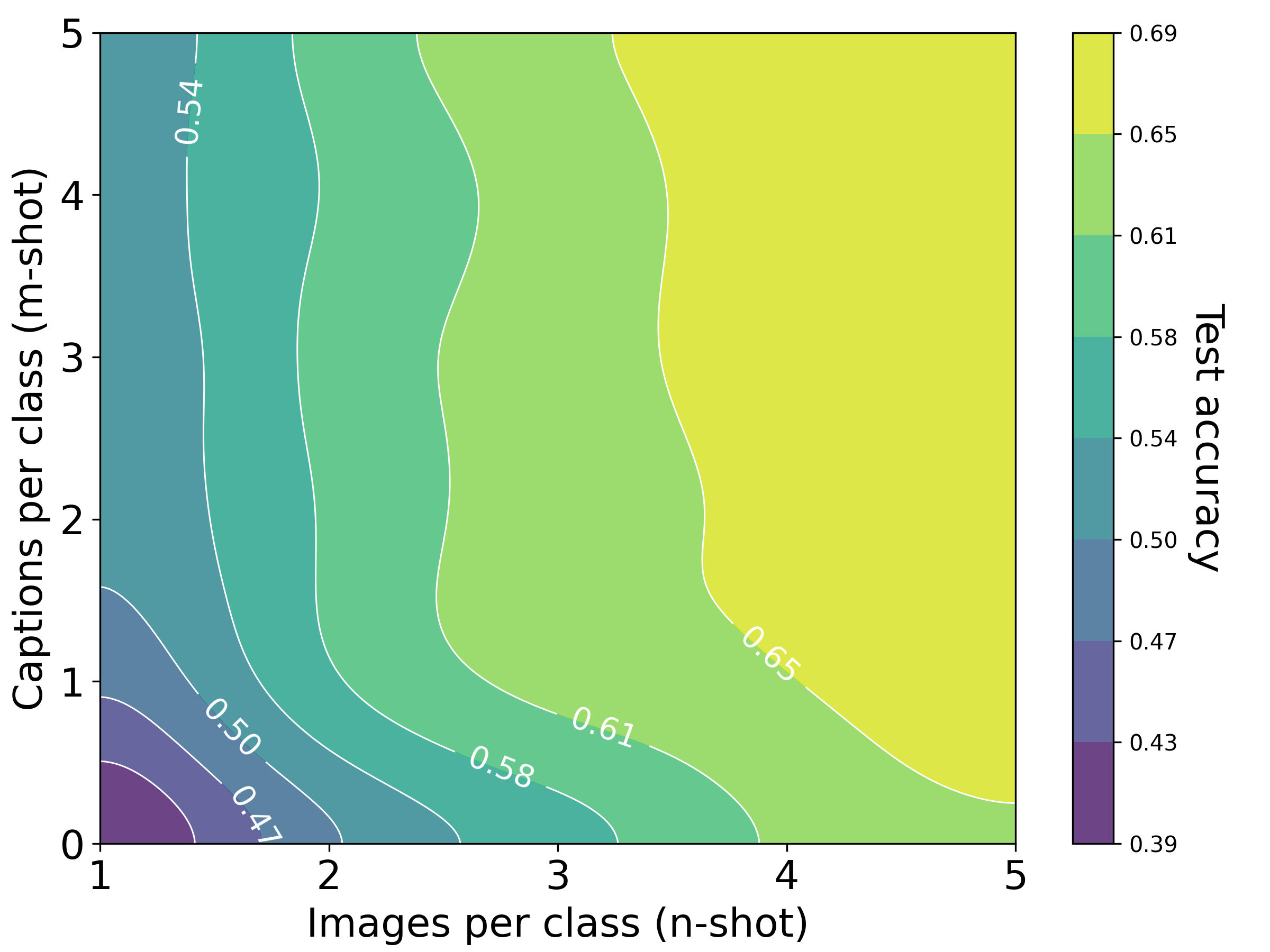}
\caption{1 img $\approx$ 228 words (CLIP)}
\label{fig:isoplot_pets_clip_rn50}
\end{minipage}
\hfill
\begin{minipage}{0.49\textwidth}
\centering
\includegraphics[width=\textwidth]{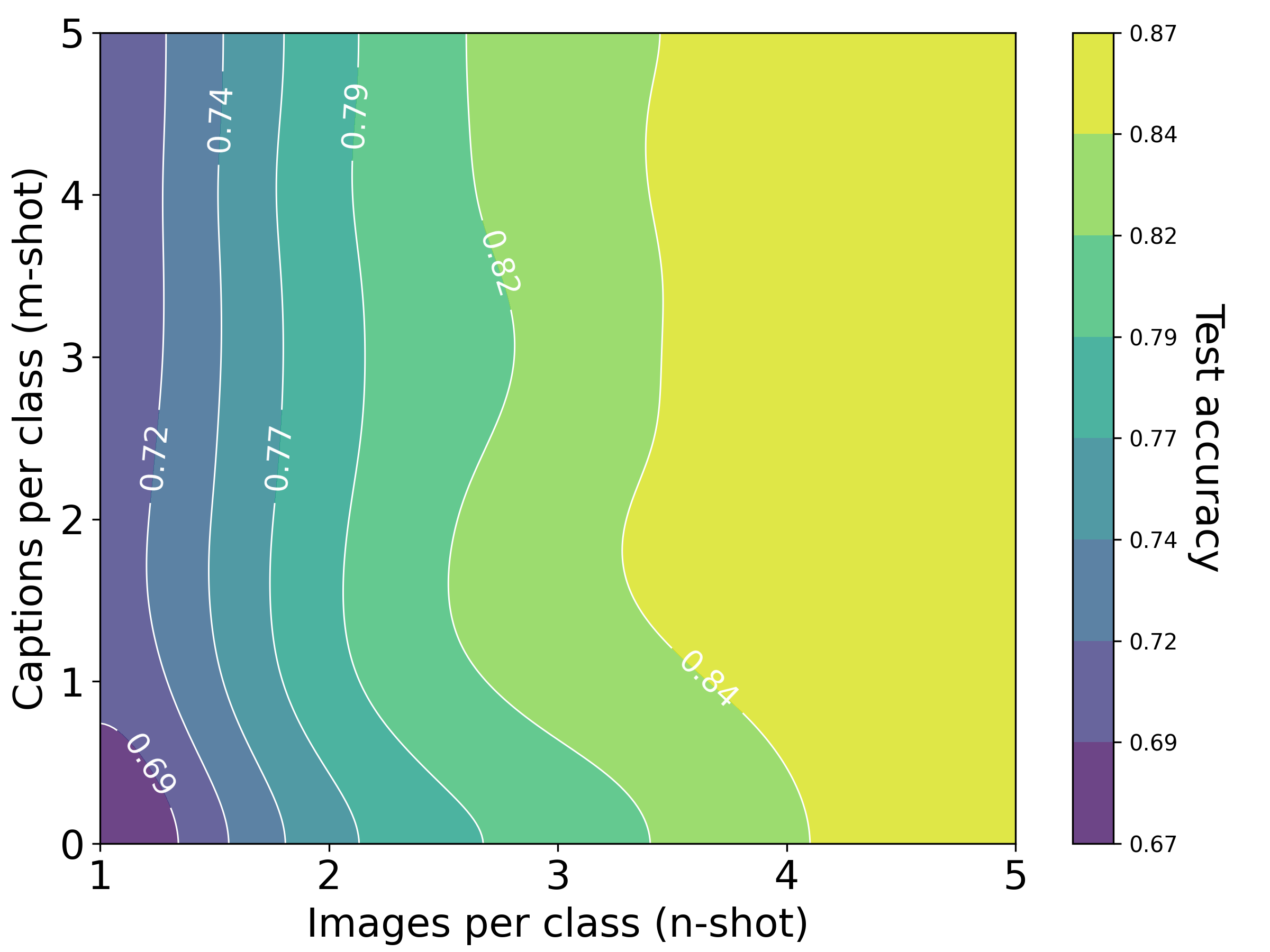}
\caption{1 img $\approx$ 1034 words (DINOv2)}
    \label{fig:isoplot_pets_dino_vits}
\end{minipage}
\end{figure}

\subsection{Existence of Multimodal Neurons}\label{sec:multimodal_neuron}
\begin{figure}[H]
    \centering
    \includegraphics[width=1.\linewidth]{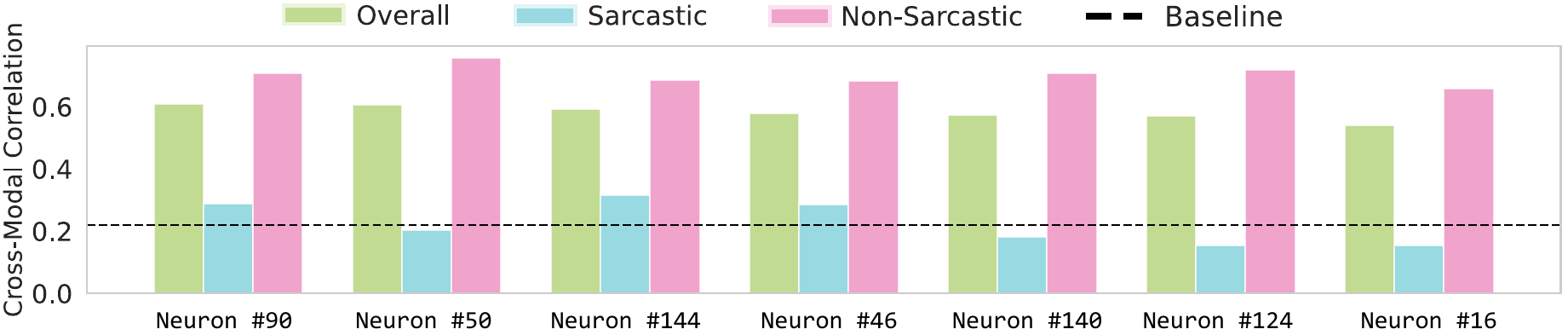}
    \caption{\textbf{Existence of Multimodal Neurons without Paired Supervision.} Bars show overall (green), correlations conditioned on \texttt{label}=1 (sarcastic; blue), and correlations conditioned on \texttt{label}=0 (non-sarcastic; pink) between visual and textual activations for a subset of neurons. Most neurons exhibit strong cross-modal correlation and specifically higher alignment for non-sarcastic samples, where visual and verbal cues are naturally congruent.}
    \label{fig:multimodal_neurons}
\end{figure}

While the previous section quantified the exchange rate between modalities, our next question concerns the mechanism that enables such exchange. Models like CLIP, trained with paired image–text supervision, are known to develop \emph{multimodal neurons}~\citep{goh2021multimodal}: units that respond coherently to the same concept across both modalities. We ask whether similar cross-modal units can emerge even without such paired supervision, when the model is exposed only to unpaired but semantically related data. 

\begin{wrapfigure}{r}{0.35\textwidth}
    \vspace{-5mm}
    \centering
    \includegraphics[width=0.32\textwidth]{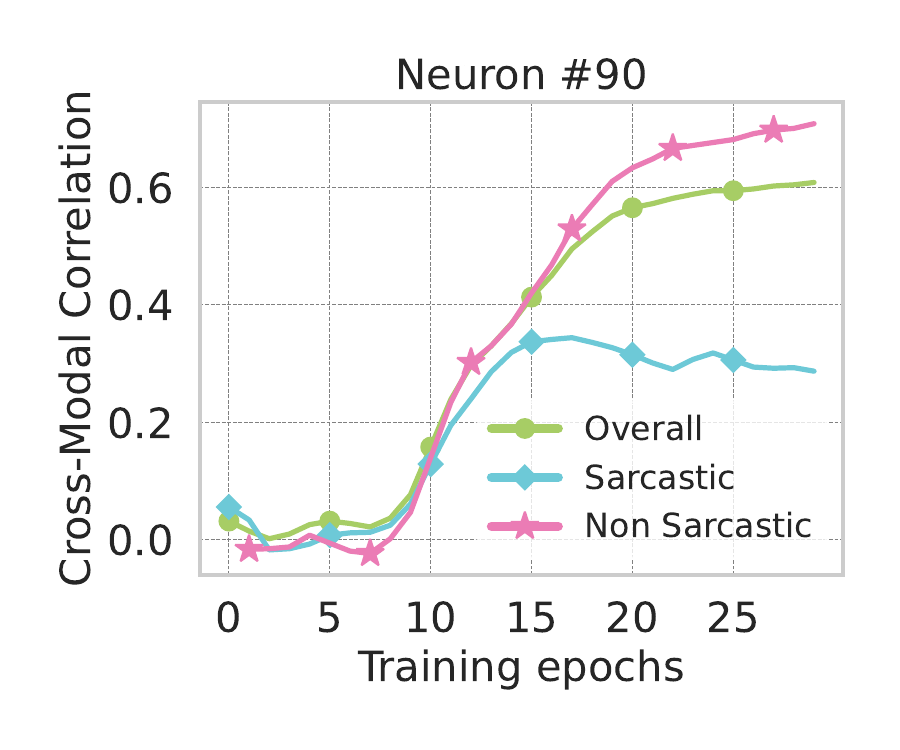}
    \caption{\algoname increasingly infers cross-modal correspondences with training, without paired supervision.\looseness=-1}
    \label{fig:neuron90_training}
    \vspace{-10mm}
\end{wrapfigure}

To test this hypothesis, we analyze neuron-level activations on the \textsc{Mustard}~\citep{castro2019towards} dataset, a multimodal sarcasm detection benchmark containing 690 video clips paired with corresponding utterances from television shows, each labeled for the presence or absence of sarcasm. For each sample $i$ and neuron $j$, let $z_{ij}^{(v)}$ and $z_{ij}^{(t)}$ denote activations in the visual and textual pathways, respectively. We compute the Pearson correlation
$r_j = \mathrm{corr}(\{z_{ij}^{(v)}\}_i,\{z_{ij}^{(t)}\}_i)$
to quantify cross-modal alignment. To examine label-specific effects, we further calculate
$r_j^{(y)} = \mathrm{corr}(\{z_{ij}^{(v)}:y_i=y\},\{z_{ij}^{(t)}:y_i=y\})$,
for each class $y \in \{\text{sarcastic}, \text{non-sarcastic}\}$, assigning zero to undefined cases. As shown in~\Cref{fig:multimodal_neurons}, even without paired supervision, several neurons exhibit strong cross-modal coupling between vision and text, significantly higher than the highest correlation across neurons for an untrained network (baseline). This coupling steadily improves with training epochs, indicating that the model progressively infers more correspondences between modalities all without any paired supervision (refer to~\Cref{fig:neuron90_training}). This correspondence, however, varies across labels. Most neurons show higher correlations for non-sarcastic samples, consistent with natural communication where facial expressions and verbal cues typically align. Sarcastic utterances, by contrast, deliberately break this alignment; words and expressions convey opposing meanings, resulting in weaker correlations between them. For detailed results across multiple neurons, refer to~\Cref{app:multimodal_neuron}. 

\section{Conclusions and Limitations}\label{sec:conclusion}
\textbf{Conclusions.}  We propose and investigate \emph{Unpaired Multimodal Representation Learning} for enhancing unimodal representations with unpaired multimodal data.
Under linear assumptions, we theoretically show that unpaired data from multiple modalities strictly increases Fisher information along shared directions, resulting in a more accurate representation of the underlying world.
Mechanistically, \algofullname\ (\algoname) achieves this by accumulating gradients from different modalities on shared weights, which can be viewed as an operational analogue of the Fisher information gain. 
Empirically, we show performance gains across vision, text and audio benchmarks, and estimate conversion ratios between modalities. 
\algoname provides a new perspective on how to harness the abundance of unpaired data to learn better representations. This may be especially useful in domains such as medical imaging, scientific data, and robotics, which contain rich auxiliary modalities like text, audio, or metadata that are not often paired with every instance of the primary modality. \looseness=-1

\noindent \textbf{Limitations.} While our experiments study learning from unpaired multimodal data under both the self-supervised and supervised settings, downstream evaluations are conducted primarily for classification.  Investigating evaluation tasks, such as generation, offers a rich avenue for future work. Furthermore, we evaluate how multimodal data enhances image and audio classification; it remains to show if they can, in turn, offer useful information for textual tasks.

\section{Reproducibility Statement}
We provide complete proofs and relevant background for all theoretical results in~\Cref{app:proofs_main}. For experiments, we detail the setup, training protocols, and algorithm implementations in~\Cref{app:training protocol} and~\Cref{app:alg}. All datasets are publicly available, with additional details in~\Cref{app:experiment_details}. The code can be found at \url{https://github.com/Sharut/Unpaired-Multimodal-Learning/}.

\section{Acknowledgments}

This research was sponsored by the Department of the Air Force Artificial Intelligence Accelerator under Cooperative Agreement Number FA8750-19-2-1000, in part by the NSF AI Institute TILOS (NSF CCF-2112665) and the Alexander von Humboldt Foundation. This work was also supported by a Packard Fellowship to P.I., and by ONR MURI grant N00014-22-1-2740. S.G. is supported by the MathWorks Engineering Fellowship. S.S. is supported by an NSF GRFP fellowship. The views and conclusions contained in this document are those of the authors and should not be interpreted as representing the official policies, either expressed or implied, of the Department of the Air Force or the U.S. Government. The U.S. Government is authorized to reproduce and distribute reprints for Government purposes, notwithstanding any copyright notation herein.\looseness=-1

\newpage 
\bibliographystyle{abbrvnat}
\bibliography{bib}

\newpage
\appendix
\startcontents[appendices]
\printcontents[appendices]{}{1}{\section*{Appendix}}
\newpage

\section{Further Related Works}\label{sec:related}
\paragraph{Unpaired Multimodal Learning.} 
Unpaired data has long been used for image-to-image~\citep{zhu2017unpaired, liu2017unsupervised, almahairi2018augmented,shi2023diffusion} and text-to-text translation~\citep{lample2018phrase} . More recently, several works have also proposed learning from unpaired data by inferring coarse- or fine-grained alignments through distribution matching or optimal transport objectives \citep{xi2024propensity,demetci2024,ryu2024cross}. In contrast, we leverage unpaired data for learning representations without the need for explicit or inferred alignment. \citep{timilsina2024identifiable,sturma2023unpaired} theoretically analyze the problem of identifying shared latent components and causal structures in unaligned multimodal mixtures. Most closely related to our work is \citep{lin2023multimodality}, which leverages coarse-grained text data such as class names to improve image classification on CLIP using a shared linear head. Another related line of works~\citep{roth2023waffling, pratt2023does, menon2022visual, gao2024clip} leverage prompting templates and pretrained LLMs to generate descriptive class captions, showing improved image classification performance with CLIP. Nonetheless, these methods operate on CLIP with pre-aligned representation spaces, whereas our approach also learns from unpaired data without assuming prior alignment. Several works have also proposed learning large multitask multimodal models with joint encoders and unified embedding spaces \citep{Srivastava024,Srivastava2023OmniVecLR,zhang2023meta,girdhar2022omnivore,geng2022multimodal}, often using joint training over separate tasks and/or masked prediction objectives. In a similar vein, \citep{chada2023momo} uses a stage-wise training strategy with both unpaired and paired data, and \citep{girdhar2023omnimae} trains a single model across visual modalities. However, most of these methods rely on some amount of paired data for preliminary alignment and then leverage abundant modality-specific unpaired data for further improvement. In contrast, our approach demonstrates that a model can implicitly learn cross-modal correlations from purely unpaired data, without requiring explicit alignment as a prerequisite.

\paragraph{Multimodal Representation Alignment.}
Our method relies on the notion of shared information and structure between unaligned modalities. Closely related to this are works demonstrating that unimodal representations trained without multimodal data are nevertheless converging. \cite{huh2024platonic} presents evidence that better-performing language models exhibit increased alignment to self-supervised vision models. Similarly, \citep{maniparambil2024vision} shows a latent space alignment between vision and text encoders across backbones and training paradigms, and uses the CKA metric to connect unaligned encoders zero-shot. Earlier works also note alignment between models trained with different datasets and modalities \citep{moschella2022relative,norelli2023asif}. Several works have also shown that a linear projection or MLP is sufficient to stitch together the latent spaces of pretrained vision and language models \citep{merullo2022linearly, liu2023visual,koh2023grounding}. \cite{zhai2022lit} extends this to training a text encoder to align to a frozen pretrained image model; this method was in turn used to integrate DINOv2, a large self-supervised vision model, with a text encoder \citep{jose2024dinov2}.

\section{Supplementary Experimental Details and Assets Disclosure}\label{sec:exp_setup}

\subsection{Assets}
We do not introduce new data in the course of this work. Instead, we use publicly available, widely used image datasets for the purposes of benchmarking and comparison.

\subsection{Hardware and setup}
\label{sec.compute}
Each experiment was conducted on 1 NVIDIA Tesla V100 GPUs, each with 32GB of accelerator RAM. The CPUs used were Intel Xeon E5-2698 v4 processors with 20 cores and 384GB of RAM. All experiments were implemented using the PyTorch deep learning framework.

\subsection{Datasets}\label{app:experiment_details}

\subsubsection{MultiBench}\label{app:multibench_ds_details}
We evaluate our approach on a diverse set of multimodal fusion datasets from MultiBench~\citep{liang2021multibench} spanning healthcare, sentiment, and humor detection:  

\begin{itemize}
    \item \textbf{CMU-MOSEI}~\citep{moschella2022relative}: The largest sentence-level multimodal sentiment and emotion dataset, containing 23,000 annotated monologue videos (over 65 hours of content from more than 1,000 speakers across 250 topics). Each video is labeled with sentiment intensity in the range $[-3,3]$, which we cast into binary positive/negative sentiment classification.  
    \item \textbf{CMU-MOSI}~\citep{zadeh2016mosi}: A related multimodal sentiment dataset with 2,199 YouTube video clips, reflecting real-world opinionated speech. Sentiment intensities are annotated in the range $[-3,3]$, and we again formulate the task as binary sentiment classification.  
    \item \textbf{UR-FUNNY}~\citep{hasan2019ur}: A large-scale humor detection dataset derived from over 16,000 TED talk videos. It includes 8,257 humorous punchlines identified by laughter markers, paired with 8,257 negative examples drawn from non-humorous contexts, forming a balanced binary humor classification task.  
    \item \textbf{MUSTARD}~\citep{castro2019towards}: A multimodal sarcasm detection dataset with 690 video clips from TV shows. Each sample is annotated for the presence or absence of sarcasm, yielding a challenging binary classification task.  
    \item \textbf{MIMIC-III}~\citep{johnson2016mimic}: A large-scale clinical dataset with records of over 40,000 ICU patients. It combines time-series physiological measurements (recorded hourly over a 24-hour window) with static demographic and tabular features. We use it for binary classification of patients into a group of ICD-9 codes.  
\end{itemize}

\subsubsection{Image Classification Benchmarks}
\begin{table}[!htb]
  \centering
  \caption{Detailed statistics of the 10 datasets for image classification.}
  \label{tab:dataset-stats}
  \begin{tabular}{l c r r r}
    \toprule
    Dataset                 & Classes &  Train  & Val   &  Test   \\
    \midrule
    Caltech101~\citep{fei2004learning} &     100 &   4,128 & 1,649 &   2,465 \\
    OxfordPets~\citep{parkhi2012cats} &      37 &   2,944 &   736 &   3,669 \\
    StanfordCars~\citep{krause20133d} &   196 &   6,509 & 1,635 &   8,041 \\
    Oxford Flowers~\citep{nilsback2008automated} &     102 &   4,093 & 1,633 &   2,463 \\
    Food101~\citep{bossard2014food}     &     101 &  50,500 &20,200 &  30,300 \\
    FGVCAircraft~\citep{maji2013fine} &  100 &   3,334 & 3,333 &   3,333 \\
    SUN397~\citep{xiao2010sun}    &     397 &  15,880 & 3,970 &  19,850 \\
    DTD~\citep{cimpoi2014describing}        &      47 &   2,820 & 1,128 &   1,692 \\
    UCF101~\citep{soomro2012ucf101}     &     101 &   7,639 & 1,898 &   3,783 \\
    ImageNet~\citep{deng2009imagenet}   &   1,000 & 1.28M   &  N/A  &  50,000 \\
    \bottomrule
  \end{tabular}
\end{table}

We evaluate on the following widely-used classification benchmarks: ImageNet~\citep{deng2009imagenet}, StanfordCars~\citep{krause20133d}, UCF101~\citep{soomro2012ucf101}, Caltech101~\citep{fei2004learning}, Oxford Flowers~\citep{nilsback2008automated}, SUN397~\citep{xiao2010sun}, DTD~\citep{cimpoi2014describing}, FGVCAircraft~\citep{maji2013fine}, OxfordPets~\citep{parkhi2012cats}, and Food101~\citep{bossard2014food}. More details about the dataset and splits is provided in~\Cref{tab:dataset-stats}.

\subsubsection{Constructing text templates}

To construct conceptually related yet unpaired text data, we generate text templates that capture varying granularities of information about the dataset. Our first approach (\textit{Vanilla}) uses the straightforward template \texttt{“a photo of a \{\}”} with a natural language label for each category, resulting in a basic text description for each class. However, this simple textual corpus lacks fine-grained information necessary to distinguish between visually similar subcategories or to resolve contextually ambiguous terms. To address this, for the second template, we draw from the extensive literature on improving text prompts for zero-shot classification in CLIP~\citep{gao2024clip,menon2022visual,pratt2023does,roth2023waffling}. Specifically, for the second approach (\textit{GPT-3 Descriptions}), we adopt the text prompt generation strategy developed by~\citet{pratt2023does}, using large language models such as GPT-3 to generate diverse and contextually rich prompts for each image category. We use three generic hand-written sentences across the datasets: \\
\begin{center}
\vspace{-3mm}
\texttt{Describe what a/the \{\} looks like:} \\
\texttt{Describe a/the \{\} :} \\
\texttt{What are the identifying characteristics of a/the \{\}?}
\end{center}

The blank portion of each template is populated with the category name, along with the category type for specialized datasets (e.g., “pet” + \{\} for Oxford Pets or “aircraft” + \{\} for FGVC Aircraft). The type specification is important for disambiguating categories with multiple interpretations. Some examples of these descriptions are provided in~\Cref{tab:class-examples} for the Oxford Pets dataset. 

\begin{table}[!htb]
  \centering
  \caption{Sample text descriptions per class for Oxford Pets dataset}
  \label{tab:class-examples}
  \begin{tabular}{@{}p{0.20\linewidth} p{0.75\linewidth}@{}}
    \toprule
    \textbf{Class} & \textbf{Examples} \\
    \midrule
    Wheaten Terrier &
      \begin{tabular}[t]{@{}l@{}}
        A wheaten terrier is a small, shaggy dog with a soft, silky coat.\\
        A wheaten terrier has a soft, wheat-colored coat that is low-shedding and hypoallergenic.\\
        The wheaten terrier is a medium-sized, hypoallergenic dog breed.\\
        A pet Wheaten Terrier usually has an intelligent expression and a soft, wheat-colored coat.
      \end{tabular} \\
    \addlinespace
    Great Pyrenees &
      \begin{tabular}[t]{@{}l@{}}
        A great pyrenees is a large, white, shaggy-coated dog.\\
        A Great Pyrenees is a large, fluffy dog with a calm, gentle disposition.\\
        The great pyrenees was originally bred to protect livestock from predators.\\
        Great Pyrenees are known for being very large, white dogs with thick fur.
      \end{tabular} \\
    \addlinespace
    Sphynx &
      \begin{tabular}[t]{@{}l@{}}
        A pet Sphynx typically has a small, wrinkled head and a hairless body.\\
        A Sphynx is a hairless cat breed known for its soft, warm skin.\\
        A Sphynx often displays large ears, pronounced cheekbones, and no fur.\\
        Sphynx are unique cats characterized by their lack of coat and wrinkled skin.
      \end{tabular} \\
    \addlinespace
    Birman &
      \begin{tabular}[t]{@{}l@{}}
        A Birman is a long-haired, color-pointed cat with a “mask” of darker fur on its face.\\
        A Birman has silky, pale cream to ivory fur with deep seal- or lilac-colored points.\\
        Birman cats possess striking blue eyes and contrasting white “gloves” on their paws.\\
        They are known for being gentle, affectionate, and smooth-coated companions.
      \end{tabular} \\
    \addlinespace
    Pomeranian &
      \begin{tabular}[t]{@{}l@{}}
        A Pomeranian is a small, fluffy dog with a thick double coat.\\
        Pomeranians are toy-sized, alert dogs with fox-like faces and plumed tails.\\
        A pet Pomeranian often comes in orange, black, white, or mixed coat colors.\\
        They are lively, outgoing, and known for their bold, friendly personalities.
      \end{tabular} \\
    \bottomrule
  \end{tabular}
\end{table}

\subsubsection{ImageNet-ESC Dataset}
\textbf{Experimental Setup.} We extend our results beyond vision and language to an audiovisual-language dataset: the ImageNet-ESC benchmark~\citep{lin2023multimodality}. This benchmark combines ImageNet (1000 object categories) and ESC-50 (50 environmental sound classes) by matching classes that logically correspond. For example, the dog (barking) class from ESC-50 aligns with various dog breeds from ImageNet, while the clock-alarm sound maps to both analog clock and digital clock. This alignment captures the relationship between visual objects, their sounds, and their textual descriptions. The benchmark consists of two versions: 1) ImageNet-ESC-27: A broader set including loosely matched visual-audio pairs (e.g., drinking-sipping to water bottle); 2) ImageNet-ESC-19: A more precise subset containing only accurate visual-audio matches.

\subsection{Training Protocol}\label{app:training protocol}

\subsubsection{Unpaired Multimodal Representation Learning under Self-Supervision}\label{app:multibench_training_details}

\noindent \textbf{MultiBench.} For the MIMIC dataset, which contains tabular and medical time-series inputs, we train models directly on the raw modality inputs. For the four video datasets (MOSEI, MOSI, UR-FUNNY, and MUSTARD), we train models on standard pre-extracted features from text, video, and audio modalities~\citep{liang2021multibench}. All models are trained for 100 epochs, with hyperparameter search over learning rates $\{10^{-2}, 10^{-3}, 10^{-4}\}$. We report the best performance across learning rates, averaged over three random seeds.  
To train \algoname, each modality is first projected into a shared embedding space of dimension $d \in \{10, 40, 150, 300\}$ via a learned linear transformation. The projected inputs are processed by a shared 5-layer, 5-head autoregressive Transformer encoder, followed by a linear projection back to the original modality dimension. Training uses a next-token/patch embedding prediction objective, with both modalities sharing the Transformer backbone to encourage cross-modal synergies in the latent space. At inference time, we average the Transformer outputs across sequence length and use the resulting embeddings for linear probing on downstream classification tasks. All models (Unimodal baseline and \algoname) are trained for 100 epochs, with hyperparameter search over learning rates $\{10^{-2}, 10^{-3}, 10^{-4}\}$. For \algoname, in addition, we perform hyperparameter search over a curriculum parameter \texttt{step} $\in \{0, 30, 50, 70\}$. This parameter controls whether training begins on $X$ alone for the first \texttt{step} epochs before switching to joint training, with \texttt{step} = $0$ corresponding to joint training from the start. For each dataset, we select the best-performing model on the validation set and report test accuracy averaged over three random seeds.

\noindent \textbf{Standard Vision-Text Benchmarks.} 
We extract image and text embeddings using ViT-B/14 DINOv2 and OpenLLaMA-3B, respectively. As in MultiBench, patch and token embeddings are projected to a shared 256-dimensional space via modality-specific linear layers. A 4-layer, 4-head transformer serves as the shared encoder, with outputs projected back to the original embedding dimensions using modality-specific linear projections. We perform rigorous hyperparameter tuning for both the unimodal baseline and \algoname, and report average test accuracy of the best model across three seeds. Full hyperparameter ranges are listed in~\Cref{tab:linear-hparam-grid}.

\subsubsection{Image Classification using Image and Unpaired Texts }
For text, we use OpenLLaMA-3B as our default encoder and ablate against BERT-Large, RoBERTa-Large, GPT-2 Large, and the pre-aligned CLIP text encoder, keeping the text encoder frozen. For images, our main backbone is ViT-S/14 DINOv2, with ablations across other DINOv2 variants and the CLIP vision encoder. In the linear-probe setting, all encoder weights stay fixed and we train only a single linear classification head; in full fine-tuning, we jointly update the image backbone and that head, while still freezing the text encoder.

We optimize cross-entropy loss via AdamW~\citep{loshchilov2017decoupled} and perform an extensive grid search over learning rate, weight decay, cosine learning rate scheduling with linear warmup, dropout, and a learnable, modality-specific scaling on the logits.  The results are reported for the best-performing model on the validation dataset. We report results for the model achieving highest validation accuracy; the full hyperparameter ranges are in \Cref{tab:linear-hparam-grid}.

For full fine-tuning, we jointly update the image backbone and classification head with a fixed learning rate of $5\times10^{-5}$, batch size 64, and omit learnable modality-specific scaling, since it showed no benefit in this setting.

\bigskip
\begin{table}[!htb]
  \centering
  \caption{Hyperparameter grid for linear probing.}
  \label{tab:linear-hparam-grid}
  \begin{tabular}{@{}ll@{}}
    \toprule
    \textbf{Hyperparameter}     & \textbf{Values}                      \\
    \midrule
    Optimizer                   & \texttt{adamw}                      \\
    Learning rate               & \{0.001, 1e-4\}                     \\
    Weight decay                & \{0.0, 0.01, 0.001\}                \\
    LR scheduler                & \texttt{cosine}                     \\
    Batch size                  & \{8, 32\}                                  \\
    Max iterations              & 12,800                             \\
    Warmup iterations           & 50                                  \\
    Warmup type                 & \texttt{linear}                     \\
    Warmup min LR               & 1e-5                                \\
    Dropout                     & \{0.0\}                             \\
    Modality-specific learnable scaling       & \{False, True\}                            \\
    Early‐stop patience         & 10                                  \\
    \bottomrule
  \end{tabular}
\end{table}

\subsubsection{Evaluation on ImageNet-ESC}
Similar to our vision-language experiments, we perform few-shot evaluation using the 5-fold splits defined in the benchmark. Each fold contains 8 samples per class, with one fold used for training and validation and the remaining four for testing. We repeat the process over 5 random splits and report the average performance. For audio encoding, we use AudioCLIP with an ES-ResNeXT backbone~\citep{guzhov2021esresne}. AudioCLIP is pretrained on AudioSet and generates audio embeddings in the same representation space as CLIP. Following the instructions in~\citep{guzhov2021esresne,lin2023multimodality}, we use \texttt{train()} mode in Pytorch to extract the features since \texttt{eval()} mode yields suboptimal embeddings. We evaluate our models on two tasks—audio classification and image classification—comparing the unimodal baseline against two multimodal variants in which the primary modality is each time augmented by one of the other modalities.

\subsubsection{Transfer Learning from Language to Vision}\label{app:transfer}
To adapt a language model to image classification, we embed image patches using a linear projection and add positional encodings to capture spatial structure. We then use transformer layers initialized from pretrained BERT, and finally, a 2-layer MLP classification head. Specifically, we split each image of size $224 \times 224$ into patches of size $16 \times 16$ with $196$ patch tokens.  Each patch is then projected into the model’s embedding space of dimension $d (e.g.\ $d=768 for GPT-2, $d=1024$ for BERT) via a learned linear layer.  We then prepend a learnable “\texttt{[CLS]}” token, add learned positional embeddings of shape $(N+1)\times d$, and apply dropout with probability $p=0.1$. This $(N+1)\times d$ sequence is passed into the pretrained transformer stack (either GPT-2 or BERT), using a full bidirectional attention mask over all patch tokens and the CLS token.  We extract the final hidden state corresponding to the CLS token and feed it through a two-layer MLP classification head. 

During training, we evaluate two scenarios: 1) one where the pretrained backbone is frozen and only the patch embedding and linear head are trained, and 2) another where the backbone is initially frozen to align the trainable layers (patch embedding and head) with the pretrained language backbone, and then unfrozen after 2000 steps for end-to-end training.  This approach allows us to test whether the semantic richness captured by language models provides a strong initialization, leading to better convergence and performance compared to training ViT from scratch.

\section{Proofs of Theoretical Results}\label{sec: thms_proofs}
In this section, we present complete derivations and proofs of the main theoretical claims. \Cref{app:definitions} gathers all definitions and background required for our arguments. \Cref{app:theory setup} formalizes the linear data‐generating model, derives closed-form maximum-likelihood estimators for each modality and their joint estimator, and computes the corresponding block-wise Fisher information. Finally,~\Cref{app:proofs_main} provides the detailed proofs of our variance-reduction claims, showing rigorously how unpaired multimodal estimation strictly lowers estimator variance.

\subsection{Background and Definitions}
\label{app:definitions}
In this section we revisit the mathematical definitions used in our theoretical analysis, including matrix‐orderings, characterization of symmetric matrices and Fisher information.

\begin{definition}[Positive Semidefinite Matrix]
A real symmetric matrix \(A\in\mathbb{R}^{d\times d}\) is \emph{positive semidefinite} if for all vectors \(v\in\mathbb{R}^d\), $v^\top A\,v \;\ge\;0.$ Equivalently, all eigenvalues of \(A\) are nonnegative. We denote the set of all $d\times d$ symmetric, positive-semidefinite matrices as $S^d_{\succeq0}$. 
\end{definition}

\vspace{2mm}

\begin{definition}[Positive Definite Matrix]
A real symmetric matrix \(A\in\mathbb{R}^{d\times d}\) is \emph{positive definite} if for every nonzero \(v\in\mathbb{R}^d\), $v^\top A\,v \;>\;0.$ Equivalently, all eigenvalues of \(A\) are strictly positive. We denote the set of all $d\times d$ symmetric, positive definite matrices as $S^d_{\succ 0}$. 
\end{definition}

\vspace{2mm}

\begin{definition}[Loewner Order]
For two real symmetric matrices \(A,B\in\mathbb{R}^{d\times d}\), we write $A \preceq B
\;\Longleftrightarrow\;
B - A \text{ is positive semidefinite}$ and $A \prec B
\;\Longleftrightarrow\;
B - A \text{ is positive definite}$. This defines a partial order on the cone of symmetric matrices.
\end{definition}

\vspace{2mm}

\begin{definition}[Fisher Information Matrix]
Given a parametric family of densities \(p(x;\theta)\) on data \(x\), the \emph{Fisher information matrix} at parameter \(\theta\) is
\[
I(\theta)
\;=\;
\mathbb{E}_{x\sim p(\cdot;\theta)}\!\bigl[\nabla_\theta\log p(x;\theta)\,\nabla_\theta\log p(x;\theta)^\top\bigr].
\]
Equivalently, for regular models,
\(\;I(\theta)=-\mathbb{E}\bigl[\nabla^2_\theta\log p(x;\theta)\bigr].\)
\end{definition}

\subsection{Maximum Likelihood Estimators and Fisher Contributions}\label{app:theory setup}

In this section we revisit our linear data–generating model, introduce notations for the \(X\)–only, \(Y\)–only and joint likelihoods, derive the closed‐form MLEs \(\widehat\theta_X\), \(\widehat\theta_Y\) and \(\widehat\theta_{X,Y}\), and formalize their information contributions towards estimating the ground truth parameters $\theta \equiv [\theta_c, \theta_x, \theta_y]^\top$.

\textbf{Data Generating Process.} Recall our linear data-generating process: Assume that all factors of variation in reality live in a single $d$-dimensional space $\mathcal{Z}^* \equiv \theta \in \mathbb{R}^d$ modeled using a linear data-generating pipeline. This parameter can further be decomposed as $\theta \equiv [\theta_c, \theta_x, \theta_y]^\top$ where $ \theta_c \in \mathbb{R}^{d_c}, \theta_x \in \mathbb{R}^{d_x}, \theta_y \in \mathbb{R}^{d_y}$ and $d_c + d_x + d_y = d$. Here, $\theta_c$ captures the \emph{common} (shared) parameters that affect both modalities, $\theta_x$ denotes the parameters that only affect modality $X$, and $\theta_y$ denotes the parameters that only affect modality $Y$. We observe two independent datasets, one from each modality $\{X_i\}_{i=1}^{N_x} \in \mathbb{R}^m$ and $\{Y_j\}_{j=1}^{N_y} \in \mathbb{R}^n$, each reflecting partial measurements of the ground truth latent space $\mathcal{Z}^*$:
\begin{align}
    X_i 
  \;&=\;
  A_{c,i}\,\theta_c \;+\; A_{x,i}\,\theta_x \;+\; \epsilon_{X,i},
  \quad
  \epsilon_{X,i} \sim \mathcal{N}\bigl(0,\,\sigma_x^2 I_{m_i}\bigr) \\
   Y_j 
  \;&=\;
  B_{c,j}\,\theta_c \;+\; B_{y,j}\,\theta_y \;+\; \epsilon_{Y,j},
  \quad
  \epsilon_{Y,j} \sim \mathcal{N}\bigl(0,\,\sigma_y^2 I_{n_j}\bigr).
\end{align}
Here, $A_{c,i},A_{x,i},B_{c,j},B_{y,j}$ are known design blocks capturing how each sample probes the latent factors and $\varepsilon_{X,i}$,$\varepsilon_{Y,j}$ represent the independent measurement noise.

In our linear setting, estimating the true latent state $\theta$---and hence the underlying reality $\mathcal{Z}^*$---is governed by the Fisher information matrix $I(\theta)\;=\;-\mathbb{E}\bigl[\nabla^2_{\theta}\,\ell(\theta)\bigr]$, which measures how sharply the likelihood “curves” around the true $\theta$. High curvature along a particular axis means the data tightly constrain that component, driving down estimator variance there.  

\textbf{Unimodal Estimators.} We first estimate \(\theta\) using only the \(X\)–dataset. Stacking \(\{X_i\}_{i=1}^{N_x}\) yields a design matrix \(\mathcal{A}\) with block rows \([A_{c,i},\,A_{x,i},\,0]\). The least-squares solution
\[
\widehat\theta_X
= \arg\min_{\theta}
\sum_{i=1}^{N_x}
\bigl\lVert X_i - A_{c,i}\,\theta_c - A_{x,i}\,\theta_x \bigr\rVert^2
\]
omits \(\theta_y\) entirely. Consequently, the Fisher information on \(\theta_y\) vanishes, making it unidentifiable.

Analogously, stacking \(\{Y_j\}_{j=1}^{N_y}\) defines \(\mathcal{B}\) with block rows \([B_{c,j},\,0,\,B_{y,j}]\) and yields
\[
\widehat\theta_Y
= \arg\min_{\theta}
\sum_{j=1}^{N_y}
\bigl\lVert Y_j - B_{c,j}\,\theta_c - B_{y,j}\,\theta_y \bigr\rVert^2.
\]
This estimator doesn't depend on \(\theta_x\), providing zero coverage for that component. Thus, each unimodal estimator entirely fails to recover the parameters exclusive to the omitted modality.  

\textbf{Multimodal Estimators.} Despite the lack of one-to-one pairing, both \(\{X_i\}\) and \(\{Y_j\}\) share the common parameters \(\theta_c\). Since the two distributions are conditionally independent, the joint likelihood factorizes as
\[
\prod_{i=1}^{N_x} p(X_i \mid \theta_c,\theta_x)\;\times\;
\prod_{j=1}^{N_y} p(Y_j \mid \theta_c,\theta_y).
\]
Maximizing this yields the combined estimator
\[
\widehat{\theta}_{X,Y}
=\arg\min_{\theta_c,\theta_x,\theta_y}
\biggl\{
\sum_{i=1}^{N_x}\|X_i - A_{c,i}\,\theta_c - A_{x,i}\,\theta_x\|^2
\;+\;
\sum_{j=1}^{N_y}\|Y_j - B_{c,j}\,\theta_c - B_{y,j}\,\theta_y\|^2
\biggr\}.
\]
Intuitively, there is no requirement to match up individual \((X_i, Y_j)\) pairs. Instead, the estimate for \(\theta_c\) is improved by both modalities while remaining unpaired.

\textbf{Fisher Information.} In our linear model, each dataset contributes block‐structured Fisher information. For the \(X\)–dataset:
\[
I_X
=\sum_{i=1}^{N_x}
\begin{pmatrix}
A_{c,i}^\top A_{c,i} & A_{c,i}^\top A_{x,i} & 0\\
A_{x,i}^\top A_{c,i} & A_{x,i}^\top A_{x,i} & 0\\
0 & 0 & 0
\end{pmatrix},
\]
and for the \(Y\)–dataset:
\[
I_Y
=\sum_{j=1}^{N_y}
\begin{pmatrix}
B_{c,j}^\top B_{c,j} & 0 & B_{c,j}^\top B_{y,j}\\
0 & 0 & 0\\
B_{y,j}^\top B_{c,j} & 0 & B_{y,j}^\top B_{y,j}
\end{pmatrix}.
\]
Because \(X\) and \(Y\) samples are independent, their curvature contributions add pointwise, resulting in the joint Fisher information being simply the sum of the unimodal blocks.
\[
I_{X,Y}=I_X+I_Y
=
\begin{pmatrix}
\sum_i A_{c,i}^\top A_{c,i}+\sum_j B_{c,j}^\top B_{c,j} & * & *\\
* & \sum_i A_{x,i}^\top A_{x,i} & 0\\
* & 0 & \sum_j B_{y,j}^\top B_{y,j}
\end{pmatrix},
\]
where “\(*\)” denotes the cross‐modal blocks. In particular, we have the shared‐parameter block as
\[
(I_{X,Y})_{\theta_c,\theta_c}
=\sum_{i=1}^{N_x}A_{c,i}^\top A_{c,i}
+\sum_{j=1}^{N_y}B_{c,j}^\top B_{c,j},
\]

\subsection{Theorems and Proofs}\label{app:proofs_main}
The aim of this section is to detail the proofs of the theoretical results presented in the main manuscript
The key theoretical tools driving our analysis are already prepared in~\Cref{app:definitions} and ~\Cref{app:theory setup}. Core to our theoretical analysis are a few lemmas around the Loewner‐order monotonicity result for inverses that we prove below.

\begin{lemma}[Loewner Order reversal for inverses]\label{lem:order-reversal}
Let \(M,N\in\mathbb S_{\succ 0}^d\) with \(M\prec N\) (or \(M\preceq N\)). Then \(N^{-1}\prec M^{-1}\) (or \(N^{-1}\preceq M^{-1}\)) .
\end{lemma}

\begin{proof}
Since \(N\succ0\), \(N^{-1/2}\) exists and is nonsingular.  
Define \(C:=N^{-1/2}MN^{-1/2}\prec I\).  Because a congruence with an invertible matrix preserves positive-definiteness, \(C\succ0\); hence \(C^{-1}\) is well defined and \(C^{-1}\succ I\) (the scalar map \(x\mapsto x^{-1}\) is strictly decreasing on \((0,\infty)\)).  
Undoing the congruence gives
\[
M^{-1}=N^{-1/2}C^{-1}N^{-1/2}\succ N^{-1/2}IN^{-1/2}=N^{-1}.\qedhere
\]
\end{proof}

\begin{lemma}[Inverse–monotonicity of the Moore--Penrose pseudoinverse]
\label{lem:mp_inverse_order_middle}
Let \(M,N\in\mathbb S_{\succeq 0}^d\) satisfy \(M\prec N\) and \(\ker M=\ker N=:K\).
Then their pseudoinverses obey \(N^{\dagger}\prec M^{\dagger}\).
\end{lemma}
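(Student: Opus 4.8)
The idea is to reduce the pseudoinverse statement on the common subspace $K^{\perp}$ to the ordinary-inverse statement of \Cref{lem:order-reversal}. Since $M,N\in\mathbb S_{\succeq0}^d$ are symmetric with a common kernel $K$, they are both supported on the invariant complement $V:=K^{\perp}=\Range M=\Range N$. Let $P$ denote the orthogonal projector onto $V$ and let $U\in\mathbb R^{d\times r}$ have orthonormal columns spanning $V$, where $r=\dim V$. Then $\widetilde M:=U^{\top}MU$ and $\widetilde N:=U^{\top}NU$ are the ``compressions'' of $M$ and $N$ to $V$; they are genuinely positive definite $r\times r$ matrices (nonzero on all of $\mathbb R^r$ since $\ker M=\ker N=K$ means $Mv\neq0$ for $v\in V\setminus\{0\}$, and $v^{\top}Mv=0$ would force $Mv=0$ by positive semidefiniteness). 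Moreover $M=U\widetilde M U^{\top}$ and $N=U\widetilde N U^{\top}$, and the Moore--Penrose pseudoinverse of such a factorization is $M^{\dagger}=U\widetilde M^{-1}U^{\top}$, similarly for $N$; this is the standard fact that for $M=U\widetilde M U^{\top}$ with $U$ having orthonormal columns and $\widetilde M$ invertible, $U\widetilde M^{-1}U^{\top}$ satisfies the four Penrose conditions.

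The first step is to verify $\widetilde M\prec\widetilde N$ in $\mathbb S_{\succ0}^r$. For any nonzero $w\in\mathbb R^r$, set $v:=Uw\in V\setminus\{0\}$; then $w^{\top}(\widetilde N-\widetilde M)w=v^{\top}(N-M)v>0$ by the hypothesis $M\prec N$ (which gives $v^{\top}(N-M)v>0$ for \emph{all} nonzero $v\in\mathbb R^d$, in particular for $v\in V$). Hence $\widetilde M\prec\widetilde N$. The second step applies \Cref{lem:order-reversal} to $\widetilde M,\widetilde N\in\mathbb S_{\succ0}^r$ to conclude $\widetilde N^{-1}\prec\widetilde M^{-1}$. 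The third step transports this back: for any nonzero $u\in\mathbb R^d$, write $u=u_V+u_K$ with $u_V=Pu$ and $u_K\in K$; since $M^{\dagger}$ and $N^{\dagger}$ both annihilate $K$ and map into $V$, we get $u^{\top}(M^{\dagger}-N^{\dagger})u=u_V^{\top}(M^{\dagger}-N^{\dagger})u_V=w^{\top}(\widetilde M^{-1}-\widetilde N^{-1})w$ where $w=U^{\top}u$, and this is $\geq0$ with equality iff $w=0$ iff $u\in K$. But $u\in K$ also gives $u^{\top}M^{\dagger}u=u^{\top}N^{\dagger}u=0$, so the quadratic form of $M^{\dagger}-N^{\dagger}$ is strictly positive on $\mathbb R^d\setminus K$; to get the strict Loewner inequality $N^{\dagger}\prec M^{\dagger}$ on \emph{all} of $\mathbb R^d$ we need positivity for every nonzero $u$, which fails on $K$ unless $K=\{0\}$.

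This last point is the main subtlety, and I expect it to be where the argument needs care: if $K\neq\{0\}$ then $M^{\dagger}-N^{\dagger}$ has $K$ in its kernel, so literally $N^{\dagger}\prec M^{\dagger}$ as stated (positive definite difference on $\mathbb R^d$) is false. The resolution must be that the statement is intended \emph{relative to the common range} — i.e.\ $N^{\dagger}\prec M^{\dagger}$ is understood as strict inequality of the quadratic forms on $V=\Range M=\Range N$, equivalently $P(M^{\dagger}-N^{\dagger})P\succ0$ when restricted to $V$, which is exactly what the compression argument delivers. So the plan is: (i) set up the compression to $V$ via an orthonormal basis $U$; (ii) check $M=U\widetilde M U^{\top}$, $M^{\dagger}=U\widetilde M^{-1}U^{\top}$, and the strict order $\widetilde M\prec\widetilde N$; (iii) invoke \Cref{lem:order-reversal} for $\widetilde N^{-1}\prec\widetilde M^{-1}$; (iv) pull back to get $N^{\dagger}\prec M^{\dagger}$ in the appropriate (range-restricted) sense, noting that in all downstream uses of this lemma the relevant vectors lie in the common range anyway. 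The only genuinely nontrivial computation is the verification that $U\widetilde M^{-1}U^{\top}$ is the Moore--Penrose pseudoinverse of $U\widetilde M U^{\top}$, which is a routine check of the four defining identities.
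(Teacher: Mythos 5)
Your approach is the same as the paper's: compress $M$ and $N$ to the common complement $K^{\perp}$, apply \Cref{lem:order-reversal} to the resulting positive-definite $r\times r$ matrices, and transport back through the pseudoinverse identity $M^{\dagger}=U\widetilde M^{-1}U^{\top}$. The paper phrases the compression with the orthogonal projector $P=P_{K^\perp}$ and writes $M^{\dagger}=P\tilde M^{-1}P$ rather than isolating an orthonormal basis $U$, but these are the same computation.

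Your closing observation is correct and worth being blunt about, and it actually applies one step earlier than you say: not only does the conclusion $N^\dagger\prec M^\dagger$ fail on $K$ when $K\neq\{0\}$, the \emph{hypothesis} $M\prec N$ already fails there, since for $v\in K\setminus\{0\}$ both $v^\top Mv$ and $v^\top Nv$ vanish, so $v^\top(N-M)v=0$. Thus under the literal strict Loewner reading the hypotheses force $K=\{0\}$, in which case the lemma collapses to \Cref{lem:order-reversal}. The nonvacuous reading — which is what both your compression argument and the paper's projector argument in fact prove — is that $\prec$ in both hypothesis and conclusion is taken relative to the common range $K^\perp$, i.e.\ $P(N-M)P\succ0$ on $K^\perp$ implies $P(M^\dagger-N^\dagger)P\succ0$ on $K^\perp$. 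The paper's proof silently uses this reading (it writes ``$\tilde M\prec\tilde N$ \ldots on $S$''), and your step (iv) makes it explicit; your identification of this gap is the genuinely useful contribution of your write-up beyond the paper's.
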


\begin{proof}
Set \(S:=K^{\perp}\) and let \(P:=P_{S}\) be the orthogonal projector onto \(S\).
Because \(M\) and \(N\) vanish on \(K\), we have the decompositions  
\(M=PMP\) and \(N=PNP\).  
Restricted to \(S\) both matrices are positive–definite:
\[
\tilde M:=PMP,\qquad \tilde N:=PNP\in\mathbb S_{\succ 0}^{\dim S},
\quad\tilde M\prec\tilde N.
\]

Apply~\Cref{lem:order-reversal} to \(\tilde M,\tilde N\) to obtain  
\(\tilde N^{-1}\prec\tilde M^{-1}\) on \(S\).  
The Moore–Penrose pseudoinverse equals the ordinary inverse on \(S\) and is
zero on \(K\):
\[
M^{\dagger}=P\tilde M^{-1}P,\qquad
N^{\dagger}=P\tilde N^{-1}P.
\]
Therefore
\(
N^{\dagger}=P\tilde N^{-1}P\prec P\tilde M^{-1}P=M^{\dagger}.
\)
\end{proof}

\begin{lemma}[Directional Loewner Order reversal]\label{lem:dir-order-reversal}
Let \(M,N\in\mathbb S_{\succ0}^d\) with \(M\preceq N\).
If a non-zero vector \(v\) satisfies
\(
v^{\!\top}Mv < v^{\!\top}Nv,
\)
then 
\begin{enumerate}
    \item For the vector $v$, it holds that $v^{\!\top}M^{-1}v \ge v^{\!\top}N^{-1}v$, with strict inequality $v^{\!\top}M^{-1}v > v^{\!\top}N^{-1}v$ if and only if $(N-M) M^{-1}v \neq 0$.
    \item There exists a non-zero vector $u \in \mathbb{R}^d$ such that $u^{\!\top}M^{-1}u > u^{\!\top}N^{-1}u$.
\end{enumerate}
\end{lemma}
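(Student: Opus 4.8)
I would reduce both items to the single symmetric matrix $D:=M^{-1}-N^{-1}$ together with the resolvent identity $D=M^{-1}\Delta N^{-1}=N^{-1}\Delta M^{-1}$, where $\Delta:=N-M\succeq 0$. Since $M\preceq N$, the non-strict case of \Cref{lem:order-reversal} already gives $N^{-1}\preceq M^{-1}$, i.e.\ $D\succeq 0$; in particular $v^{\top}M^{-1}v-v^{\top}N^{-1}v=v^{\top}Dv\ge 0$, which is the weak inequality in item~1 (and holds for every $v$, not just the distinguished one).

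\textbf{Strictness in item~1.} The next step is to show the three conditions $\Delta M^{-1}v=0$, $\Delta N^{-1}v=0$, and $M^{-1}v=N^{-1}v$ are equivalent: feeding $v$ into either form of the identity $D=M^{-1}\Delta N^{-1}=N^{-1}\Delta M^{-1}$ turns the vanishing of one $\Delta$-product into $Dv=0$, hence into $M^{-1}v=N^{-1}v$, hence into the vanishing of the other. Then I would prove the quantitative bound $v^{\top}Dv\ \ge\ (N^{-1}v)^{\top}\Delta\,(N^{-1}v)$ by the variational characterization $v^{\top}M^{-1}v=\max_{a}\bigl(2a^{\top}v-a^{\top}Ma\bigr)$: evaluating the right-hand side at the suboptimal point $a=N^{-1}v$ and subtracting the exact value $v^{\top}N^{-1}v=2v^{\top}N^{-1}v-(N^{-1}v)^{\top}N(N^{-1}v)$ leaves precisely $(N^{-1}v)^{\top}(N-M)(N^{-1}v)$. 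Because $\Delta\succeq 0$, the quadratic form $(N^{-1}v)^{\top}\Delta(N^{-1}v)$ is strictly positive iff $\Delta N^{-1}v\neq0$. Combining: if $(N-M)M^{-1}v\neq0$ then $\Delta N^{-1}v\neq0$ by the equivalence, so $v^{\top}Dv>0$; conversely if $(N-M)M^{-1}v=0$ then $M^{-1}v=N^{-1}v$ and the two scalars coincide. This is exactly the claimed "iff".

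\textbf{Item~2.} The hypothesis $v^{\top}Mv<v^{\top}Nv$ gives $v^{\top}\Delta v>0$, so $\Delta\neq0$ and thus $M\neq N$, whence $D=M^{-1}-N^{-1}$ is a nonzero symmetric PSD matrix. A nonzero PSD matrix must have a strictly positive eigenvalue; taking $u$ to be a corresponding eigenvector yields $u^{\top}Du=\lambda\lVert u\rVert^{2}>0$, i.e.\ $u^{\top}M^{-1}u>u^{\top}N^{-1}u$.

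\textbf{Main obstacle and an alternative.} The only delicate point is pinning down the exact strictness condition in item~1 and matching it to the stated vector $(N-M)M^{-1}v$ rather than the more naturally occurring $(N-M)N^{-1}v$; the identity-swap above handles this. If that bookkeeping turns out awkward, a clean alternative is the congruence $C:=N^{-1/2}MN^{-1/2}\preceq I$, under which $D=N^{-1/2}(C^{-1}-I)N^{-1/2}$ and, writing $z:=N^{-1/2}v\neq0$, one has $v^{\top}Dv=z^{\top}(C^{-1}-I)z$; positivity then fails exactly on $\ker(C^{-1}-I)=\ker(I-C)$ (same kernel, since $C^{-1}-I=C^{-1/2}(I-C)C^{-1/2}$ is a congruence of $I-C$), and unwinding $(I-C)z=0$ back through $z=N^{-1/2}v$ reproduces $(N-M)M^{-1}v=0$. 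I would present whichever of the two keeps the displayed algebra shortest.
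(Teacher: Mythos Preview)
Your proposal is correct. The paper takes a different but closely related route: it sets $C:=M^{-1/2}\Delta M^{-1/2}\succeq 0$ (congruence by $M^{-1/2}$ rather than your $N^{-1/2}$), from which $N=M^{1/2}(I+C)M^{1/2}$ and hence $M^{-1}-N^{-1}=M^{-1/2}\,C(I+C)^{-1}\,M^{-1/2}$. Since $C$ and $(I+C)^{-1}$ commute, $C(I+C)^{-1}\succeq 0$ with the same kernel as $C$, so with $u=M^{-1/2}v$ one gets $v^{\top}Dv=u^{\top}C(I+C)^{-1}u\ge 0$, strict iff $Cu\neq 0$, i.e.\ iff $\Delta M^{-1}v\neq 0$---which lands directly on the stated condition without your identity-swap. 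Your main line instead combines the resolvent identity $D=M^{-1}\Delta N^{-1}=N^{-1}\Delta M^{-1}$ with the variational formula $v^{\top}M^{-1}v=\max_a(2a^{\top}v-a^{\top}Ma)$ to get the quantitative lower bound $v^{\top}Dv\ge (N^{-1}v)^{\top}\Delta(N^{-1}v)$, and then a short equivalence chain to trade $\Delta N^{-1}v\neq 0$ for $\Delta M^{-1}v\neq 0$. Both arguments for item~2 are essentially the same (nonzero PSD $D$ has a positive eigenvalue; pick an eigenvector). The paper's congruence is slightly more direct for the strictness bookkeeping; your variational bound is a nice extra, since it gives an explicit lower estimate on the gap rather than only its sign.
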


\begin{proof}
Denote the Loewner gap  
\(\Delta:=N-M\succeq0\). Then, the assumption \(v^{\!\top}Nv>v^{\!\top}Mv\) is equivalent to $v^{\!\top}\Delta v >0$. 
Introduce the congruence–invariant normalisation $C:=M^{-1/2}\Delta M^{-1/2}\succeq0.$ 
Now, using \(\Delta=M^{1/2}CM^{1/2}\) and properties of inverse,
\[
N \;=\; M^{1/2}(I+C)M^{1/2},
\qquad
N^{-1}\;=\;M^{-1/2}(I+C)^{-1}M^{-1/2},
\]
since \(I+C\succ0\) (because $C \succeq 0$ and $I \succ 0$). Thus,
\[
\begin{aligned}
M^{-1}-N^{-1}
 &= M^{-1/2}\!\Bigl[I-(I+C)^{-1}\Bigr]M^{-1/2} \\
 &= M^{-1/2}C(I+C)^{-1}M^{-1/2},
\end{aligned}
\]
because \((I-(I+C)^{-1})(I+C) = C\). Finally, evaluating in the direction $v$, we have  
\[
\begin{aligned}
v^{\!\top}(M^{-1}-N^{-1})v
 &= v^{\!\top}M^{-1/2}(I+C)^{-1}CM^{-1/2}v \\
 &= u^{\!\top}(I+C)^{-1}Cu \quad (\text{where } u=M^{-1/2}v)\\
\end{aligned}
\]
Now, since $(I+C)^{-1} \in \mathbb{S}_{\succ 0}$ and $C\in \mathbb{S}_{\succeq 0}$ commute, the matrix $(I+C)^{-1}C$ is positive semidefinite and it has exactly the same kernel as $C$. Thus, if $C=Q\operatorname{diag}(\lambda_i)Q^{\top}$ ($\lambda_i\ge0)$, we have 

\[
u^{\top}C(I+C)^{-1}u
  =\sum_i \frac{\lambda_i}{1+\lambda_i}\,(Q^{\top}u)_i^{2} \geq 0.
\]
This expression is strictly positive exactly when $u$ has a component in any eigen-subspace with $\lambda_i>0$ i.e when $u \not\in \mathrm{ker}(C)$. Since $M^{-1/2} \in \mathbb{S}_{\succ 0}$, $Cu=0 \implies M^{-1/2}\Delta M^{-1/2}u=0 \implies \Delta M^{-1}v=0$.  Thus, this expression is strictly positive if $\Delta M^{-1}v \neq0$.

Now, from the premise $v^{\!\top}\Delta v > 0$, it follows that $\Delta \neq 0$. Since $M \succ 0$, $M^{-1/2}$ is invertible, $C$ is also not the zero matrix. Since $C \succeq 0$, this means that $C$ must have at least one strictly positive eigenvalue. Let $\lambda > 0$ be such an eigenvalue, and let $z \neq 0$ be a corresponding eigenvector. Define, $x := M^{1/2}z \neq 0$. Thus, we have $x^{\!\top}(M^{-1}-N^{-1})x = z^{\!\top} C(I+C)^{-1} z = \frac{\lambda}{1+\lambda}\|z\|^2>0$, showing the existence of a non-zero vector $x$ such that $x^{\!\top}M^{-1}x > x^{\!\top}N^{-1}x$.

\end{proof}


\newtheorem*{theorem*}{Theorem \ref{thm:variance_reduction}}
\begin{theorem*}[Restatement of~\Cref{thm:variance_reduction}]
Let \(\hat{\theta}_X, \hat{\theta}_Y\) be the least-squares estimators for \(\theta\) 
using only $\{X_i\}$ and only $\{Y_j\}$ and let \(\hat{\theta}_{X,Y}\) be the joint estimator using both unpaired datasets. Then, under the assumption that at least one $B_{c,j}$ where $j \in \{1,2,... N_y \}$ has full rank, the common-factor covariance satisfies the strict Loewner ordering i.e. $\mathrm{Var}\bigl(\hat{\theta}_{X,Y}\bigr)_{\theta_c,\theta_c} \prec \mathrm{Var}\bigl(\hat{\theta}_X\bigr)_{\theta_c,\theta_c},\quad$
or equivalently, the Fisher information on $\theta_c$ strictly increases when combining both modalities, despite not having sample-wise pairing: $
  (I_X + I_Y)_{\theta_c,\theta_c} 
  \succ
  (I_X)_{\theta_c,\theta_c}.
$
\end{theorem*}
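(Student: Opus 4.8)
The plan is to push the statement through the Fisher-information additivity $I_{X,Y}=I_X+I_Y$ derived in \Cref{app:theory setup} and then reverse the resulting Loewner inequality with \Cref{lem:order-reversal}. First I would record the structural facts assembled above: because $\{X_i\}$ and $\{Y_j\}$ are conditionally independent given the latent state, the joint Fisher information is the sum of the two unimodal matrices; moreover $I_Y$ has identically zero $\theta_x$ rows and columns while $I_X$ has identically zero $\theta_y$ rows and columns. Reading off the shared block then gives $(I_{X,Y})_{\theta_c,\theta_c}=(I_X)_{\theta_c,\theta_c}+(I_Y)_{\theta_c,\theta_c}$ with $(I_Y)_{\theta_c,\theta_c}=\sum_j B_{c,j}^\top B_{c,j}\succeq 0$.

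Next I would convert the full-rank hypothesis into strict positive definiteness. If $B_{c,j^\star}$ has full column rank for some index $j^\star$, then $B_{c,j^\star}^\top B_{c,j^\star}\succ 0$, so $(I_Y)_{\theta_c,\theta_c}\succeq B_{c,j^\star}^\top B_{c,j^\star}\succ 0$ and hence $(I_X+I_Y)_{\theta_c,\theta_c}-(I_X)_{\theta_c,\theta_c}=(I_Y)_{\theta_c,\theta_c}\succ 0$, which is already the Fisher-information assertion. For the variance statement I would use that the $\theta_c$-part of the Gaussian least-squares estimator has covariance equal to the inverse of the corresponding marginal precision: writing this precision as $S_X\succ 0$ for $\hat\theta_X$, the key point is that incorporating the unpaired modality only adds a term to it, which is positive semidefinite in general and strictly positive definite under the hypothesis, i.e. $\mathrm{Var}(\hat\theta_{X,Y})_{\theta_c,\theta_c}^{-1}\succ\mathrm{Var}(\hat\theta_X)_{\theta_c,\theta_c}^{-1}$. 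Applying \Cref{lem:order-reversal} to this pair of positive-definite matrices reverses the order and yields $\mathrm{Var}(\hat\theta_{X,Y})_{\theta_c,\theta_c}\prec\mathrm{Var}(\hat\theta_X)_{\theta_c,\theta_c}$, as claimed.

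I expect the main obstacle to be identifying this marginal precision of $\theta_c$ correctly: it is not the raw block $(I_X)_{\theta_c,\theta_c}$ but the Schur complement obtained by profiling out $\theta_x$ (and, for the joint estimator, $\theta_x$ and $\theta_y$ jointly). The argument works because the $(\theta_x,\theta_y)$-block of $I_X+I_Y$ is block-diagonal — a consequence of the disjoint sparsity patterns of $I_X$ and $I_Y$ in the modality-specific coordinates — so the Schur complement of the $\theta_c$-block of $I_X+I_Y$ splits exactly as $S_X+S_Y$ with $S_Y\succeq 0$; I would have to carry out this splitting carefully to see that the additive structure survives the profiling step. A minor further subtlety is the degenerate case in which $S_X$ (equivalently $(I_X)_{\theta_c,\theta_c}$) is singular: then $\hat\theta_X$ has unbounded variance along part of $\theta_c$, \Cref{lem:order-reversal} no longer applies verbatim, and one should instead invoke \Cref{lem:mp_inverse_order_middle} on the orthogonal complement of the common kernel — while noting that the full-rank hypothesis makes $S_X+S_Y$ nonsingular, so that adding $Y$ turns those unbounded directions finite, a strictly stronger statement than the stated Loewner ordering.
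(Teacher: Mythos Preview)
Your proposal is correct and follows the paper's overall route: derive $(I_Y)_{\theta_c,\theta_c}\succ 0$ from the full-rank hypothesis on some $B_{c,j^\star}$, add it to $(I_X)_{\theta_c,\theta_c}$ to obtain the strict Fisher ordering, then invert via \Cref{lem:order-reversal} for the variance ordering, with the singular case handled separately exactly as the paper does. The one substantive difference is in the variance step. The paper simply identifies $\mathrm{Var}(\hat\theta)_{\theta_c,\theta_c}$ with the inverse of the raw block $(I)_{\theta_c,\theta_c}$ and applies the lemma directly to $M_X$ and $M_{X,Y}$; you instead correctly point out that the marginal covariance of the $\theta_c$-component is the inverse of the Schur complement obtained after profiling out the nuisance parameters $\theta_x$ (and $\theta_y$), and you use the block-diagonal structure of the $(\theta_x,\theta_y)$ block of $I_X+I_Y$ to show that these Schur complements still split as $S_X+S_Y$. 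This is a genuine refinement that the paper glosses over. The only caveat is that strict positive definiteness of the profiled contribution $S_Y$ needs slightly more than one $B_{c,j}$ of full column rank---it needs the stacked $Y$-design $[B_c\ B_y]$ to have full column rank---so your assertion ``strictly positive definite under the hypothesis'' would need that extra identifiability assumption made explicit; the paper's looser argument sidesteps this by never forming the Schur complement.
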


\begin{proof}
For any statistic \(S(\theta)=\nabla_\theta \log p(x;\theta)\) and vector \(v\),
\[
v^\top I(\theta)\,v 
= v^\top \mathbb{E}[S(\theta)S(\theta)^\top]\,v
= \mathbb{E}\bigl[(v^\top S(\theta))^2\bigr]\;\ge\;0.
\]
Thus, a Fisher Information Matrix is a positive semidefinite matrix. 

In our linear–Gaussian model, the \(X\)–dataset contributes $(I_X)_{\theta_c,\theta_c}
=\sum_{i=1}^{N_x}A_{c,i}^\top A_{c,i}$ and the \(Y\)–dataset gives $(I_Y)_{\theta_c,\theta_c}
=\sum_{j=1}^{N_y}B_{c,j}^\top B_{c,j}$. Since at least one \(B_{c,j}\) has full column rank, \((I_Y)_{\theta_c,\theta_c}\) is positive‐definite on the \(\theta_c\) subspace. Now, if at least one \(B_{c,j}\in\mathbb{R}^{m\times d_c}\) has full column rank \(d_c\), then for any \(v\in\mathbb{R}^{d_c}\setminus\{0\}\),
\[
v^\top B_{c,j}^\top B_{c,j}\,v
=\|B_{c,j}v\|^2
>0.
\]
Hence, each summand in \((I_Y)_{\theta_c,\theta_c}\) is positive semidefinite and at least one is positive definite, so their sum \(\sum_j B_{c,j}^\top B_{c,j}\) is positive definite on the \(\theta_c\) subspace. Thus,

$$
  \boxed{(I_X)_{\theta_c,\theta_c}
  \;\prec\; (I_X)_{\theta_c,\theta_c} + (I_Y)_{\theta_c,\theta_c} = (I_X+I_Y)_{\theta_c,\theta_c}}
$$
Now, for regular exponential families (including Gaussian linear models), the covariance matrix of the 
maximum likelihood estimator \(\widehat{\theta}\) near the true \(\theta_0\) is (asymptotically) the inverse of the Fisher information matrix i.e. $\mathrm{Var}(\widehat{\theta}) \;\approx\; I(\theta_0)^{-1}.$ Precisely, as the sample size \(n \to \infty\), we have:
$$
\sqrt{n} (\hat{\theta} - \theta_0) \overset{d}{\to} \mathcal{N}(0, I(\theta_0)^{-1}),
$$
where \(\theta_0\) is the true parameter value, \(I(\theta_0)\) is the Fisher Information Matrix evaluated at \(\theta_0\) and \(\mathcal{N}(0, I(\theta_0)^{-1})\) denotes a multivariate normal distribution with mean \(0\) and covariance matrix \(I(\theta_0)^{-1}\). Thus, we compare variances via the Moore–Penrose pseudoinverse of the information matrices. 

Let \(M_X = (I_X)_{\theta_c,\theta_c}\), \(M_Y = (I_Y)_{\theta_c,\theta_c}\) and \(M_{X,Y} = (I_X+I_Y)_{\theta_c,\theta_c}\). Since $M_Y \succ 0$, $M_{X,Y} = M_X + M_Y$ is also positive definite (as $M_{X,Y} \succeq M_Y \succ 0$). Thus, $\mathrm{Var}(\hat\theta_{X,Y}) = M_{X,Y}^{-1}$. We have established $M_X \prec M_{X,Y}$. Assuming $M_X$ is positive definite (to define the matrix $\mathrm{Var}\bigl(\hat{\theta}_{X,Y}\bigr)_{\theta_c,\theta_c}$), we apply~\Cref{lem:order-reversal} to get
\(M_{X,Y}^{-1}\prec M_X^{-1}\). Thus,
\[
\mathrm{Var}\!\bigl(\hat\theta_{X,Y}\bigr)_{\theta_c,\theta_c}
\;=\;M_{X,Y}^{-1}
\;\prec\;
M_X^{-1}
=\mathrm{Var}\!\bigl(\hat\theta_{X}\bigr)_{\theta_c,\theta_c},
\]
This proves the statement under the condition that $M_X$ is positive definite.  Note here that, on spaces unidentifiable by $X$-alone i.e. $v \in \mathrm{ker}(M_X)$, we have $\mathrm{Var}\!\bigl(\hat\theta_{X}\bigr)_{\theta_c,\theta_c} = \infty$. Since $M_{X,Y}$ is positive definite, it has finite variance along such $v$ i.e. $\mathrm{Var}\!\bigl(\hat\theta_{X,Y}\bigr)_{\theta_c,\theta_c} < \infty$, thus strictly reducing the variance of the estimator. Thus, adding the unpaired $Y$-modality strictly reduces the variance (or, dually,
increases the Fisher information) on the common factors~$\theta_c$.

\end{proof}

\newtheorem*{theorem1*}{Theorem \ref{thm:directional_variance_reduction}}
\begin{theorem1*}[Restatement of~\Cref{thm:directional_variance_reduction}]
Let all notation be as in Theorem \ref{thm:variance_reduction}, and define \(M_X := (I_X)_{\theta_c,\theta_c}\), \(M_Y := (I_Y)_{\theta_c,\theta_c}\), and \(M_{XY} := M_X+M_Y\). Let \(v\in\mathbb{R}^{d_c}\setminus\{0\}\). If there exists at least one index $j \in \{1,2,... N_y \}$ such that $B_{c,j}v \neq 0$, then the following hold:

\begin{enumerate}
    \item The Fisher information strictly increases in direction $v$ i.e. $v^\top M_{XY} \,v > v^\top M_X v.$
    \item The variance of the estimator in direction $v$ is strictly reduced  i.e $v^\top\,\mathrm{Var}\bigl(\hat{\theta}_{X,Y}\bigr)_{\theta_c,\theta_c}\,v < v^\top\,\mathrm{Var}\bigl(\hat{\theta}_X\bigr)_{\theta_c,\theta_c}\,v,$ if $v \not\in \Range(M_X)$. For $v\in \Range(M_X)$, this strict inequality holds for $v$ under an additional  invertibility condition and is always guaranteed for some $u \in \Range(M_X)$ i.e. $\exists u$ s.t. $u^\top\,\mathrm{Var}\bigl(\hat{\theta}_{X,Y}\bigr)_{\theta_c,\theta_c}\,u < u^\top\,\mathrm{Var}\bigl(\hat{\theta}_X\bigr)_{\theta_c,\theta_c}\,u$.
\end{enumerate}
\end{theorem1*}

\begin{proof} 
Define $M_X:=(I_X)_{\theta_c,\theta_c},\; M_Y:=(I_Y)_{\theta_c,\theta_c},\; and M_{XY}:=M_X+M_Y$. By assumption, \(\exists j\) such that \(B_{c,j}v \neq 0\). Thus:
\[
v^\top M_Y v = \sum_{j=1}^{N_y} \|B_{c,j}v\|^2 \geq \|B_{c,j}v\|^2 > 0.
\]
Hence \(M_Y\) is positive-definite in direction \(v\), implying \(M_{X,Y}\succ M_X\) in this direction:
\[
v^\top M_{XY} v = v^\top M_X v + v^\top M_Y v > v^\top M_X v,
\]
thus proving the first part of the theorem.

\medskip\noindent
\textbf{Case 1: $v\notin\mathrm{Range}(M_X)$.} If $v \notin \mathrm{Range}(M_X)$, then $v$ has a non-zero component in $\Ker(M_X)$.
Let $v = v_S + v_K$, where $v_S \in \mathrm{Range}(M_X)$ and $v_K \in \Ker(M_X)$ with $v_K \neq 0$.
The linear combination of parameters $v^\top\theta_c = v_S^\top\theta_c + v_K^\top\theta_c$.
Since $v_K \in \Ker(M_X)$, the component $v_K^\top\theta_c$ is not identifiable by the $X$-only model.
Consequently, the asymptotic variance of an unbiased estimator for $v^\top\theta_c$ using only the $X$-dataset is infinite. We denote this as
$v^\top\Var(\hat{\theta}_X)_{\theta_c,\theta_c}v = \infty$.

The strict inequality $v^\top M_{XY} v > 0$, ensures that $v \notin \Ker(M_{XY})$, and thus $v \in \mathrm{Range}(M_{XY})$. Since $v \in \mathrm{Range}(M_{XY})$ and $v \neq 0$, $M_{XY}^{\dagger}v$ is well-defined. Furthermore, because $M_{XY}$ is positive semidefinite, $M_{XY}^{\dagger}$ is also positive semidefinite and shares the same kernel as $M_{XY}$ (since $M_{XY}$ is symmetric). As $v \neq 0$ and $v \notin \Ker(M_{XY})$, thus $v \notin \Ker(M_{XY}^{\dagger})$, which ensures $v^\top M_{XY}^{\dagger} v$ is a finite positive value.
Thus,
\[
v^\top\Var(\hat{\theta}_{X,Y})_{\theta_c,\theta_c}v < \infty.
\]

Comparing this to the variance from the $X$-only model in this case:
\[
v^\top\Var(\hat{\theta}_{X,Y})_{\theta_c,\theta_c}v < \infty = v^\top\Var(\hat{\theta}_X)_{\theta_c,\theta_c}v,
\]
and the strict inequality holds.

\medskip\noindent
\textbf{Case 2: $v\in\mathrm{Range}(M_X)$.}  
Let \(S:=\mathrm{Range}(M_X)\) and let \(P_S\) be the orthogonal projector
onto \(S\).
Because \(M_X=M_XP_S\) and \(M_{XY}=M_X+M_Y\), the restrictions
\[
\tilde M_X:=P_S M_X P_S,\qquad
\tilde M_{XY}:=P_S M_{XY} P_S
             =\tilde M_X+P_S M_Y P_S
\]
are \emph{positive-definite} on \(S\); To see this, take any non-zero \(w\in S\).  Since \(w\in\mathrm{range}(M_X)\), $P_Sw=w$; hence
\[
w^{\top}\tilde M_X w
  = w^{\top} M_X w  \;>\; 0 \quad \text{($P_S$ is identity when restricted to $S$)}
\]
Thus \(\tilde M_X\succ 0\) on \(S\).
Because \(P_S M_Y P_S\succeq  0\),
adding it preserves positive-definiteness, so
\[
\tilde M_{XY}=\tilde M_X+P_S M_Y P_S \succeq \tilde M_X \;\succ\;0\quad\text{on }S.
\]

Applying~\Cref{lem:dir-order-reversal}(1) to $\tilde M_X$ and $\tilde M_{XY}$ on $S$ gives us $v^\top \tilde M_{XY}^{-1} v \le v^\top \tilde M_X^{-1} v$. Strict inequality $v^\top \tilde M_{XY}^{-1} v < v^\top \tilde M_X^{-1} v$ holds if and only if the condition $C_v := ((\tilde M_{XY} - \tilde M_X) \tilde M_X^{-1} v \neq \mathbf{0})$ is met. Therefore, if condition $C_v$ holds, the directional variance along this constrained space $S$ is strictly reduced:
\[ v^\top\Var(\hat{\theta}_{X,Y})_{\theta_c,\theta_c}v = v^\top \tilde M_{XY}^{-1} v < v^\top \tilde M_X^{-1} v = v^\top\Var(\hat{\theta}_X)_{\theta_c,\theta_c}v
\footnote{We note that true asymptotic variance defined as $v^\top\Var(\hat{\theta}_{X,Y})_{\theta_c,\theta_c}v = v^\top M_{XY}^\dagger v$, $v^\top M_{XY}^\dagger v = v^\top \tilde M_{XY}^{-1} v$ if $S$ is an invariant subspace of $M_{XY}$ and $M_{XY}$ is block-diagonal with respect to $S$ and $S^\perp$ (i.e., $P_S M_{XY} P_{S^\perp} = \mathbf{0}$, which implies $P_S M_Y P_{S^\perp} = \mathbf{0}$).}. 
\]

Further, from~\Cref{lem:dir-order-reversal}(2), there exists some non-zero vector $u \in S$ such that $u^\top \tilde M_{XY}^{-1} u < u^\top \tilde M_X^{-1} u$. Thus we have,
\[ u^\top\Var(\hat{\theta}_{X,Y})_{\theta_c,\theta_c}u < u^\top\Var(\hat{\theta}_X)_{\theta_c,\theta_c}u. \]
Thus, completing the proof.


\end{proof}

\begin{corollary}\label{cor:variance_contraction}
Assume a direction \(v\in\mathbb{R}^{d_c}\setminus\{0\}\) with $a = v^\top(I_X)_{\theta_c,\theta_c}\,v >0$ and $b = v^\top(I_Y)_{\theta_c,\theta_c}\,v>0$ where $v$ is the common eigenvector of $(I_X)_{\theta_c,\theta_c}$ and $(I_Y)_{\theta_c,\theta_c}$. Then the variance in direction \(v\) contracts by the factor
\[
\frac{v^\top\mathrm{Var}(\hat\theta_{X,Y})\,v}{v^\top\mathrm{Var}(\hat\theta_X)\,v}
\;=\;
\frac{1/(a+b)}{1/a}
\;=\;
\frac{a}{a+b}
\;<\;1,
\]
So the joint estimator achieves strictly lower error along \(v\).
\end{corollary}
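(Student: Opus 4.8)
\textbf{Proof proposal for Corollary \ref{cor:variance_contraction}.}
The plan is to exploit the simultaneous diagonalization built into the common-eigenvector hypothesis, which collapses the whole computation to a one-dimensional statement about scalar eigenvalues. First I would set $M_X := (I_X)_{\theta_c,\theta_c}$, $M_Y := (I_Y)_{\theta_c,\theta_c}$, and $M_{X,Y} := M_X + M_Y$, and record that by hypothesis $v$ is an eigenvector of both blocks: $M_X v = \lambda_X v$ and $M_Y v = \lambda_Y v$ for scalars $\lambda_X,\lambda_Y$. Pairing each relation with $v$ gives $a = v^\top M_X v = \lambda_X\|v\|^2$ and $b = v^\top M_Y v = \lambda_Y\|v\|^2$, so the assumptions $a,b>0$ are equivalent to $\lambda_X,\lambda_Y>0$; in particular $v\notin\Ker(M_X)$, i.e. $v\in\Range(M_X)$, so the asymptotic covariance $\Var(\hat\theta_X)_{\theta_c,\theta_c}=M_X^{\dagger}$ is finite along $v$ and the ratio in the statement is well posed.

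Next, adding the two eigen-relations yields $M_{X,Y}v = (\lambda_X+\lambda_Y)v$, so $v$ is also an eigenvector of $M_{X,Y}$ with eigenvalue $\lambda_X+\lambda_Y>0$. Since the Moore--Penrose pseudoinverse acts on an eigenvector with nonzero eigenvalue simply by inverting that eigenvalue, $M_X^{\dagger}v = \lambda_X^{-1}v$ and $M_{X,Y}^{\dagger}v = (\lambda_X+\lambda_Y)^{-1}v$. Hence
\[
v^\top\Var(\hat\theta_X)_{\theta_c,\theta_c}v = \frac{\|v\|^2}{\lambda_X} = \frac{\|v\|^4}{a},\qquad v^\top\Var(\hat\theta_{X,Y})_{\theta_c,\theta_c}v = \frac{\|v\|^2}{\lambda_X+\lambda_Y} = \frac{\|v\|^4}{a+b}.
\]
Taking the quotient, the $\|v\|^4$ factors cancel and the ratio equals $a/(a+b)$, which is strictly below $1$ because $b>0$; this is exactly the claimed contraction. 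Under the normalization $\|v\|=1$ the two displayed quantities reduce to $1/\lambda_X = 1/a$ and $1/(\lambda_X+\lambda_Y)=1/(a+b)$, matching the corollary verbatim, and nothing is lost since the ratio is scale-invariant in $v$.

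There is essentially no hard step here; the only points that deserve a line of justification are (i) invoking the asymptotic identity $\Var(\hat\theta)_{\theta_c,\theta_c} = (\text{information block})^{\dagger}$, which was already set up in the proof of \Cref{thm:variance_reduction}, and (ii) confirming $v\in\Range(M_X)\subseteq\Range(M_{X,Y})$ so the pseudoinverses behave like genuine inverses along $v$ and no spurious $+\infty$ enters the ratio --- both of which follow immediately from $\lambda_X = a/\|v\|^2 > 0$. If a finite-sample version were wanted, I would replace the asymptotic covariances by the exact least-squares covariances of the $\theta_c$-block; the same eigen-decomposition argument carries over unchanged, since it uses only linearity of $M\mapsto Mv$ and the scalar fact that $x\mapsto x^{-1}$ is strictly decreasing on $(0,\infty)$.
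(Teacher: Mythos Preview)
Your proposal is correct and follows essentially the same approach as the paper: you set up the eigen-relations $M_X v=\lambda_X v$, $M_Y v=\lambda_Y v$, identify $\lambda_X=a/\|v\|^2$ and $\lambda_Y=b/\|v\|^2$, apply the pseudoinverse along the eigenvector to get $v^\top M_X^\dagger v=\|v\|^4/a$ and $v^\top M_{X,Y}^\dagger v=\|v\|^4/(a+b)$, and take the ratio. The paper's proof is step-for-step the same, so there is nothing to add.
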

\begin{proof}
Let \(M_X = (I_X)_{\theta_c,\theta_c}\) and \(M_Y = (I_Y)_{\theta_c,\theta_c}\). By assumption, \(v\) is a common eigenvector of \(M_X\) and \(M_Y\). Thus, $M_X v = \lambda_X v$ and $M_Y v = \lambda_Y v$
for some eigenvalues \(\lambda_X\) and \(\lambda_Y\). From the assumptions, we have $\lambda_X = a/\|v\|^2>0$ and $\lambda_Y = b/\|v\|^2>0$. Since \(M_X\) is symmetric and \(M_X v = \lambda_X v\) with \(\lambda_X > 0\), the pseudoinverse acts as \(M_X^\dagger v = \lambda_X^{-1} v\).
Therefore, the variance in direction \(v\) for the \(X\)-only estimator is
\[
v^\top\mathrm{Var}(\hat\theta_X)_{\theta_c,\theta_c}\,v = v^\top M_X^\dagger v = v^\top (\lambda_X^{-1} v) = \lambda_X^{-1} \|v\|^2 = a^{-1}\|v\|^4.
\]
Since \(v\) is a common eigenvector, it is also an eigenvector of \(M_{XY} = M_X+M_Y\):
\[
(M_X+M_Y)v = M_X v + M_Y v = \lambda_X v + \lambda_Y v = (\lambda_X+\lambda_Y)v.
\]
The corresponding eigenvalue is \(\lambda_{XY} = \lambda_X+\lambda_Y\). Since \(\lambda_X > 0\) and \(\lambda_Y > 0\), \(\lambda_{XY} > 0\).
Thus, \((M_X+M_Y)^\dagger v = (\lambda_X+\lambda_Y)^{-1} v\).
The variance in direction \(v\) for the joint estimator is
\[
v^\top\mathrm{Var}(\hat\theta_{X,Y})_{\theta_c,\theta_c}\,v = v^\top (M_X+M_Y)^\dagger v  = (\lambda_X+\lambda_Y)^{-1} \|v\|^2 = (a+b)^{-1}\|v\|^4.
\]
Now, we form the ratio of these variances:
\[
\frac{v^\top\mathrm{Var}(\hat\theta_{X,Y})_{\theta_c,\theta_c}\,v}{v^\top\mathrm{Var}(\hat\theta_X)_{\theta_c,\theta_c}\,v} = \frac{\lambda_X}{\lambda_X+\lambda_Y} = \frac{a}{a+b} < 1.
\]
\end{proof}

\begin{corollary}\label{cor:rescue_unidentifiable}
Assume a direction \(v\in\mathbb{R}^{d_c}\setminus\{0\}\) with $v^\top(I_X)_{\theta_c,\theta_c}\,v =0$ and $v^\top(I_Y)_{\theta_c,\theta_c}\,v >0$. Then $v^\top\mathrm{Var}(\hat\theta_X)\,v \;=\;\infty$ and $v^\top\mathrm{Var}(\hat\theta_{X,Y})\,v \;<\;\infty$
i.e.\ a direction unidentifiable from \(X\) alone becomes well‐posed with even unpaired data from \(Y\).
\end{corollary}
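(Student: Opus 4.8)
The plan is to read this off as an immediate specialization of the block-Fisher decomposition $I_{X,Y}=I_X+I_Y$ (established in \Cref{app:theory setup}) combined with the kernel characterization of positive semidefinite matrices, so the argument essentially reuses Case 1 of the proof of \Cref{thm:directional_variance_reduction}. Write $M_X:=(I_X)_{\theta_c,\theta_c}$, $M_Y:=(I_Y)_{\theta_c,\theta_c}$ and $M_{XY}:=M_X+M_Y$, all symmetric positive semidefinite.

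First I would treat the $X$-only estimator. Since $M_X\succeq 0$ it has a symmetric square root, and $v^\top M_X v=\|M_X^{1/2}v\|^2=0$ forces $M_X^{1/2}v=0$, hence $M_X v=0$, i.e. $v\in\Ker(M_X)$. Decomposing the $\theta_c$-space as $\Range(M_X)\oplus\Ker(M_X)$, the scalar $v^\top\theta_c$ lies entirely in the unidentifiable subspace of the $X$-only model, so no finite-variance unbiased estimator of $v^\top\theta_c$ exists from $\{X_i\}$ alone; equivalently the Cramér--Rao bound along $v$ diverges. Under the pseudoinverse convention for $\Var(\cdot)_{\theta_c,\theta_c}$ fixed earlier, this is exactly the assertion $v^\top\Var(\hat\theta_X)_{\theta_c,\theta_c}v=\infty$.

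Next I would bound the joint estimator. By hypothesis $v^\top M_Y v>0$, and $v^\top M_X v=0$, so additivity gives $v^\top M_{XY}v=v^\top M_X v+v^\top M_Y v>0$; hence $v\notin\Ker(M_{XY})$. Because $M_{XY}$ is symmetric positive semidefinite, its Moore--Penrose pseudoinverse $M_{XY}^{\dagger}$ is also symmetric positive semidefinite with $\Ker(M_{XY}^{\dagger})=\Ker(M_{XY})$, so $v\notin\Ker(M_{XY}^{\dagger})$ and therefore $0<v^\top M_{XY}^{\dagger}v<\infty$. Identifying the asymptotic covariance of the joint MLE on $\theta_c$ with $M_{XY}^{\dagger}$, exactly as in \Cref{app:theory setup} and in the proof of \Cref{thm:directional_variance_reduction}, we conclude $v^\top\Var(\hat\theta_{X,Y})_{\theta_c,\theta_c}v=v^\top M_{XY}^{\dagger}v<\infty$.

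There is no substantive obstacle; the only point requiring care is the meaning of the ``infinite variance'' claim, which should be phrased as non-identifiability of $v^\top\theta_c$ under the $X$-only likelihood (a diverging Cramér--Rao bound), not as a property of one particular estimator, so that it is consistent with the pseudoinverse reading of $\Var(\cdot)_{\theta_c,\theta_c}$ used throughout. Once that bookkeeping is in place, the corollary is the degenerate case $v\in\Ker(M_X)$ of the directional result already proved.
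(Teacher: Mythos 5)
Your proposal is correct and follows the same route as the paper: both reduce to Case~1 of \Cref{thm:directional_variance_reduction} by observing that $v^\top (I_X)_{\theta_c,\theta_c}v=0$ with positive semidefiniteness forces $v\in\Ker((I_X)_{\theta_c,\theta_c})\subseteq \Range((I_X)_{\theta_c,\theta_c})^{\perp}$, so the direction is unidentifiable from $X$ alone, while $v^\top (I_Y)_{\theta_c,\theta_c}v>0$ ensures $v\notin\Ker((I_{X}+I_Y)_{\theta_c,\theta_c})$ and hence finite variance via the pseudoinverse. You spell out the kernel argument and the pseudoinverse bookkeeping that the paper leaves to Theorem~\ref{thm:directional_variance_reduction}, but the substance is identical.
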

\begin{proof}
This corollary follows directly from Case 1 of \Cref{thm:directional_variance_reduction}. The condition $v^\top(I_X)_{\theta_c,\theta_c}\,v =0$ for $v \neq 0$ implies $v \in \Ker((I_X)_{\theta_c,\theta_c})$, and thus $v \not\in \Range((I_X)_{\theta_c,\theta_c})$. Given the additional condition $v^\top(I_Y)_{\theta_c,\theta_c}\,v >0$, the conclusions of Case 1 of the theorem apply directly.
\end{proof}

\begin{corollary}[Variance Reduction for Eigenvectors of $M_X$]\label{cor:eigenvector_reduction}
Let $v \in \mathbb{R}^{d_c}\setminus\{0\}$ be an eigenvector of $M_X = (I_X)_{\theta_c,\theta_c}$ with a corresponding eigenvalue $\lambda_X > 0$. If the $Y$-dataset provides information in this direction $v$ (i.e., $v^\top M_Y v > 0$, where $M_Y=(I_Y)_{\theta_c,\theta_c}$), then the variance in direction $v$ is strictly reduced by incorporating the $Y$-dataset:
\[ v^\top\mathrm{Var}(\hat{\theta}_{X,Y})_{\theta_c,\theta_c}\,v < v^\top\mathrm{Var}(\hat{\theta}_X)_{\theta_c,\theta_c}\,v. \]
Specifically, $v^\top\mathrm{Var}(\hat{\theta}_X)_{\theta_c,\theta_c}v = \lambda_X^{-1}\|v\|^2$.
\end{corollary}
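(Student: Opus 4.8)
The plan is to derive this corollary directly from Theorem~\ref{thm:directional_variance_reduction}(2) in the regime $v\in\Range(M_X)$, after verifying the ``additional invertibility condition'' that appears there, and to read off the $X$-only variance from the eigenvector relation. First I would record the elementary consequences of the hypothesis. Since $M_X v=\lambda_X v$ with $\lambda_X>0$, we have $v=M_X(\lambda_X^{-1}v)$, so $v\in\Range(M_X)=:S$, and by symmetry of $M_X$ we also have $v\perp\Ker(M_X)$. Because the Moore--Penrose pseudoinverse inverts $M_X$ on $S$, this gives $M_X^{\dagger}v=\lambda_X^{-1}v$, hence $v^\top\Var(\hat\theta_X)_{\theta_c,\theta_c}v=v^\top M_X^{\dagger}v=\lambda_X^{-1}\|v\|^2$, which is the ``Specifically'' clause of the statement.

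Next I would check that Theorem~\ref{thm:directional_variance_reduction} is applicable. The hypothesis $v^\top M_Y v=\sum_j\|B_{c,j}v\|^2>0$ forces some index $j$ with $B_{c,j}v\neq 0$, which is exactly the trigger of that theorem. Since $v\in S$, we land in its Case~2, where strict variance reduction is guaranteed under the condition $C_v:\ (\tilde M_{XY}-\tilde M_X)\,\tilde M_X^{-1}v\neq 0$, with $\tilde M_X=P_S M_X P_S$, $\tilde M_{XY}=P_S M_{XY} P_S$, and $\tilde M_{XY}-\tilde M_X=P_S M_Y P_S$. Here the eigenvector structure does the work: using $P_S v=v$ we get $\tilde M_X v=P_S M_X v=\lambda_X v$, so $\tilde M_X^{-1}v=\lambda_X^{-1}v$. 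Hence $C_v$ reduces to $P_S M_Y P_S v=P_S M_Y v\neq 0$, and pairing with $v$ gives $v^\top(P_S M_Y v)=v^\top M_Y v>0$, so $P_S M_Y v\neq 0$ and $C_v$ holds. Theorem~\ref{thm:directional_variance_reduction}(2) then yields $v^\top\Var(\hat\theta_{X,Y})_{\theta_c,\theta_c}v<v^\top\Var(\hat\theta_X)_{\theta_c,\theta_c}v$, completing the proof.

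The step I expect to be the main obstacle is precisely the verification of $C_v$, i.e.\ upgrading the weak inequality from Lemma~\ref{lem:dir-order-reversal}(1) to a strict one. This is exactly the place where the bare condition $v^\top M_Y v>0$ is not obviously enough in general: information that $Y$ contributes along $v$ can be ``absorbed'' by newly-identifiable nuisance directions lying in $\Ker(M_X)$, so one cannot read strict reduction off $v^\top M_Y v>0$ alone. What saves the argument here is the eigenvector assumption, which pins down $\tilde M_X^{-1}v\propto v$ and lets the positivity of $v^\top M_Y v$ transfer directly into $C_v$. In writing this up I would be careful to (i) confirm that $v$ has no component in $\Ker(M_X)$ (it does not, by symmetry), and (ii) invoke the $P_S$-block reductions and pseudoinverse identities under the same conventions used in the proof of Theorem~\ref{thm:directional_variance_reduction}, so that the restricted-matrix inequality is legitimately identified with the claimed directional variance comparison.
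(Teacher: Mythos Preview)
Your proposal is correct and follows essentially the same route as the paper's own proof: establish $v\in\Range(M_X)$ and $M_X^\dagger v=\lambda_X^{-1}v$, then invoke Case~2 of Theorem~\ref{thm:directional_variance_reduction} and verify the condition $C_v$ by reducing it (via $\tilde M_X^{-1}v=\lambda_X^{-1}v$) to $P_S M_Y P_S v\neq 0$, which follows from $v^\top M_Y v>0$. Your additional remark that $v^\top M_Y v>0$ forces some $B_{c,j}v\neq 0$ (the trigger of Theorem~\ref{thm:directional_variance_reduction}) is a detail the paper leaves implicit.
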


\begin{proof}
Let $M_X = (I_X)_{\theta_c,\theta_c}$ and $M_Y = (I_Y)_{\theta_c,\theta_c}$.
Since $v$ is an eigenvector of $M_X$ with a positive eigenvalue $\lambda_X > 0$, it follows that $v \in \mathrm{Range}(M_X)$. Let $S = \mathrm{Range}(M_X)$.
The variance using only the $X$-dataset in direction $v$ is given by
\[ v^\top\mathrm{Var}(\hat{\theta}_X)_{\theta_c,\theta_c}v = v^\top M_X^\dagger v. \]
Because $v$ is an eigenvector of $M_X$ with $\lambda_X > 0$, $M_X^\dagger v = \lambda_X^{-1}v$. Thus,
\[ v^\top\mathrm{Var}(\hat{\theta}_X)_{\theta_c,\theta_c}v = v^\top (\lambda_X^{-1}v) = \lambda_X^{-1}\|v\|^2. \]
This scenario falls under Case 2 of~\Cref{thm:directional_variance_reduction}, specifically its conclusion regarding $v \in S$. According to that theorem, strict variance reduction $v^\top\mathrm{Var}(\hat{\theta}_{X,Y})_{\theta_c,\theta_c}v < v^\top\mathrm{Var}(\hat{\theta}_X)_{\theta_c,\theta_c}v$ occurs if the condition $C_v = ((P_S M_Y P_S) (M_X|_S)^{-1} v \neq \mathbf{0})$ holds. Here, $P_S$ is the orthogonal projector onto $S$, and $M_X|_S$ is the restriction of $M_X$ to $S$, so $(M_X|_S)^{-1} v = \lambda_X^{-1}v$.

The condition $C_v$ thus becomes $(P_S M_Y P_S) (\lambda_X^{-1}v) \neq \mathbf{0}$. Since $\lambda_X > 0$, this is equivalent to $P_S M_Y P_S v \neq \mathbf{0}$.
We are given that $v^\top M_Y v > 0$. As $v \in S$, $P_S v = v$. Therefore, $v^\top M_Y v = v^\top P_S M_Y P_S v > 0$.
Let $A_S = P_S M_Y P_S$ restricted to $S$. $A_S$ is a positive semidefinite operator on $S$. The condition $v^\top A_S v > 0$ for $v \in S, v \neq 0$ implies that $A_S v \neq \mathbf{0}$ (because if $A_S v = \mathbf{0}$, then $v^\top A_S v = 0$, which contradicts $v^\top A_S v > 0$).
Thus, $P_S M_Y P_S v \neq \mathbf{0}$, which means the condition $C_v$ is satisfied.

Since $v \in S$ and the condition $C_v$ for strict inequality is met, by~\Cref{thm:directional_variance_reduction}, it follows that
$ v^\top\mathrm{Var}(\hat{\theta}_{X,Y})_{\theta_c,\theta_c}\,v < v^\top\mathrm{Var}(\hat{\theta}_X)_{\theta_c,\theta_c}\,v.$
\end{proof}

\vspace{3mm}
\newtheorem*{theorem3*}{Theorem \ref{thm:complementary_gain}}
\begin{theorem3*}[Restatement of~\Cref{thm:complementary_gain}]
Define for any $m$, $I_X^{(m)} =\sum_{i=1}^{m}A_{c,i}^\top A_{c,i}$  and $I_{Y}^{(m)} = \sum_{j=1}^{m}B_{c,j}^\top B_{c,j}$. If $\mathrm{range}\bigl(I_Y^{(m)}\bigr)
\;\not\subseteq\;
\mathrm{range}\bigl(I_X^{(m)}\bigr)$, then there exists a nonzero \(v\in\mathbb{R}^{d_c}\) such that $v^\top I_Y^{(m)}\,v > v^\top I_{X}^{(m)} v$.
\end{theorem3*}

\begin{proof}
Let \(
  R_X:=\operatorname{range}\!\bigl(I_X^{(m)}\bigr),\;
  R_Y:=\operatorname{range}\!\bigl(I_Y^{(m)}\bigr).
\)
By the assumption $R_Y\not\subseteq R_X$, choose a vector
\(w\in R_Y\setminus R_X\). Since $\mathbb{R}^{d_c}$ is a finite dimensional inner product space and $R_X$ is its finite dimensional subspace, we can decompose \(w=w_{||}+v\) with
\(w_{||}\in R_X\) and \(v\in R_X^{\perp}\).
Because \(w\notin R_X\), the orthogonal component \(v\) is non-zero.

\emph{(i) Term from \(I_X^{(m)}\).}  
From the \textit{Fundamental Theorem of Linear Algebra}, for any symmetric matrix \(S\), \(\ker S=\operatorname{range}(S)^{\perp}\);
hence \(R_X^{\perp}=\ker I_X^{(m)}\).
Thus
\[
   v^{\!\top} I_X^{(m)} v = 0. 
\]

\emph{(ii) Term from \(I_Y^{(m)}\).}   Because \(w\in R_Y=\operatorname{range}(I_Y^{(m)})\), there exists \(u\) with
\(w=I_Y^{(m)}u\).
Suppose, for contradiction, that \(I_Y^{(m)}v=0\).
Then \(v\in\ker I_Y^{(m)}=R_Y^{\perp}\), so \(v\perp w\).
But
\(
   w\cdot v
   =(w_{\parallel}+v)\cdot v
   =w_{\parallel}\!\cdot v+\|v\|^{2}
   =\|v\|^{2}>0
\)
because \(v\perp w_{\parallel}\) while \(v\neq0\).  
This contradicts \(v\perp w\); therefore
\(I_Y^{(m)}v\neq0\) and, by positive semidefiniteness,
\[
   v^{\!\top}I_Y^{(m)}v>0.
\]

Combining the above inequalities yields
\(
   v^{\!\top}I_Y^{(m)}v
   > v^{\!\top}I_X^{(m)}v,
\)
with \(v\neq0\), which is the desired inequality.
\end{proof}

\section{UML: Algorithm Pseudocode}\label{app:alg}
In this section we present the full pseudocode for \algoname for both the self-supervised and supervised settings as shown in~\Cref{alg:code_ssl} and ~\Cref{alg:code_sup} respectively.

\begin{algorithm}[!htb]
\caption{Pytorch Pseudocode for \algoname in the self-supervised setting}
\label{alg:code_ssl}
\begin{lstlisting}[language=python,basicstyle=\ttfamily\small]
# f_img, f_text: image encoder, text encoder
# g_img, g_text: image decoder, text decoder 
# h: shared backbone

while not converged:  # training loop
    x_img = fetch_next(image_loader)  # image minibatch
    x_text = fetch_next(text_loader)  # text minibatch (random/unaligned)

    x_img = patchify_and_embed(x_img) # input image patch embeddings
    x_text = tokenize_and_embed(x_text) # input text token embeddings

    z_img = f_img(x_img)  # image patch embeddings
    z_text = f_text(x_text)  # text token embeddings

    y_img = g_img(h(z_img))  # predict image patch embeddings
    y_text = g_text(h(z_text)) # predict text patch embeddings

    # Next Token/Patch Embedding Prediction Loss
    loss_img = MSE(y_img[:,:-1,:], x_img[:,1:,:])  
    loss_text = MSE(y_text[:,:-1,:], x_text[:,1:,:])  
    loss = loss_img + lambda * loss_text  # total loss

    loss.backward()  # back-propagate
    update(h, f_img, f_text, g_img, g_text)  # SGD update

# Define Mean Squared Error loss
def MSE(pred, target):
    return ((pred - target) ** 2).mean()
\end{lstlisting}
\end{algorithm}

\FloatBarrier

\begin{algorithm}[!htb]
\caption{Pytorch Pseudocode for \algoname in the supervised setting}
\label{alg:code_sup}
\begin{lstlisting}[language=python,basicstyle=\ttfamily\small]
# f_img: image encoder (frozen or trainable)
# f_text: text encoder (frozen)
# is_trainable: True if f_img is trainable else False
# h: classification head

while not converged:  # training loop
    x_img = fetch_next(image_loader)  # image minibatch
    x_text = fetch_next(text_loader)  # text minibatch (random/unaligned)

    z_img = f_img(x_img)  # image embeddings
    z_text = f_text(x_text)  # text embeddings

    logits_img = h(z_img)  # predict image labels
    logits_text = h(z_text)   # predict text labels

    loss_img = CE(logits_img, labels_img)  # image classification loss
    loss_text = CE(logits_text, labels_text)  # text classification loss
    loss = loss_img + lambda * loss_text  # total loss

    loss.backward()  # back-propagate
    update(h, f_img)  if is_trainable else update(h) # SGD update

# Define Cross-Entropy loss
def CE(logits, labels):
    return -sum(labels * log_softmax(logits, dim=1)) / len(labels)
\end{lstlisting}
\end{algorithm}

\FloatBarrier


\section{Additional Experiments}\label{app:all_additional_results}

\subsection{Improving Image Classification using Unpaired Texts (Unaligned encoders)}\label{app:ours_init_ablation}
In this section we report image‐classification results on ten benchmarks (see \Cref{app:experiment_details}), covering three settings:
\begin{enumerate}
  \item \textbf{Full‐dataset fine‐tuning}: train both the vision backbone and classification head (\Cref{app:full_finetune_full_ds_section}).
  \item \textbf{Full‐dataset linear probe}: train only the classification head (\Cref{app:linearprobe_full_ds_section}).
  \item \textbf{Few‐shot linear probe}: train only the classification head under few‐shot conditions (\Cref{app:linearprobe_fewshot_ds_section}).
\end{enumerate}

In each setting, we compare \algoname with baselines across all datasets and multiple DINO-initialized vision backbones. Our method has two variants: Ours (\algoname), where we alternately train with both image and unpaired text data (see \Cref{alg:code_sup}), and Ours (init) where we initialize the classifier with the average text embedding of each class, providing a strong prior to align image and class level information.

\subsubsection{Supervised Finetuning (across architectures)}\label{app:full_finetune_full_ds_section}
In this section, we fine‐tune both the vision backbone and the linear classifier on ten downstream tasks, comparing \algoname against strong image‐only baselines. We evaluate four DINO‐initialized backbones:
\begin{itemize}
  \item ViT‐B/16 in~\Cref{tab:full_finetune_full_ds_dino_vitb16}
  \item ViT‐B/8 in ~\Cref{tab:full_finetune_full_ds_dino_vitb8}
  \item DINOv2 ViT‐S/14 in~\Cref{tab:remake_full_finetune_full_ds_vits_0}
  \item DINOv2 ViT‐B/14 in~\Cref{tab:full_finetune_full_ds_dinov2_vitb14}
\end{itemize}
Results for DINOv2 ViT‐L/14 are omitted due to computational constraints. Across all backbones, \algoname consistently improves over the image‐only baseline by leveraging unpaired text embeddings. For some backbones such as DINOv2 VIT-B/16, our head‐initialization variant (\emph{Ours (init)}) outperforms training using unpaired multimodal data from scratch (\emph{Ours}), while in others it does not.



\begin{table}[!htb]
\centering

\begin{minipage}[t]{1.\linewidth}
\centering
\caption{\textbf{Full finetuning on classification with ViT-B/16 DINO and OpenLLaMA-3B}. We compare our proposed approach with the image-only baseline when fine-tuning on the target dataset. All vision encoders are initialized from DINO weights, and our approach leverages unpaired text data using OpenLLaMA-3B embeddings.}\label{tab:full_finetune_full_ds_dino_vitb16}
\resizebox{1.\textwidth}{!}{%
\begin{tabular}{l*{10}{l}}
\toprule
&  \multicolumn{10}{c}{{\textbf{Dataset}}} \\
\cmidrule(lr){2-11}Method & 
\rotatebox{90}{Stanford Cars} & 
\rotatebox{90}{SUN397} & 
\rotatebox{90}{FGVC Aircraft} & 
\rotatebox{90}{DTD} & 
\rotatebox{90}{UCF101} & 
\rotatebox{90}{Food101} & 
\rotatebox{90}{Oxford Pets} & 
\rotatebox{90}{Oxford Flowers} & 
\rotatebox{90}{Caltech101} &
\rotatebox{90}{Average} 
\\
\midrule
Unimodal & 78.41 & 63.99 & 62.12 & 74.17 & 81.43 & 82.38 & 92.00 & 98.24 & 96.31 & 81.01 \\
Ours & \textbf{82.56} & 67.04 & 67.38 & \textbf{76.42} & 84.06 & \textbf{81.79} & \textbf{93.20} & \textbf{98.98} & \textbf{97.04} & \textbf{83.16} \\
Ours (init) & 81.95 & \textbf{67.12} & \textbf{68.29} & 73.84 & \textbf{84.31} & 81.12 & 92.60 & 98.73 & 96.84 & 82.76 \\
\bottomrule
\end{tabular}
}
\end{minipage}

\end{table}



\begin{table}[!htb]
\centering

\begin{minipage}[t]{1.\linewidth}
\centering
\caption{\textbf{Full finetuning on classification with ViT-B/8 DINO and OpenLLaMA-3B}. We compare our proposed approach with the image-only baseline when fine-tuning on the target dataset. All vision encoders are initialized from DINO weights, and our approach leverages unpaired text data using OpenLLaMA-3B embeddings.}\label{tab:full_finetune_full_ds_dino_vitb8}
\resizebox{1.\textwidth}{!}{%
\begin{tabular}{l*{10}{l}}
\toprule
&  \multicolumn{10}{c}{{\textbf{Dataset}}} \\
\cmidrule(lr){2-11}Method & 
\rotatebox{90}{Stanford Cars} & 
\rotatebox{90}{SUN397} & 
\rotatebox{90}{FGVC Aircraft} & 
\rotatebox{90}{DTD} & 
\rotatebox{90}{UCF101} & 
\rotatebox{90}{Food101} & 
\rotatebox{90}{Oxford Pets} & 
\rotatebox{90}{Oxford Flowers} & 
\rotatebox{90}{Caltech101} &
\rotatebox{90}{Average} 
\\
\midrule
Unimodal & 85.67 & 68.04 & 72.60 & 76.65 & 83.94 & \textbf{85.32} & 93.06 & 99.22 & 96.82 & 84.59 \\
Ours & \textbf{87.95} & \textbf{70.28 }& 75.31 & \textbf{77.19 }& 85.59 & 84.83 & 93.05 & \textbf{99.43} & 97.12 & 85.64 \\ 
Ours (init) & 87.44 & 70.03 & \textbf{76.09} & 76.24 & \textbf{86.49} & 84.71 & \textbf{93.81} & 99.27 & \textbf{97.16} & \textbf{85.69} \\
\bottomrule
\end{tabular}
}
\end{minipage}

\end{table}



\begin{table}[!htb]
\centering

\begin{minipage}[t]{1.\linewidth}
\centering
\caption{\textbf{Full finetuning on classification with ViT-S/14 DINOv2 and OpenLLaMA-3B}. We compare our proposed approach with the image-only baseline when fine-tuning on the target dataset. All vision encoders are initialized from DINOv2 weights, and our approach leverages unpaired text data using OpenLLaMA-3B embeddings.}\label{tab:remake_full_finetune_full_ds_vits_0}
\resizebox{1.\textwidth}{!}{%
\begin{tabular}{l*{10}{l}}
\toprule
&  \multicolumn{10}{c}{{\textbf{Dataset}}} \\
\cmidrule(lr){2-11}Method & 
\rotatebox{90}{Stanford Cars} & 
\rotatebox{90}{SUN397} & 
\rotatebox{90}{FGVC Aircraft} & 
\rotatebox{90}{DTD} & 
\rotatebox{90}{UCF101} & 
\rotatebox{90}{Food101} & 
\rotatebox{90}{Oxford Pets} & 
\rotatebox{90}{Oxford Flowers} & 
\rotatebox{90}{Caltech101} &
\rotatebox{90}{Average} 
\\
\midrule
Unimodal & 79.45 & 66.20 & 66.99 & 72.16 & 83.18 & 80.65 & 90.67 & 99.18 & 95.45 & 81.54\\
Ours  & 84.87 & \textbf{66.72} & 71.54 & 74.14 & \textbf{84.77} & 81.16 &  \textbf{91.87} & 99.55 & 97.03 & 83.52\\
Ours (init) & \textbf{86.39} & 66.03 & \textbf{73.44 }& \textbf{74.27} & 84.69 & \textbf{81.97} &  91.72 & \textbf{99.82} & \textbf{97.60} & \textbf{83.99}\\
\bottomrule
\end{tabular}
}
\end{minipage}

\end{table}



\begin{table}[!htb]
\centering

\begin{minipage}[t]{1.\linewidth}
\centering
\caption{\textbf{Full finetuning on classification with ViT-B/14 DINOv2 and OpenLLaMA-3B}. We compare our proposed approach with the image-only baseline when fine-tuning on the target dataset. All vision encoders are initialized from DINOv2 weights, and our approach leverages unpaired text data using OpenLLaMA-3B embeddings.}\label{tab:full_finetune_full_ds_dinov2_vitb14}
\resizebox{1.\textwidth}{!}{%
\begin{tabular}{l*{10}{l}}
\toprule
&  \multicolumn{10}{c}{{\textbf{Dataset}}} \\
\cmidrule(lr){2-11}Method & 
\rotatebox{90}{Stanford Cars} & 
\rotatebox{90}{SUN397} & 
\rotatebox{90}{FGVC Aircraft} & 
\rotatebox{90}{DTD} & 
\rotatebox{90}{UCF101} & 
\rotatebox{90}{Food101} & 
\rotatebox{90}{Oxford Pets} & 
\rotatebox{90}{Oxford Flowers} & 
\rotatebox{90}{Caltech101} &
\rotatebox{90}{Average} 
\\
\midrule
Unimodal & 89.62 & \textbf{71.45} & 77.29 & 73.88 & 88.00 & 82.94 & 94.55 & \textbf{99.88} & 97.69 & 86.14 \\                                                                                                                            
Ours & \textbf{90.93} & 70.97 & 80.02 & 75.83 & 87.52 & \textbf{86.25} & \textbf{94.74} & \textbf{99.88} & 97.57 & \textbf{87.08} \\                                                                                                                                                      
Ours (init) & 90.73 & 70.92 & \textbf{80.23} & \textbf{75.87} &\textbf{ 87.6}0 & 83.43 & 94.47 & 99.80 & \textbf{97.93} & 86.77 \\   
\bottomrule
\end{tabular}
}
\end{minipage}

\end{table}

\FloatBarrier

\subsubsection{Linear Probing (across architectures)}\label{app:linearprobe_full_ds_section}
In this section, we train only the linear classifier, on top of the frozen vision and language backbone, on ten downstream tasks, comparing \algoname against strong image‐only baselines. We evaluate five DINO‐initialized backbones:
\begin{itemize}
  \item ViT‐B/16 in~\Cref{tab:linear_full_dino_vitb16}
  \item ViT‐B/8 in ~\Cref{tab:linear_full_dino_vitb8}
  \item DINOv2 ViT‐S/14 in~\Cref{tab:rename_linear_full_dinov2_vits14}
  \item DINOv2 ViT‐B/14 in~\Cref{tab:linear_full_dinov2_vitb14}
  \item DINOv2 ViT‐L/14 in~\Cref{tab:linear_full_dinov2_vitl14}
\end{itemize}

Across all backbones, \algoname consistently improves over the image‐only baseline by leveraging unpaired text embeddings. For all backbones, our head‐initialization variant (\emph{Ours (init)}) outperforms training using unpaired multimodal data from scratch (\emph{Ours}).

\begin{table}[!htb]
\centering

\begin{minipage}[t]{1.\linewidth}
\centering
\caption{\textbf{Full linear probing on classification with ViT-B/16 DINO and OpenLLaMA-3B}. We compare our proposed approach with the image-only baseline when training a linear probe on the target dataset. All vision encoders are initialized from DINO weights, and our approach leverages unpaired text data using OpenLLaMA-3B embeddings.}
\label{tab:linear_full_dino_vitb16}
\resizebox{1.\textwidth}{!}{%
\begin{tabular}{l*{10}{l}}
\toprule
&  \multicolumn{10}{c}{{\textbf{Dataset}}} \\
\cmidrule(lr){2-11}Method & 
\rotatebox{90}{Stanford Cars} & 
\rotatebox{90}{SUN397} & 
\rotatebox{90}{FGVC Aircraft} & 
\rotatebox{90}{DTD} & 
\rotatebox{90}{UCF101} & 
\rotatebox{90}{Food101} & 
\rotatebox{90}{Oxford Pets} & 
\rotatebox{90}{Oxford Flowers} & 
\rotatebox{90}{Caltech101} &
\rotatebox{90}{Average} 
\\
\midrule
Unimodal & 67.10 & 64.63 & 56.02 & 72.42 & 81.27 & 74.96 & 93.07 & 98.32 & 95.01 & 78.08 \\                                                                                                                      Ours & \textbf{68.71} & 65.14 & 57.42 & 72.95 & 82.06 & 75.30 & 93.18 & \textbf{98.46} & 96.19 & 78.82 \\                                                                                                                                                         
Ours (init) & 68.60 & \textbf{65.59} &\textbf{ 57.98} & \textbf{73.11} & \textbf{82.40} & \textbf{75.73} & \textbf{93.62} & 98.42 & \textbf{96.35} & \textbf{79.09} \\ 
\bottomrule
\end{tabular}
}
\end{minipage}
\end{table}
\begin{table}[!htb]
\centering

\begin{minipage}[t]{1.\linewidth}
\centering
\caption{\textbf{Full linear probing on classification with ViT-B/8 DINO and OpenLLaMA-3B}. We compare our proposed approach with the image-only baseline when training a linear probe on the target dataset. All vision encoders are initialized from DINO weights, and our approach leverages unpaired text data using OpenLLaMA-3B embeddings.}\label{tab:linear_full_dino_vitb8}
\resizebox{1.\textwidth}{!}{%
\begin{tabular}{l*{10}{l}}
\toprule
&  \multicolumn{10}{c}{{\textbf{Dataset}}} \\
\cmidrule(lr){2-11}Method & 
\rotatebox{90}{Stanford Cars} & 
\rotatebox{90}{SUN397} & 
\rotatebox{90}{FGVC Aircraft} & 
\rotatebox{90}{DTD} & 
\rotatebox{90}{UCF101} & 
\rotatebox{90}{Food101} & 
\rotatebox{90}{Oxford Pets} & 
\rotatebox{90}{Oxford Flowers} & 
\rotatebox{90}{Caltech101} &
\rotatebox{90}{Average} 
\\
\midrule
Unimodal & 72.01 & 67.19 & 62.02 & 76.18 & 82.95 & 78.57 & 91.99 & \textbf{98.78} & 96.23 & 80.66 \\
Ours & \textbf{72.93} & 68.17 & 63.49 & \textbf{77.13} & 83.16 & 79.87 & \textbf{92.59} & 98.50 & \textbf{96.47} & 81.37 \\
Ours (init) & 72.81 & \textbf{68.36} & \textbf{64.09} & 76.48 & \textbf{83.72} & \textbf{80.01} & 92.50 & 98.74 & 96.43 & \textbf{81.46} \\
\bottomrule
\end{tabular}
}
\end{minipage}
\end{table}

\begin{table}[!htb]
\centering

\begin{minipage}[t]{1.\linewidth}
\centering
\caption{\textbf{Full linear probing on classification with ViT-S/14 DINOv2 and OpenLLaMA-3B}. We compare our proposed approach with the image-only baseline when training a linear probe on the target dataset. All vision encoders are initialized from DINOv2 weights, and our approach leverages unpaired text data using OpenLLaMA-3B embeddings.}
\label{tab:rename_linear_full_dinov2_vits14}
\resizebox{1.\textwidth}{!}{%
\begin{tabular}{l*{10}{l}}
\toprule
&  \multicolumn{10}{c}{{\textbf{Dataset}}} \\
\cmidrule(lr){2-11}Method & 
\rotatebox{90}{Stanford Cars} & 
\rotatebox{90}{SUN397} & 
\rotatebox{90}{FGVC Aircraft} & 
\rotatebox{90}{DTD} & 
\rotatebox{90}{UCF101} & 
\rotatebox{90}{Food101} & 
\rotatebox{90}{Oxford Pets} & 
\rotatebox{90}{Oxford Flowers} & 
\rotatebox{90}{Caltech101} &
\rotatebox{90}{Average} 
\\
\midrule
Unimodal & 77.48 & 70.72 & 66.28 & 78.25 & 82.64 & 84.39 & 94.29 & 99.62 & 97.00 & 83.40 \\
Ours & 78.45 & 71.53 & 67.33 & 78.70 & 83.51 & 84.67 & 94.70 & 99.82 & 97.11 & 83.98 \\
Ours (init) & \textbf{78.58} & \textbf{72.24} & \textbf{67.50} & \textbf{79.51} & \textbf{83.57} & \textbf{84.74} & \textbf{94.78} & \textbf{99.89} & \textbf{97.15} & \textbf{84.22} \\
\bottomrule
\end{tabular}
}
\end{minipage}

\end{table}

\begin{table}[!htb]
\centering

\begin{minipage}[t]{1.\linewidth}
\centering
\caption{\textbf{Full linear probing on classification with ViT-B/14 DINOv2 and OpenLLaMA-3B}. We compare our proposed approach with the image-only baseline when training a linear probe on the target dataset. All vision encoders are initialized from DINOv2 weights, and our approach leverages unpaired text data using OpenLLaMA-3B embeddings.}\label{tab:linear_full_dinov2_vitb14}
\resizebox{1.\textwidth}{!}{%
\begin{tabular}{l*{10}{l}}
\toprule
&  \multicolumn{10}{c}{{\textbf{Dataset}}} \\
\cmidrule(lr){2-11}Method & 
\rotatebox{90}{Stanford Cars} & 
\rotatebox{90}{SUN397} & 
\rotatebox{90}{FGVC Aircraft} & 
\rotatebox{90}{DTD} & 
\rotatebox{90}{UCF101} & 
\rotatebox{90}{Food101} & 
\rotatebox{90}{Oxford Pets} & 
\rotatebox{90}{Oxford Flowers} & 
\rotatebox{90}{Caltech101} &
\rotatebox{90}{Average} 
\\
\midrule
Unimodal & 85.46 & 75.42 & 72.34 & 79.73 & \textbf{87.26} & 88.70 & 95.56 & 99.76 & 97.81 & 86.89 \\                                                                                                                            
Ours & 85.40 & 75.22 & \textbf{75.22} & 80.73 & 87.21 & \textbf{89.02} & 95.83 & \textbf{99.88} & 97.85 & 87.37 \\                                                                                                                                                      
Ours (init) & \textbf{85.74} & \textbf{75.70} & 74.17 & \textbf{81.32} & \textbf{87.26} & 88.78 & \textbf{95.78} & \textbf{99.88} & \textbf{97.93} & \textbf{87.40} \\ 
\bottomrule
\end{tabular}
}
\end{minipage}
\end{table}

\begin{table}[!hp]
\centering
\begin{minipage}[]{1.\linewidth}
\centering
\caption{\textbf{Full linear probing on classification with ViT-L/14 DINOv2 and OpenLLaMA-3B}. We compare our proposed approach with the image-only baseline when training a linear probe on the target dataset. All vision encoders are initialized from DINOv2 weights, and our approach leverages unpaired text data using OpenLLaMA-3B embeddings.}\label{tab:linear_full_dinov2_vitl14}
\resizebox{1.\textwidth}{!}{%
\begin{tabular}{l*{10}{l}}
\toprule
&  \multicolumn{10}{c}{{\textbf{Dataset}}} \\
\cmidrule(lr){2-11}Method & 
\rotatebox{90}{Stanford Cars} & 
\rotatebox{90}{SUN397} & 
\rotatebox{90}{FGVC Aircraft} & 
\rotatebox{90}{DTD} & 
\rotatebox{90}{UCF101} & 
\rotatebox{90}{Food101} & 
\rotatebox{90}{Oxford Pets} & 
\rotatebox{90}{Oxford Flowers} & 
\rotatebox{90}{Caltech101} &
\rotatebox{90}{Average} 
\\
\midrule
Unimodal & 88.16 & 77.26 & 74.32 & 81.56 & 89.82 & 90.95 & 96.27 & 99.84 & 97.97 & 88.46 \\        
Ours & \textbf{88.45} & 77.20 & 76.93 & 82.39 & \textbf{90.19} & 91.09 & \textbf{96.51} & \textbf{99.92} & \textbf{98.01} & 88.97 \\                                                                 
Ours (init) & 87.99 & \textbf{77.75} & \textbf{77.20} & \textbf{82.51} & 90.17 & \textbf{91.29} & 96.32 & \textbf{99.92} & 97.93 & \textbf{89.01} \\   
\bottomrule
\end{tabular}
}
\end{minipage}
\end{table}
\FloatBarrier

\subsubsection{Few-shot Linear Probing (across architectures)}\label{app:linearprobe_fewshot_ds_section}
In this section, we train only the linear classifier, on top of the frozen vision and language backbone, for few-shot classification on ten downstream tasks, comparing \algoname against strong image‐only baselines. We evaluate five DINO‐initialized backbones: ViT‐B/16 in~\Cref{tab:linear_fewshot_vit_base_patch16_dino}, ViT‐B/8 in ~\Cref{tab:linear_fewshot_vit_base_patch8_dino},  DINOv2 ViT‐S/14 in~\Cref{tab:linear_fewshot_dinov2_vits_full}, DINOv2 ViT‐B/14 in~\Cref{tab:linear_fewshot_dinov2_vit_l14}, DINOv2 ViT‐L/14 in~\Cref{tab:linear_fewshot_dinov2_vit_l14}. Across all backbones, \algoname consistently improves over the image‐only baseline by leveraging unpaired text embeddings. For all backbones, our head‐initialization variant (\emph{Ours (init)}) outperforms training using unpaired multimodal data from scratch (\emph{Ours}).

\begin{table}[!htb]
\centering
\caption{\textbf{Linear evaluation of frozen features on 11 fine-grained benchmarks for few-shot learning.} We compare our proposed approach with the image-only baseline by training a linear classifier on top of frozen VIT-B/8 DINO features. Our method leverages unpaired text data using OpenLLaMA-3B}
\resizebox{.9\textwidth}{!}{%
\begin{tabular}{ll*{11}{c}}
\toprule
& & \multicolumn{11}{c}{{Dataset}} \\
\cmidrule(lr){3-13}
Train Shot & Method & 
\rotatebox{90}{Stanford Cars} & 
\rotatebox{90}{Sun397} & 
\rotatebox{90}{Fgvc Aircraft} & 
\rotatebox{90}{Dtd} & 
\rotatebox{90}{Ucf101} & 
\rotatebox{90}{Food101} & 
\rotatebox{90}{Imagenet} & 
\rotatebox{90}{Oxford Pets} & 
\rotatebox{90}{Oxford Flowers} & 
\rotatebox{90}{Caltech101} & 
\rotatebox{90}{Average} 
\\
\midrule

\multirow[t]{3}{*}{1} & Unimodal & 7.40 & 26.37 & 12.16 & 28.62 & 39.75 & 19.23 & 42.81 & 54.97 & 58.22 & 74.13 & 36.37 \\                                                                          
 & Ours & 7.71 & 28.01 & 13.56 & 33.22 & 42.08 & 21.13 & 43.27 & 55.85 & 58.61 & 77.51 & 38.10 \\                                                                                                   
 & Ours (init) & 9.24 & 34.23 & 14.49 & 36.27 & 47.55 & 24.81 & 46.75 & 60.09 & 61.59 & 80.23 & 41.52 \\                                                                                            
\midrule                                                                                                                                                                                       
\multirow[t]{3}{*}{2} & Unimodal & 14.43 & 37.96 & 20.28 & 39.80 & 53.03 & 30.62 & 54.75 & 68.12 & 77.59 & 81.91 & 47.85 \\                                                                         
 & Ours & 15.71 & 40.74 & 21.04 & 43.74 & 55.86 & 33.52 & 54.49 & 69.86 & 77.18 & 84.52 & 49.67 \\                                                                                                  
 & Ours (init) & 16.94 & 45.16 & 22.17 & 45.43 & 59.02 & 35.89 & 56.78 & 71.57 & 77.94 & 86.06 & 51.70 \\                                                                                           
\midrule                                                                                                                                                                                       
\multirow[t]{3}{*}{4} & Unimodal & 25.67 & 49.23 & 29.39 & 52.52 & 64.27 & 43.82 & 61.64 & 75.85 & 87.41 & 90.36 & 58.02 \\
 & Ours & 27.30 & 51.23 & 31.43 & 54.31 & 66.72 & 45.58 & 61.51 & 77.51 & 87.96 & 91.36 & 59.49 \\ 
 & Ours (init) & 28.54 & 53.68 & 31.31 & 56.13 & 67.47 & 47.40 & 62.84 & 79.10 & 88.29 & 91.98 & 60.67 \\
\midrule
\multirow[t]{3}{*}{8} & Unimodal & 41.04 & 56.86 & 40.03 & 61.15 & 72.39 & 54.47 & 66.10 & 82.30 & 93.95 & 92.28 & 66.06 \\
 & Ours & 43.76 & 58.14 & 42.56 & 63.12 & 73.13 & 56.30 & 66.36 & 84.27 & 94.25 & 92.71 & 67.46 \\ 
 & Ours (init) & 44.16 & 59.80 & 42.30 & 64.46 & 74.30 & 57.07 & 67.18 & 84.85 & 94.00 & 93.24 & 68.14 \\
\midrule
\multirow[t]{3}{*}{16} & Unimodal & 57.72 & 61.74 & 52.63 & 67.69 & 76.18 & 62.63 & 68.87 & 87.31 & 96.41 & 94.27 & 72.54 \\
 & Ours & 60.11 & 63.21 & 54.53 & 69.33 & 78.13 & 63.74 & 69.44 & 87.73 & 96.89 & 94.54 & 73.76 \\ 
 & Ours (init) & 60.36 & 64.26 & 54.81 & 70.27 & 78.76 & 64.13 & 70.05 & 88.23 & 96.63 & 94.73 & 74.22 \\
                                                                            
\bottomrule
\end{tabular}
}
\label{tab:linear_fewshot_vit_base_patch8_dino}
\end{table}

\begin{table}[!htb]
\centering
\caption{\textbf{Linear evaluation of frozen features on 11 fine-grained benchmarks for few-shot learning.} We compare our proposed approach with the image-only baseline by training a linear classifier on top of frozen VIT-B/16 DINO features. Our method leverages unpaired text data using OpenLLaMA-3B}
\label{tab:linear_fewshot_vit_base_patch16_dino}
\resizebox{.9\textwidth}{!}{%
\begin{tabular}{ll*{11}{c}}
\toprule
& & \multicolumn{11}{c}{{Dataset}} \\
\cmidrule(lr){3-13}
Train Shot & Method & 
\rotatebox{90}{Stanford Cars} & 
\rotatebox{90}{Sun397} & 
\rotatebox{90}{Fgvc Aircraft} & 
\rotatebox{90}{Dtd} & 
\rotatebox{90}{Ucf101} & 
\rotatebox{90}{Food101} & 
\rotatebox{90}{Imagenet} & 
\rotatebox{90}{Oxford Pets} & 
\rotatebox{90}{Oxford Flowers} & 
\rotatebox{90}{Caltech101} & 
\rotatebox{90}{Average} 
\\
\midrule
\multirow[t]{3}{*}{1} & Unimodal & 6.28 & 22.43 & 9.72 & 29.22 & 37.85 & 15.40 & 38.67 & 60.12 & 54.62 & 73.25 & 34.76 \\                                                                           
 & Ours & 7.89 & 26.08 & 10.41 & 32.45 & 40.27 & 18.14 & 39.28 & 60.88 & 58.32 & 75.66 & 36.94 \\                                                                                                   
 & Ours (init) & 8.96 & 31.34 & 12.12 & 34.22 & 44.32 & 21.46 & 42.68 & 66.39 & 60.37 & 79.74 & 40.16 \\                                                                                            
\midrule                                                                                                                                                                                       
\multirow[t]{3}{*}{2} & Unimodal & 12.64 & 35.64 & 14.98 & 38.93 & 51.14 & 26.05 & 50.34 & 70.84 & 75.61 & 83.16 & 45.93 \\                                                                         
 & Ours & 14.38 & 38.62 & 17.00 & 40.37 & 54.28 & 29.24 & 50.83 & 72.88 & 77.14 & 85.95 & 48.07 \\                                                                                                  
 & Ours (init) & 15.99 & 42.31 & 17.65 & 42.89 & 56.46 & 32.15 & 52.90 & 74.82 & 77.32 & 87.34 & 49.98 \\                                                                                           
\midrule
\multirow[t]{3}{*}{4} & Unimodal & 22.60 & 45.95 & 24.27 & 50.30 & 63.00 & 38.51 & 57.99 & 80.14 & 85.60 & 89.67 & 55.80 \\                                                                         
 & Ours & 24.83 & 48.62 & 25.76 & 52.64 & 64.39 & 40.74 & 57.96 & 80.92 & 87.20 & 91.17 & 57.42 \\                                                                                                  
 & Ours (init) & 25.83 & 51.01 & 26.35 & 55.06 & 65.86 & 42.69 & 59.32 & 82.23 & 87.83 & 91.99 & 58.82 \\                                                                                           
\midrule                                                                                                                                                                                       
\multirow[t]{3}{*}{8} & Unimodal & 37.68 & 52.94 & 33.67 & 59.18 & 70.62 & 49.48 & 62.97 & 85.26 & 92.83 & 93.17 & 63.78 \\                                                                         
 & Ours & 39.31 & 55.31 & 35.56 & 60.48 & 71.88 & 50.46 & 63.08 & 86.25 & 93.23 & 93.47 & 64.90 \\ 
 & Ours (init) & 40.50 & 57.03 & 35.64 & 62.27 & 73.18 & 51.50 & 64.09 & 86.93 & 93.59 & 93.71 & 65.84 \\
\midrule 

\multirow[t]{3}{*}{16} & Unimodal & 52.48 & 58.27 & 45.34 & 64.81 & 75.72 & 56.24 & 66.36 & 88.57 & 95.90 & 94.27 & 69.80 \\
 & Ours & 55.84 & 60.57 & 47.70 & 66.21 & 76.81 & 58.26 & 66.47 & 89.60 & 96.55 & 95.12 & 71.31 \\ 
 & Ours (init) & 55.82 & 61.73 & 48.14 & 67.02 & 77.39 & 58.76 & 67.08 & 90.53 & 96.62 & 94.98 & 71.81 \\

\bottomrule
\end{tabular}
}

\end{table}

\begin{table}[H]
\centering
\caption{\textbf{Linear evaluation of frozen features on 11 fine-grained benchmarks for few-shot learning. } We compare our proposed approach with the image-only baseline by training a linear classifier on top of frozen VIT-S/14 DINOv2 features. Our method leverages unpaired text data using OpenLLaMA-3B}
\resizebox{\textwidth}{!}{%
\begin{tabular}{ll*{11}{c}}
\toprule
& & \multicolumn{11}{c}{{Dataset}} \\
\cmidrule(lr){3-13}
Train Shot & Method & 
\rotatebox{90}{Stanford Cars} & 
\rotatebox{90}{Sun397} & 
\rotatebox{90}{Fgvc Aircraft} & 
\rotatebox{90}{Dtd} & 
\rotatebox{90}{Ucf101} & 
\rotatebox{90}{Food101} & 
\rotatebox{90}{Imagenet} & 
\rotatebox{90}{Oxford Pets} & 
\rotatebox{90}{Oxford Flowers} & 
\rotatebox{90}{Caltech101} & 
\rotatebox{90}{Average} 
\\
\midrule

\multirow{3}{*}{1} & Unimodal & 13.18 & 34.15 & 14.09 & 36.60 & 46.74 & 35.18 & 36.48 & 63.51 & 89.62 & 76.66 & 44.62\\
& Ours & 14.95 & 37.25 & 14.88 & 38.93 & 49.18 & 37.91 & 38.35 & 68.92 & 91.42 & 84.04 & 47.58\\
& Ours (init) & 16.49 & 41.79 & 15.63 & 42.04 & 52.33 & 42.27 & 42.69 & 73.59 & 93.64 & 84.52  & 50.50\\
\midrule

\multirow{3}{*}{2} & Unimodal & 24.68 & 47.88 & 23.09 & 47.75 & 56.81 & 48.54 & 50.41 & 75.32 & 96.02 & 86.90 & 55.73 \\
& Ours & 26.93 & 49.65 & 24.29 & 50.99 & 61.67 & 51.77 & 51.31 & 79.44 & 96.90 & 89.80 & 58.28\\
& Ours (init) & 28.65 & 53.15 & 24.78 & 53.25 & 63.86 & 54.44 & 54.21 & 81.41 & 97.63 & 90.55 & 60.19\\
\midrule

\multirow{3}{*}{4} & Unimodal & 38.76 & 57.51 & 32.10 & 59.69 & 67.75 & 60.79 & 58.73 & 83.89 & 98.59 & 93.48 & 65.12 \\
& Ours & 41.69 & 58.87 & 33.38 & 61.58 & 69.60 & 62.69 & 59.69 & 86.27 & 98.84 & 94.56 & 66.71\\
& Ours (init) & 43.17 & 60.89 & 33.86 & 62.43 & 71.13 & 63.88 & 61.38 & 87.36 & 99.17 & 94.96 & 67.82\\
\midrule

\multirow{3}{*}{8} & Unimodal & 54.56 & 63.00 & 45.05 & 64.78 & 74.19 & 68.06 & 64.53 & 88.68 & 99.27 & 94.35 & 71.65\\
& Ours & 56.27 & 64.57 & 45.98 & 66.31 & 75.19 & 69.22 & 65.14 & 89.78 & 99.27 & 95.42 & 72.71\\
& Ours (init) & 57.91 & 65.82 & 47.40 & 67.81 & 75.99 & 69.71 & 66.40 & 90.29 & 99.54 & 95.84 & 73.67\\
\midrule

\multirow{3}{*}{16} & Unimodal & 67.96 & 67.35 & 55.89 & 71.36 & 77.92 & 73.24 & 68.14 & 90.73 & 99.63 & 96.43 & 76.22\\
& Ours  & 69.42 & 68.50 & 58.54 & 72.24 & 78.69 & 73.80 & 68.70 & 91.87 & 99.72 & 96.63 & 77.80\\
& Ours (init) & 70.32 & 69.19 & 58.74 & 73.17 & 79.58 & 74.51 & 69.44 & 92.47 & 99.82 & 96.80 & 78.81\\

\bottomrule
\end{tabular}
}
\label{tab:linear_fewshot_dinov2_vits_full}
\end{table}


\begin{table}[H]
\centering
\caption{\textbf{Linear evaluation of frozen features on 10 fine-grained benchmarks for few-shot learning with DINOv2 ViT-B/14.} We compare our proposed approach with the image-only baseline by training a linear classifier on top of frozen VIT-B/14 DINOv2 features. Our method leverages unpaired text data using OpenLLaMA-3B}

\resizebox{\textwidth}{!}{%
\begin{tabular}{ll*{11}{c}}
\toprule
& & \multicolumn{11}{c}{{Dataset}} \\
\cmidrule(lr){3-13}
Train Shot & Method & 
\rotatebox{90}{Stanford Cars} & 
\rotatebox{90}{Sun397} & 
\rotatebox{90}{Fgvc Aircraft} & 
\rotatebox{90}{Dtd} & 
\rotatebox{90}{Ucf101} & 
\rotatebox{90}{Food101} & 
\rotatebox{90}{Imagenet} & 
\rotatebox{90}{Oxford Pets} & 
\rotatebox{90}{Oxford Flowers} & 
\rotatebox{90}{Caltech101} & 
\rotatebox{90}{Average} 
\\
\midrule
\multirow[t]{3}{*}{1} & Unimodal & 22.42 & 43.03 & 15.79 & 38.85 & 58.57 & 48.71 & 52.26 & 76.47 & 97.12 & 83.64 & 53.69 \\
 & Ours & 23.10 & 45.12 & 16.22 & 42.69 & 61.05 & 51.30 & 52.45 & 78.14 & 98.08 & 87.68 & 55.58 \\ 
 & Ours (init) & 25.47 & 48.56 & 16.83 & 45.31 & 63.53 & 54.16 & 55.56 & 81.08 & 97.94 & 88.13 & 57.66 \\
\midrule
\multirow[t]{3}{*}{2} & Unimodal & 35.17 & 55.41 & 25.54 & 51.16 & 69.49 & 62.13 & 62.35 & 84.31 & 99.58 & 89.55 & 63.47 \\
 & Ours & 37.38 & 56.98 & 25.88 & 54.65 & 70.61 & 63.89 & 63.21 & 85.50 & 99.70 & 92.02 & 64.98 \\ 
 & Ours (init) & 38.78 & 59.81 & 26.00 & 55.61 & 71.38 & 66.54 & 65.06 & 86.49 & 99.62 & 92.79 & 66.21 \\
\midrule
\multirow[t]{3}{*}{4} & Unimodal & 51.40 & 63.68 & 34.25 & 61.25 & 76.32 & 71.60 & 68.86 & 89.05 & 99.76 & 94.51 & 71.07 \\
 & Ours & 54.26 & 64.65 & 35.52 & 62.63 & 76.87 & 72.33 & 69.14 & 90.00 & 99.70 & 95.51 & 72.06 \\ 
 & Ours (init) & 55.01 & 66.55 & 35.14 & 63.97 & 77.57 & 73.25 & 70.30 & 90.31 & 99.57 & 95.65 & 72.73 \\
\midrule
\multirow[t]{3}{*}{8} & Unimodal & 66.01 & 68.88 & 48.17 & 66.67 & 79.92 & 76.26 & 72.48 & 90.97 & 99.80 & 95.54 & 76.47 \\
 & Ours & 68.53 & 69.75 & 50.88 & 68.46 & 81.44 & 77.34 & 73.12 & 92.39 & 99.70 & 96.20 & 77.78 \\ 
 & Ours (init) & 67.91 & 70.66 & 51.26 & 69.56 & 81.85 & 77.95 & 73.75 & 92.50 & 99.68 & 96.51 & 78.16 \\
\midrule
\multirow[t]{3}{*}{16} & Unimodal & 77.31 & 72.17 & 62.38 & 73.76 & 83.80 & 80.74 & 75.15 & 93.34 & 99.81 & 97.40 & 81.59 \\
 & Ours & 78.92 & 72.80 & 64.51 & 75.16 & 84.62 & 81.00 & 75.46 & 92.92 & 99.59 & 97.38 & 82.24 \\ 
 & Ours (init) & 78.52 & 73.18 & 65.81 & 75.65 & 84.77 & 81.18 & 75.82 & 93.28 & 99.78 & 97.57 & 82.56 \\
                        
\bottomrule
\end{tabular}
}
\label{tab:linear_fewshot_vit_base_14_dinov2}
\end{table}


\begin{table}[H]
\centering
\caption{\textbf{Linear evaluation of frozen features on 10 fine-grained benchmarks for few-shot learning with DINOv2 ViT-L/14.} We compare our proposed approach with the image-only baseline by training a linear classifier on top of frozen VIT-L/14 DINOv2 features. Our method leverages unpaired text data using OpenLLaMA-3B}

\resizebox{\textwidth}{!}{%
\begin{tabular}{ll*{11}{c}}
\toprule
& & \multicolumn{11}{c}{{Dataset}} \\
\cmidrule(lr){3-13}
Train Shot & Method & 
\rotatebox{90}{Stanford Cars} & 
\rotatebox{90}{Sun397} & 
\rotatebox{90}{Fgvc Aircraft} & 
\rotatebox{90}{Dtd} & 
\rotatebox{90}{Ucf101} & 
\rotatebox{90}{Food101} & 
\rotatebox{90}{Imagenet} & 
\rotatebox{90}{Oxford Pets} & 
\rotatebox{90}{Oxford Flowers} & 
\rotatebox{90}{Caltech101} & 
\rotatebox{90}{Average} 
\\
\midrule
\multirow[t]{3}{*}{1} & Unimodal & 24.89 & 48.36 & 17.69 & 38.77 & 66.46 & 59.27 & 57.50 & 79.83 & 98.13 & 82.96 & 57.39 \\
 & Ours & 25.88 & 49.63 & 18.08 & 42.93 & 69.18 & 60.12 & 58.37 & 83.51 & 98.42 & 86.23 & 59.24 \\
 & Ours (init) & 27.90 & 52.86 & 18.95 & 43.18 & 70.98 & 63.17 & 60.80 & 83.86 & 98.59 & 88.17 & 60.85 \\
\midrule
\multirow[t]{3}{*}{2} & Unimodal & 39.95 & 58.95 & 26.87 & 50.18 & 75.79 & 70.74 & 67.14 & 84.71 & 99.74 & 89.82 & 66.39 \\
 & Ours & 41.22 & 60.82 & 27.15 & 53.01 & 76.61 & 72.07 & 67.90 & 86.07 & 99.72 & 91.95 & 67.65 \\
 & Ours (init) & 42.93 & 63.36 & 28.14 & 54.96 & 77.72 & 73.87 & 69.20 & 87.13 & 99.81 & 91.71 & 68.88 \\
\midrule
\multirow[t]{3}{*}{4} & Unimodal & 56.49 & 66.37 & 38.59 & 59.08 & 80.84 & 77.39 & 72.41 & 89.90 & 99.73 & 94.44 & 73.52 \\
 & Ours & 58.19 & 67.36 & 39.57 & 61.78 & 81.36 & 78.19 & 72.82 & 90.99 & 99.76 & 95.27 & 74.53 \\
 & Ours (init) & 58.60 & 68.84 & 39.19 & 62.77 & 81.50 & 78.99 & 73.63 & 90.74 & 99.88 & 96.02 & 75.02 \\
\midrule
\multirow[t]{3}{*}{8} & Unimodal & 70.00 & 70.71 & 51.57 & 66.47 & 83.84 & 81.69 & 76.02 & 93.53 & 99.89 & 95.55 & 78.93 \\
 & Ours & 71.63 & 71.59 & 55.13 & 67.91 & 84.47 & 82.12 & 76.43 & 93.62 & 99.88 & 96.36 & 79.91 \\
 & Ours (init) & 72.02 & 72.51 & 55.49 & 69.03 & 84.57 & 82.52 & 76.78 & 93.80 & 99.89 & 96.73 & 80.33 \\
\midrule
\multirow[t]{3}{*}{16} & Unimodal & 80.84 & 73.83 & 64.13 & 73.96 & 87.43 & 84.58 & 77.78 & 94.69 & 99.91 & 97.36 & 83.45 \\
 & Ours & 81.85 & 74.39 & 69.45 & 74.70 & 87.35 & 84.58 & 78.35 & 94.59 & 99.89 & 97.61 & 84.28 \\
 & Ours (init) & 82.76 & 74.80 & 69.42 & 74.88 & 87.65 & 84.96 & 78.58 & 94.42 & 99.81 & 97.62 & 84.49 \\                                     

\bottomrule
\end{tabular}
}
\label{tab:linear_fewshot_dinov2_vit_l14}
\end{table}
\FloatBarrier


\subsection{Improving Image Classification using Unpaired Texts (Aligned encoders)}\label{app:clip_classification}

\subsubsection{Supervised Finetuning}\label{app:clip_classification_finetuning}
In this section, we fine‐tune both the vision backbone and the linear classifier on nine downstream tasks, comparing \algoname against strong image‐only baselines. We evaluate two different backbones: ResNet-50 and VIT-B/16. 

As shown in~\Cref{tab:full_finetune_full_ds_clip_rn50}, across all backbones, \algoname consistently improves over the image‐only baseline by leveraging unpaired text embeddings. Further, our head‐initialization variant (\emph{Ours (init)}) outperforms training using unpaired multimodal data from scratch (\emph{Ours}).

\begin{table}[!htb]
\centering

\begin{minipage}[t]{1.\linewidth}
\centering
\caption{\textbf{Supervised finetuning on 9 fine-grained classification benchmarks with CLIP}. We compare our proposed approach with the image-only baseline when fine-tuning on the target dataset. All vision encoders are initialized from CLIP ResNet50 weights, and our approach leverages unpaired text data using the corresponding CLIP text encoder.}
\label{tab:full_finetune_full_ds_clip_rn50}
\resizebox{1.\textwidth}{!}{%
\begin{tabular}{l*{10}{c}}
\toprule
&  \multicolumn{10}{c}{{Dataset}} \\
\cmidrule(lr){2-11}Method & 
\rotatebox{90}{Stanford Cars} & 
\rotatebox{90}{Sun397} & 
\rotatebox{90}{Fgvc Aircraft} & 
\rotatebox{90}{Dtd} & 
\rotatebox{90}{Ucf101} & 
\rotatebox{90}{Food101} & 
\rotatebox{90}{Oxford Pets} & 
\rotatebox{90}{Oxford Flowers} & 
\rotatebox{90}{Caltech101} & 
\rotatebox{90}{Average} 
\\
\midrule
Unimodal & 36.12 & 25.93 & 37.70 & 51.06 & 52.49 & 69.24 & 63.17 & 88.42 & 83.61 & 56.42 \\
Ours & 37.00 & 24.05 & 41.34 & 55.67 & 60.48 & 69.77 & 74.49 & 92.57 & 84.79 & 60.02 \\
Ours (init) & \textbf{72.75} & \textbf{62.33} & \textbf{66.58} & \textbf{56.50} & \textbf{67.54} & \textbf{76.95} & \textbf{86.97} & \textbf{94.80} & \textbf{87.95} & \textbf{74.71} \\
\bottomrule
\end{tabular}
}
\end{minipage}
\end{table}

\subsubsection{Linear Probing}\label{app:clip_full_dataset_linprobe}
In this section, we train only the linear classifier, on top of the frozen vision and language backbone from CLIP, on ten downstream tasks, comparing \algoname against strong image‐only baselines. 

As shown in~\Cref{tab:linear_full_clip_rn50}, \algoname consistently improves over the image‐only baseline by leveraging unpaired text embeddings. Further, our head‐initialization variant (\emph{Ours (init)}) outperforms training using unpaired multimodal data from scratch (\emph{Ours}).

\begin{table}[!tp]
\centering

\begin{minipage}[]{1.\linewidth}
\centering
\caption{\textbf{Full linear probing on classification with CLIP ResNet-50 Image Encoder and Text encoder}. We compare our proposed approach with the image-only baseline when training a linear probe on the target dataset. All vision encoders are initialized from ResNet-50 weights, and our approach leverages unpaired text data using the corresponding CLIP text embeddings.}\label{tab:linear_full_clip_rn50}
\resizebox{1.\textwidth}{!}{%
\begin{tabular}{l*{10}{l}}
\toprule
&  \multicolumn{10}{c}{{\textbf{Dataset}}} \\
\cmidrule(lr){2-11}Method & 
\rotatebox{90}{Stanford Cars} & 
\rotatebox{90}{SUN397} & 
\rotatebox{90}{FGVC Aircraft} & 
\rotatebox{90}{DTD} & 
\rotatebox{90}{UCF101} & 
\rotatebox{90}{Food101} & 
\rotatebox{90}{Oxford Pets} & 
\rotatebox{90}{Oxford Flowers} & 
\rotatebox{90}{Caltech101} &
\rotatebox{90}{Average} 
\\
\midrule
Unimodal & 76.36 & 70.97 & 41.88 & 72.81 & 81.23 & 81.60 & 88.39 & \textbf{97.89} & 92.78 & 78.21 \\

Ours & 77.23 & 71.18 & 42.66 & 71.81 & 81.81 & 81.51 & 87.84 & 97.65 & 93.01 & 78.30 \\

Ours (init) & \textbf{79.14 }& \textbf{73.83} & \textbf{42.81} & \textbf{73.76} & \textbf{82.13} & \textbf{82.44} & \textbf{90.90} & 97.69 & \textbf{94.19} & \textbf{79.65} \\
\bottomrule
\end{tabular}
}
\end{minipage}
\end{table}

\subsubsection{Few-shot linear Probing (across architectures)}\label{app:clip_fewshot_linprobe}
In this section, we train only the linear classifier, on top of the frozen vision and language backbone from CLIP, for few-shot classification on ten downstream tasks, comparing \algoname against strong image‐only baselines. We evaluate two different backbones: ResNet-50 and VIT-B/16.

As shown in~\Cref{tab:lin_fewshot_clip_rn50} and~\Cref{tab:linear_fewshot_clip_vitb16}, across both backbones, \algoname consistently improves over the image‐only baseline by leveraging unpaired text embeddings. Further, our head‐initialization variant (\emph{Ours (init)}) outperforms training using unpaired multimodal data from scratch (\emph{Ours}). 

\begin{table}[!tp]
\centering
\caption{\textbf{Linear evaluation of frozen features on 10 fine-grained benchmarks for few-shot learning.} We compare our proposed approach with the image-only baseline by training a linear classifier on top of frozen CLIP ResNet50 features. Our method leverages unpaired text data using the corresponding  CLIP text encoder}
\label{tab:lin_fewshot_clip_rn50}
\resizebox{\textwidth}{!}{%
\begin{tabular}{ll*{11}{c}}
\toprule
& & \multicolumn{11}{c}{{Dataset}} \\
\cmidrule(lr){3-13}
Train Shot & Method & 
\rotatebox{90}{Stanford Cars} & 
\rotatebox{90}{Sun397} & 
\rotatebox{90}{Fgvc Aircraft} & 
\rotatebox{90}{Dtd} & 
\rotatebox{90}{Ucf101} & 
\rotatebox{90}{Food101} & 
\rotatebox{90}{Oxford Pets} & 
\rotatebox{90}{Imagenet} & 
\rotatebox{90}{Oxford Flowers} & 
\rotatebox{90}{Caltech101} & 
\rotatebox{90}{Average} 
\\
\midrule
           
\multirow[t]{3}{*}{1} & Unimodal & 23.24 & 29.14 & 12.38 & 30.24 & 37.55 & 27.26 & 34.61 & 21.36 & 59.07 & 66.52 & 34.14 \\                                                                                                  
 & Ours & 36.32 & 45.40 & 16.84 & 40.92 & 53.19 & 49.76 & 53.03 & 36.48 & 68.56 & 76.80 & 47.73 \\
 & Ours (init) & 57.88 & 64.59 & 22.23 & 50.85 & 65.99 & 76.73 & 86.59 & 60.92 & 81.08 & 83.79 & 65.06 \\
\midrule
\multirow[t]{3}{*}{2} & Unimodal & 38.37 & 43.83 & 18.63 & 40.33 & 53.25 & 44.60 & 47.75 & 32.62 & 75.03 & 78.90 & 47.33 \\
 & Ours & 46.64 & 53.53 & 20.81 & 48.35 & 62.01 & 56.67 & 60.64 & 42.21 & 77.97 & 84.58 & 55.34 \\
 & Ours (init) & 61.86 & 65.90 & 24.19 & 55.30 & 70.39 & 77.07 & 87.40 & 61.40 & 86.20 & 85.94 & 67.57 \\
\midrule
\multirow[t]{3}{*}{4} & Unimodal & 51.34 & 54.38 & 23.08 & 52.07 & 64.06 & 57.29 & 61.32 & 41.72 & 86.16 & 85.41 & 57.68 \\
 & Ours & 55.21 & 59.48 & 24.77 & 56.78 & 67.65 & 62.68 & 67.31 & 47.04 & 86.46 & 87.23 & 61.46 \\
 & Ours (init) & 65.80 & 68.11 & 27.49 & 60.13 & 73.62 & 77.79 & 86.54 & 62.37 & 91.60 & 87.57 & 70.10 \\
\midrule
\multirow[t]{3}{*}{8} & Unimodal & 61.74 & 61.47 & 30.22 & 60.15 & 70.16 & 64.63 & 68.94 & 49.48 & 92.20 & 89.14 & 64.81 \\
 & Ours & 62.75 & 63.70 & 30.69 & 61.84 & 70.74 & 67.73 & 73.62 & 52.14 & 92.31 & 89.89 & 66.54 \\
 & Ours (init) & 69.78 & 69.61 & 31.62 & 64.13 & 77.24 & 78.58 & 89.07 & 63.34 & 94.21 & 91.58 & 72.92 \\
\midrule
\multirow[t]{3}{*}{16} & Unimodal & 70.94 & 65.53 & 35.91 & 64.30 & 75.13 & 70.67 & 78.49 & 55.07 & 95.21 & 91.26 & 70.25 \\
 & Ours & 71.58 & 67.08 & 36.23 & 65.62 & 76.09 & 71.63 & 79.52 & 56.92 & 95.44 & 91.94 & 71.20 \\
 & Ours (init) & 74.56 & 71.33 & 37.13 & 68.09 & 78.66 & 79.06 & 89.71 & 64.31 & 96.17 & 93.31 & 75.23 \\
\bottomrule
\end{tabular}
}
\end{table}

\begin{table}[!tp]
\centering
\caption{\textbf{Linear evaluation of frozen features on 10 fine-grained benchmarks for few-shot learning.} We compare our proposed approach with the image-only baseline by training a linear classifier on top of frozen CLIP VIT-B/16 features. Our method leverages unpaired text data using the corresponding CLIP text encoder}
\label{tab:linear_fewshot_clip_vitb16}
\resizebox{\textwidth}{!}{%
\begin{tabular}{ll*{11}{c}}
\toprule
& & \multicolumn{11}{c}{{Dataset}} \\
\cmidrule(lr){3-13}
Train Shot & Method & 
\rotatebox{90}{Stanford Cars} & 
\rotatebox{90}{Sun397} & 
\rotatebox{90}{Fgvc Aircraft} & 
\rotatebox{90}{Dtd} & 
\rotatebox{90}{Ucf101} & 
\rotatebox{90}{Food101} & 
\rotatebox{90}{Oxford Pets} & 
\rotatebox{90}{Imagenet} & 
\rotatebox{90}{Oxford Flowers} & 
\rotatebox{90}{Caltech101} & 
\rotatebox{90}{Average} 
\\
\midrule
\multirow[t]{3}{*}{1} & Unimodal & 31.53 & 33.51 & 17.76 & 31.72 & 43.64 & 39.40 & 37.43 & 27.65 & 67.95 & 71.68 & 40.23 \\
 & Ours & 48.28 & 53.44 & 22.06 & 47.04 & 63.40 & 63.92 & 60.95 & 47.35 & 77.82 & 83.14 & 56.74 \\
 & Ours (init) & 67.76 & 70.13 & 32.26 & 55.16 & 75.02 & 84.25 & 90.91 & 69.50 & 87.58 & 88.87 & 72.14 \\
\midrule
\multirow[t]{3}{*}{2} & Unimodal & 48.45 & 48.70 & 23.38 & 42.04 & 60.08 & 58.30 & 53.56 & 41.68 & 82.01 & 83.20 & 54.14 \\
 & Ours & 57.89 & 59.95 & 27.19 & 52.27 & 69.60 & 71.18 & 66.78 & 54.24 & 87.43 & 90.20 & 63.67 \\
 & Ours (init) & 70.75 & 71.52 & 33.99 & 60.17 & 78.37 & 85.39 & 90.67 & 70.19 & 92.18 & 90.09 & 74.33 \\
\midrule
\multirow[t]{3}{*}{4} & Unimodal & 61.64 & 60.66 & 31.01 & 54.37 & 70.49 & 71.91 & 69.35 & 52.15 & 90.99 & 91.08 & 65.36 \\
 & Ours & 66.24 & 65.56 & 32.98 & 59.95 & 74.16 & 76.19 & 75.92 & 58.50 & 91.32 & 93.23 & 69.40 \\
 & Ours (init) & 74.58 & 73.54 & 37.38 & 64.30 & 81.10 & 86.05 & 91.64 & 70.89 & 94.80 & 93.70 & 76.80 \\
\midrule
\multirow[t]{3}{*}{8} & Unimodal & 71.76 & 66.67 & 38.47 & 61.96 & 77.11 & 78.16 & 78.25 & 59.90 & 95.20 & 92.98 & 72.05 \\
 & Ours & 72.77 & 69.50 & 39.09 & 64.89 & 79.01 & 80.07 & 80.85 & 62.63 & 94.98 & 94.36 & 73.82 \\
 & Ours (init) & 78.43 & 75.07 & 41.77 & 68.50 & 83.41 & 86.87 & 92.55 & 71.97 & 96.94 & 95.27 & 79.08 \\
\midrule
\multirow[t]{3}{*}{16} & Unimodal & 78.76 & 71.49 & 44.74 & 68.79 & 80.43 & 82.08 & 85.16 & 63.87 & 96.97 & 94.54 & 76.68 \\
 & Ours & 79.40 & 72.19 & 45.06 & 69.41 & 81.97 & 82.12 & 85.92 & 64.93 & 96.49 & 95.28 & 77.28 \\
 & Ours (init) & 82.38 & 76.51 & 47.14 & 72.13 & 84.66 & 86.60 & 92.68 & 72.79 & 97.70 & 96.08 & 80.87 \\

\bottomrule
\end{tabular}
}

\end{table}


\subsection{Improving Visual Robustness Using Unpaired Texts}\label{app:robustness}
In this section, we evaluate the robustness of models trained with \algoname to test-time distribution shifts. We train a k-shot linear probe (where $k\in\{1,2,4,8\}$) with DINOv2 on ImageNet and evaluate across four distribution-shifted target datasets: ImageNet-V2, ImageNet-Sketch, ImageNet-A, and ImageNet-R. Our method consistently improves robustness over the unimodal baseline (\Cref{fig:ood_imagenet_dinov2_vits_shot1}, \Cref{fig:ood_imagenet_dinov2_vits_shot2}, \Cref{fig:ood_imagenet_dinov2_vits_shot4} and \Cref{fig:ood_imagenet_dinov2_vits_shot8}) across different training shots, indicating that language priors help capture more transferable features.

\begin{figure}[!htb]
    \centering
    \includegraphics[width=1.\linewidth]{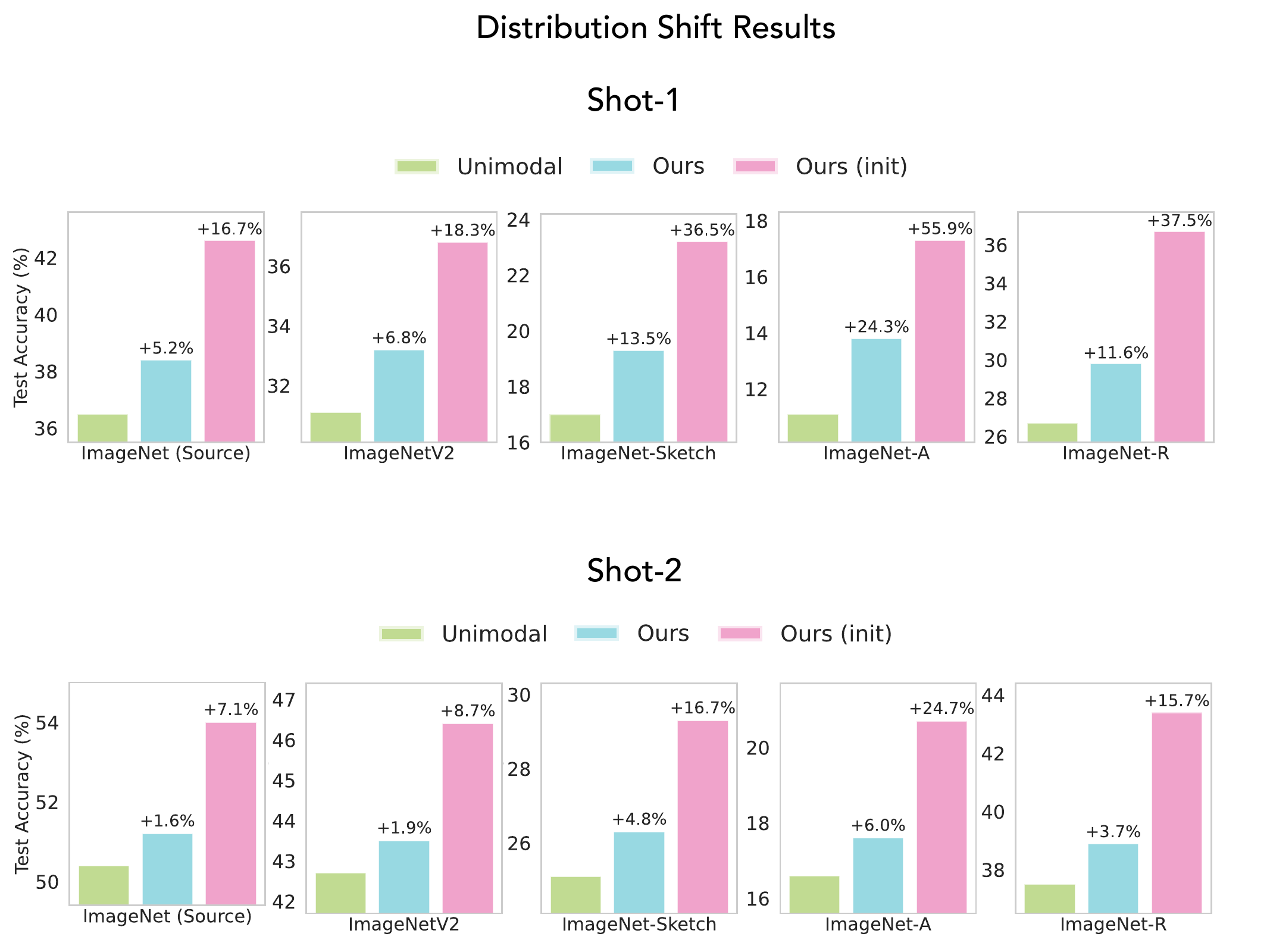}
    \caption{\textbf{Robustness under test-time distribution shifts.} Our approach (trained on 1-shot) is much more robust than its unimodal counterpart across four distribution-shuffled target test sets.}
    \label{fig:ood_imagenet_dinov2_vits_shot1}
\end{figure}

\begin{figure}[!htb]
    \centering
    \includegraphics[width=1.\linewidth]{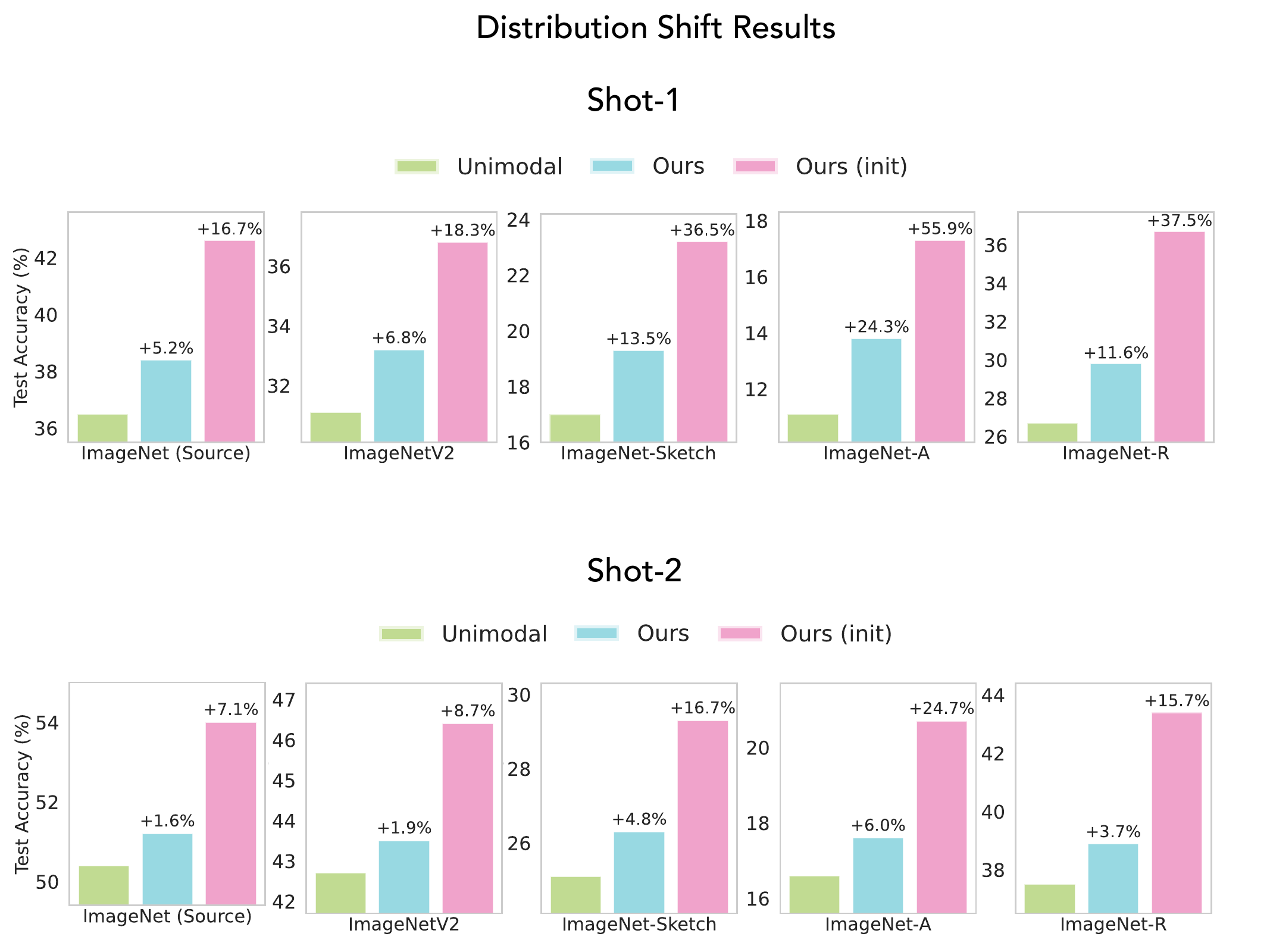}
    \caption{\textbf{Robustness under test-time distribution shifts.} Our approach (trained on 2-shots) is much more robust than its unimodal counterpart across four distribution-shuffled target test sets.}
    \label{fig:ood_imagenet_dinov2_vits_shot2}
\end{figure}

\begin{figure}[!htb]
    \centering
    \includegraphics[width=1.\linewidth]{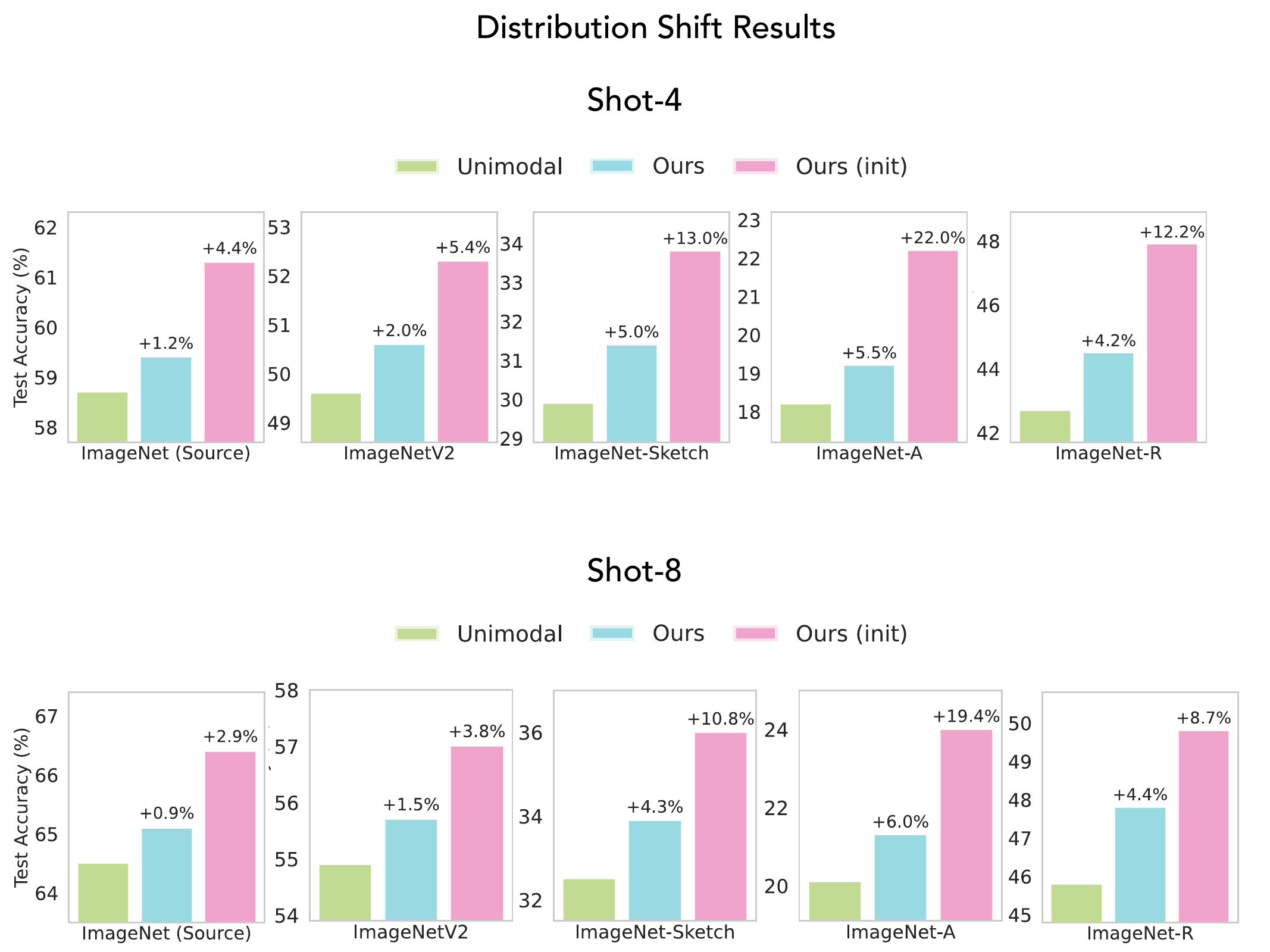}
    \caption{\textbf{Robustness under test-time distribution shifts.} Our approach (trained on 4-shots) is much more robust than its unimodal counterpart across four distribution-shuffled target test sets.}
    \label{fig:ood_imagenet_dinov2_vits_shot4}
\end{figure}

\begin{figure}[!htb]
    \centering
    \includegraphics[width=1.\linewidth]{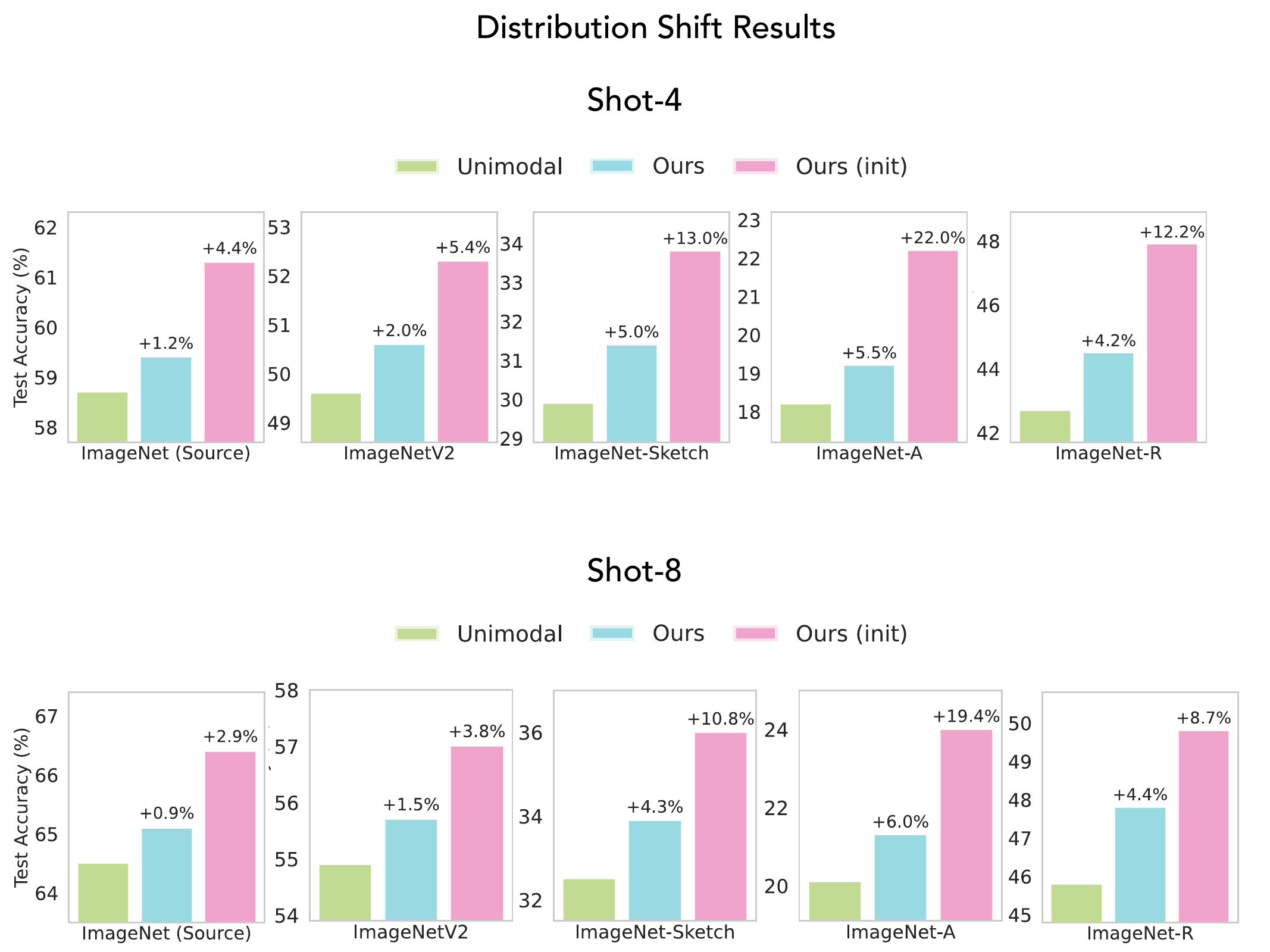}
    \caption{\textbf{Robustness under test-time distribution shifts.} Our approach (trained on 8-shots) is much more robust than its unimodal counterpart across four distribution-shuffled target test sets.}
    \label{fig:ood_imagenet_dinov2_vits_shot8}
\end{figure}

\FloatBarrier

\subsection{Marginal Rate-of-Substitution Between Modalities}\label{app:mrs}
\textit{How many words is an image worth?} In this section, we extend our results to evaluate image-text conversion ratios using test accuracy isolines on the remaining eight datasets. We measure these global equivalence ratios by fitting a plane to the accuracy values given the number of image and text shots. 
\crefrange{fig:isoplot_sun397_dino_vits}{fig:isoplot_ucf101_dino_vits} demonstrate the conversion ratios for DINOv2 VIT-S/14 as the vision backbone and OpenLLaMa-3B as the text backbone (unaligned encoders). Analogously, \crefrange{fig:isoplot_sun397_clip}{fig:isoplot_ucf101_clip} show the same ratios for CLIP ResNet-50 as the vision and text encoders (aligned encoders). As expected, with the fully aligned CLIP backbone, each image equates to far fewer text prompts than under the unaligned DINO setting, showing the higher efficiency of aligned embeddings.

\subsubsection{Unaligned Encoders}

\begin{figure}[!htb]
\vspace{-1mm}
\centering
\hfill
\begin{minipage}{0.49\textwidth}
\centering
\includegraphics[width=\textwidth]{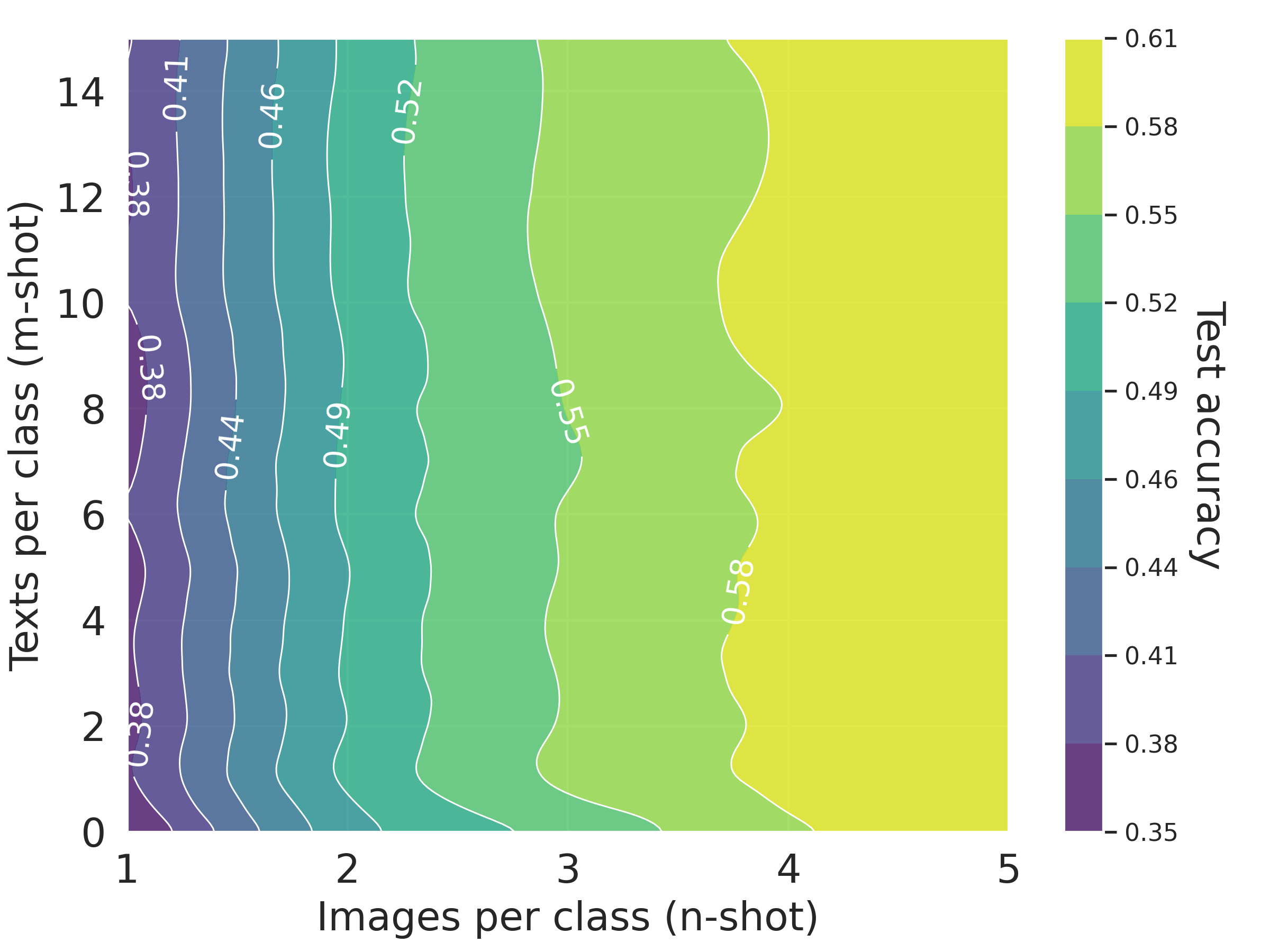}
\caption{\textbf{SUN397.} 1 img $\approx$ 1568 words}
\label{fig:isoplot_sun397_dino_vits}
\end{minipage}
\hfill
\begin{minipage}{0.49\textwidth}
\centering
\includegraphics[width=\textwidth]{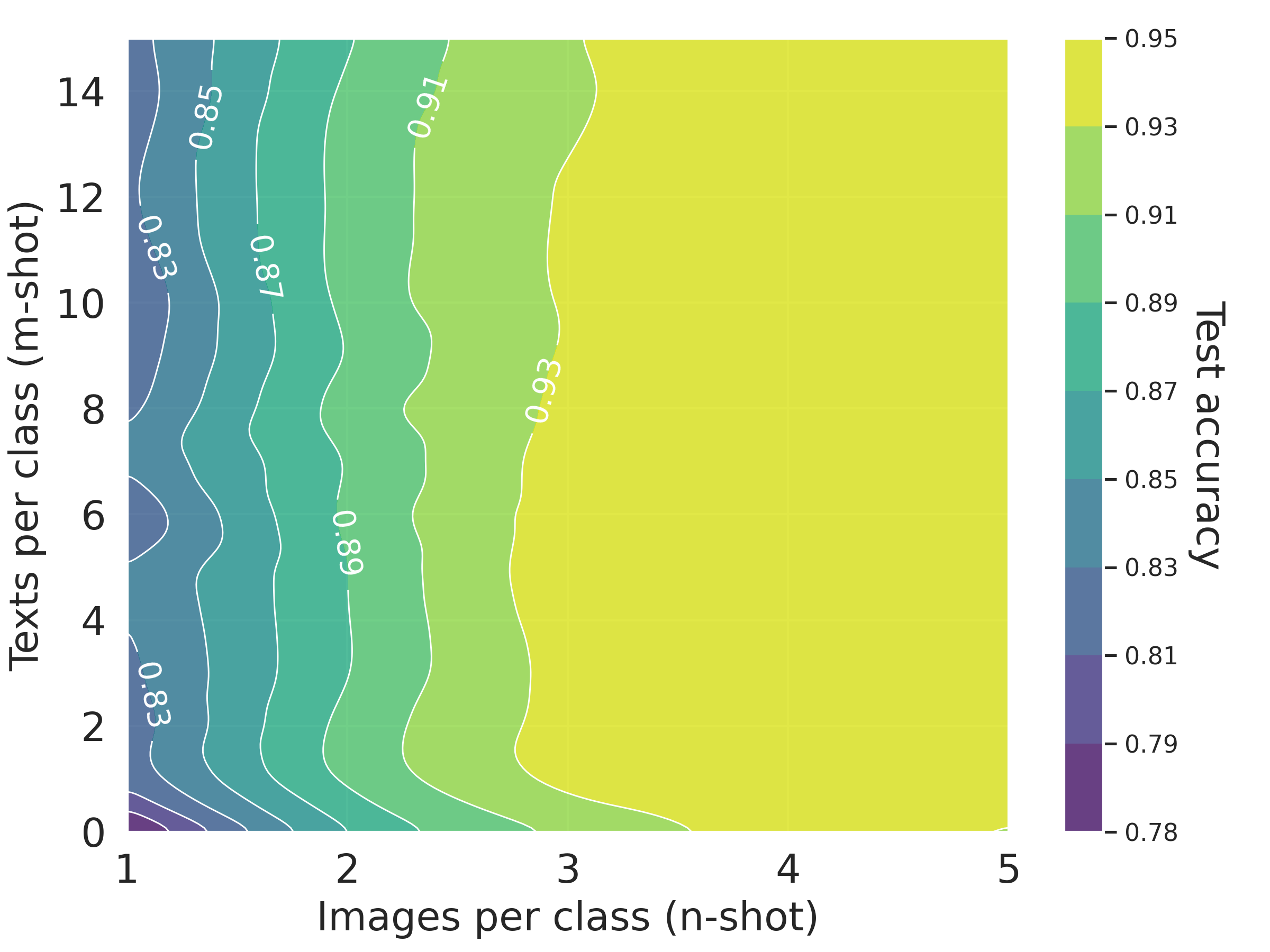}
\caption{\textbf{Caltech101.} 1 img $\approx$ 1248 words}
\label{fig:isoplot_caltech_dino_vits}
\end{minipage}
\end{figure}

\begin{figure}[!htb]
\vspace{-4mm}
\centering
\hfill
\begin{minipage}{0.49\textwidth}
\centering
\includegraphics[width=\textwidth]{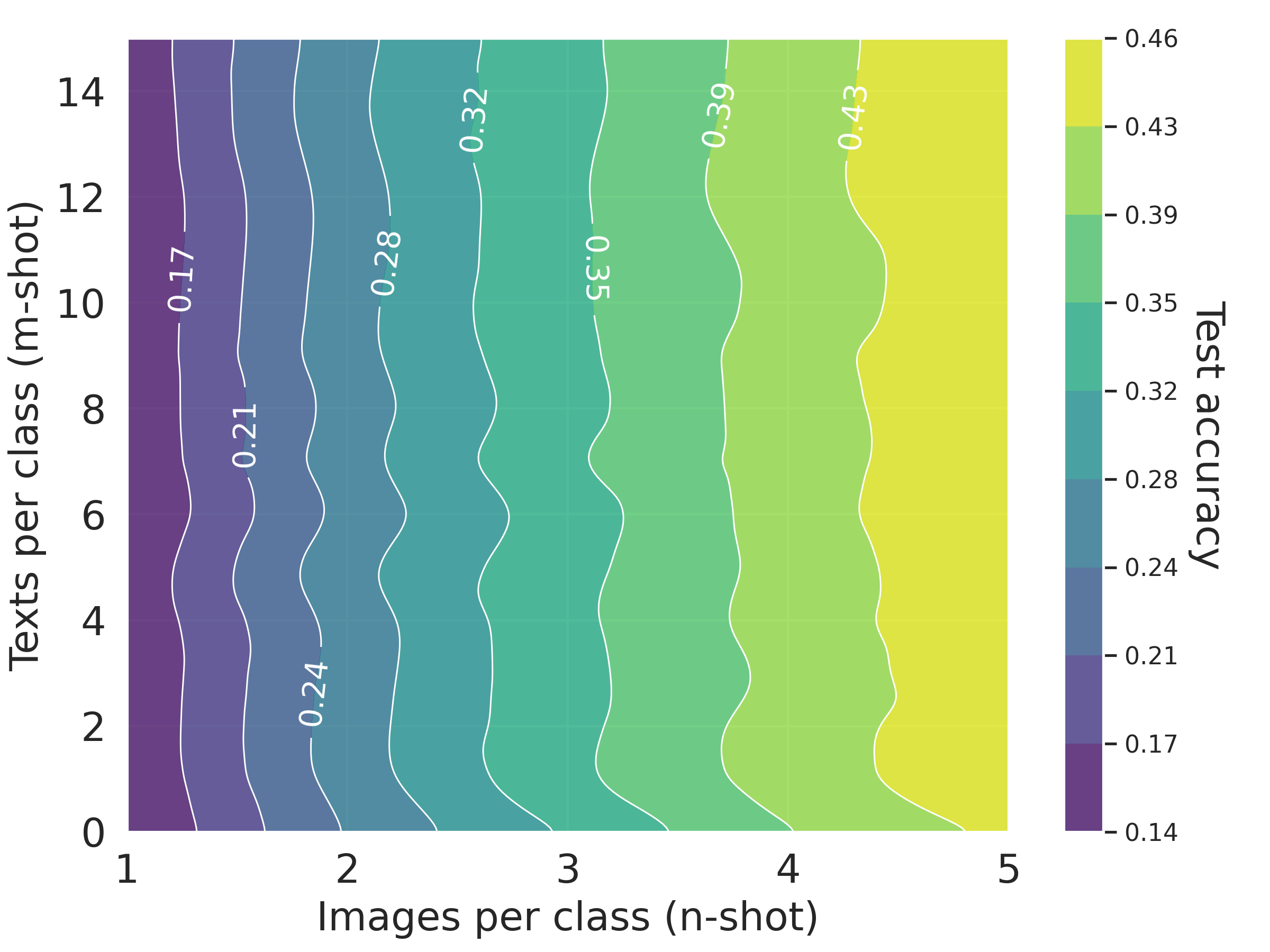}
\caption{\textbf{Stanford Cars.} 1 img $\approx$ 1799 words}
\label{fig:isoplot_cars_dino_vits}
\end{minipage}
\hfill
\begin{minipage}{0.49\textwidth}
\centering
\includegraphics[width=\textwidth]{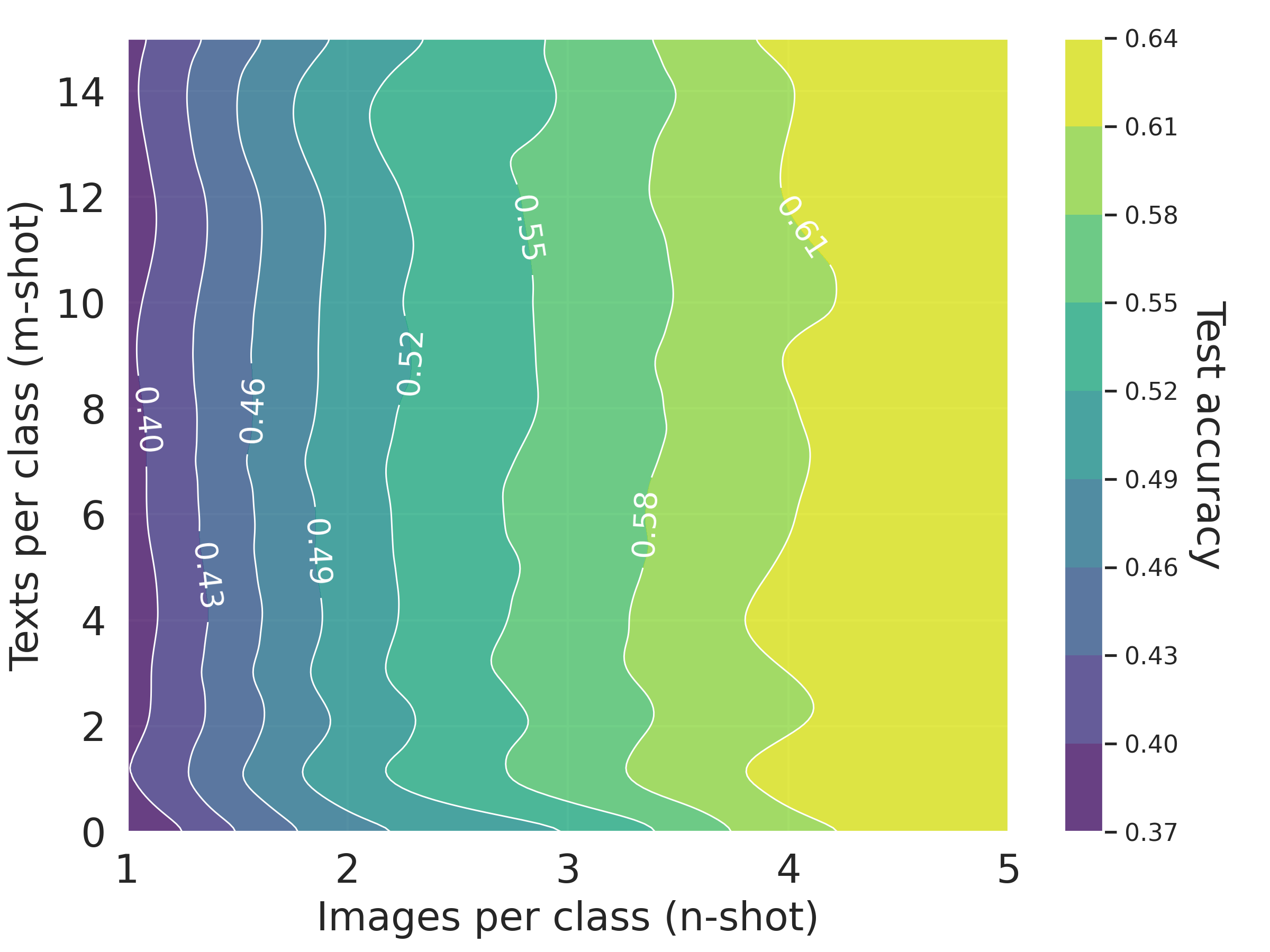}
\caption{\textbf{DTD.} 1 img $\approx$ 2309 words}
\label{fig:isoplot_dtd_dino_vits}
\end{minipage}
\end{figure}

\begin{figure}[!htb]
\vspace{-4mm}
\centering
\hfill
\begin{minipage}{0.49\textwidth}
\centering
\includegraphics[width=\textwidth]{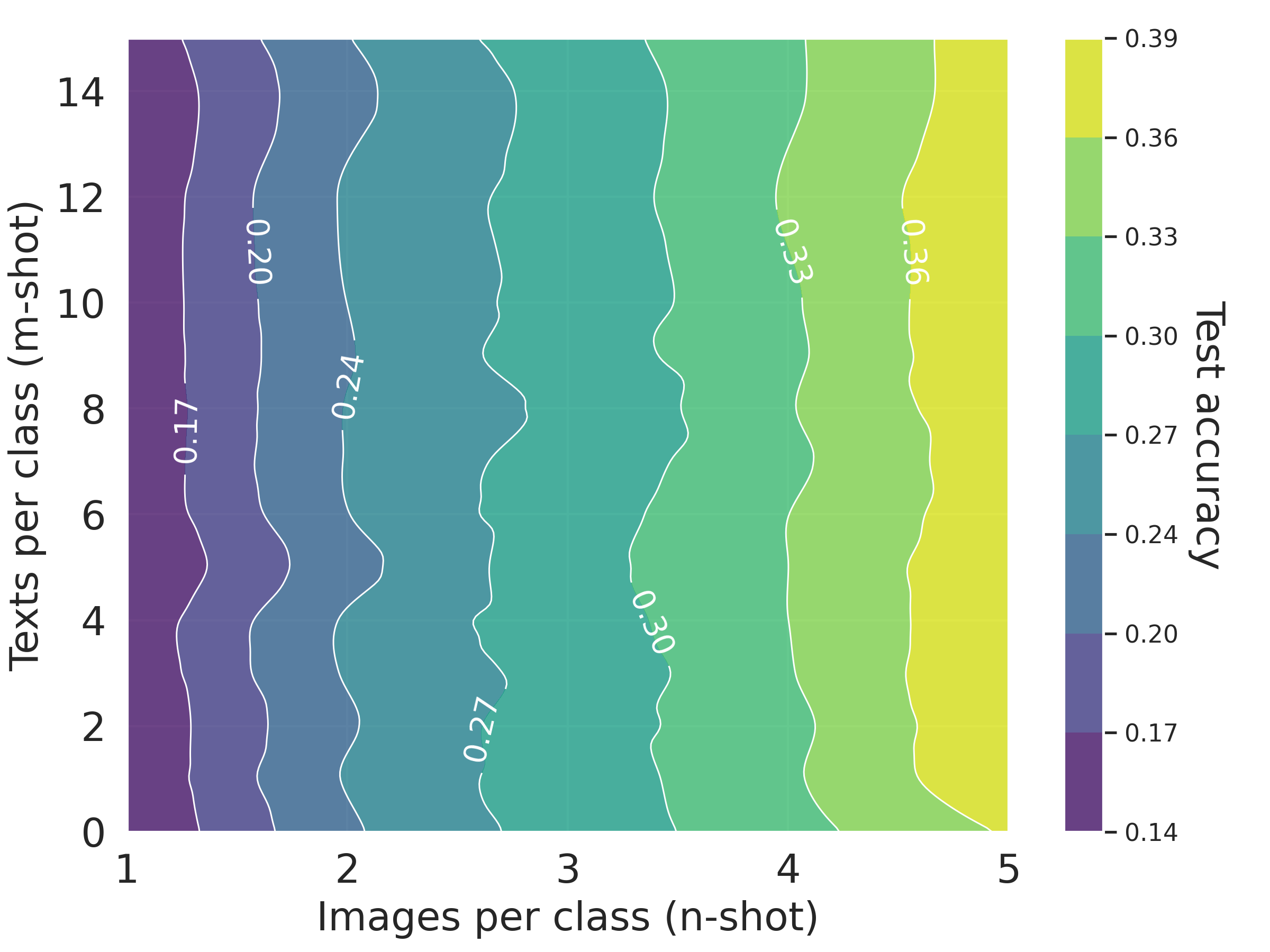}
\caption{\textbf{FGVC Aircraft.} 1 img $\approx$ 3220 words}
\label{fig:isoplot_fgvc_dino_vits}
\end{minipage}
\hfill
\begin{minipage}{0.49\textwidth}
\centering
\includegraphics[width=\textwidth]{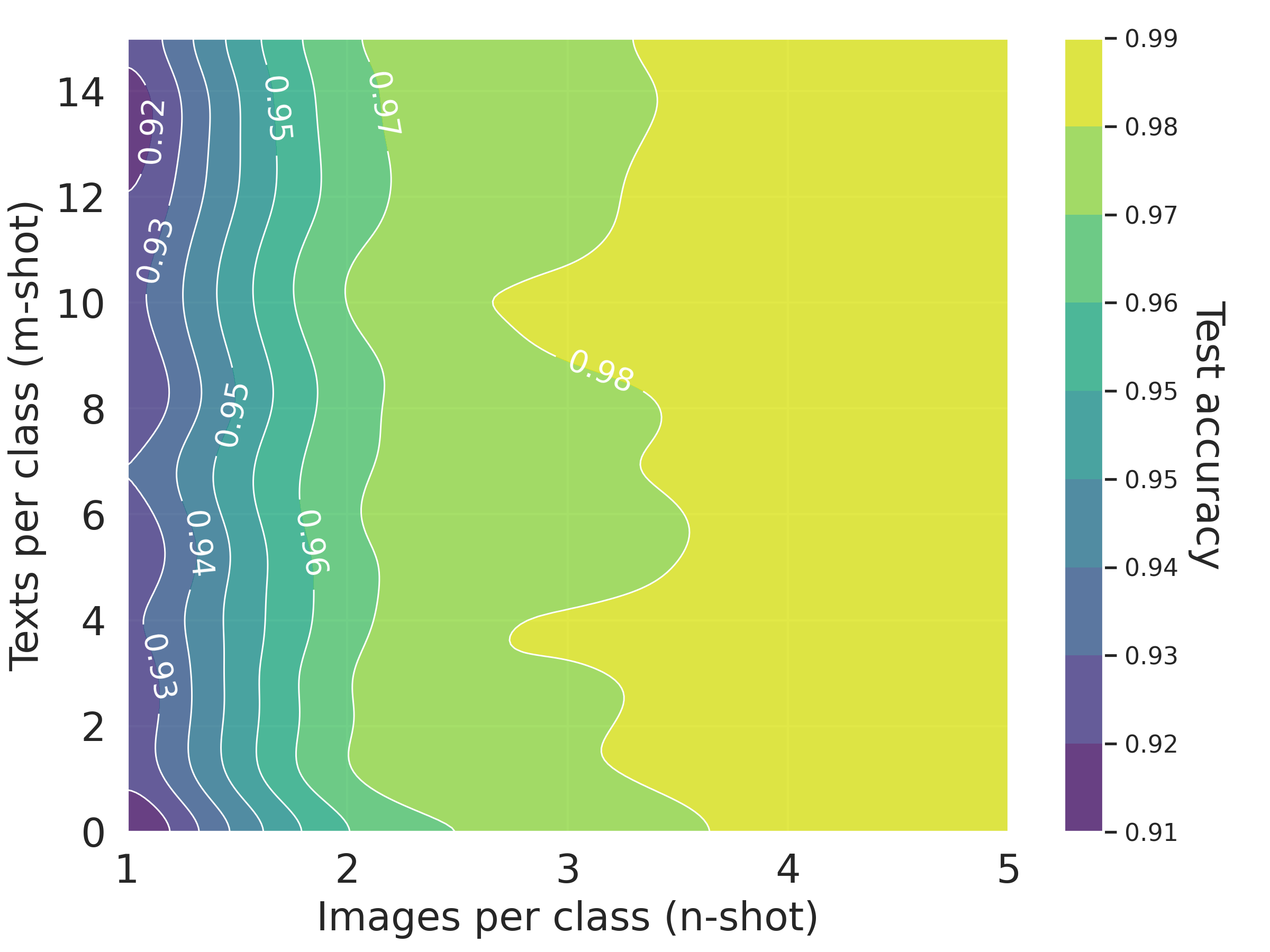}
\caption{\textbf{Oxford Flowers.} 1 img $\approx$ 1895 words}
\label{fig:isoplot_flowers_dino_vits}
\end{minipage}
\end{figure}

\begin{figure}[!htb]
\vspace{-4mm}
\centering
\begin{minipage}{0.49\textwidth}
\centering
\includegraphics[width=\textwidth]{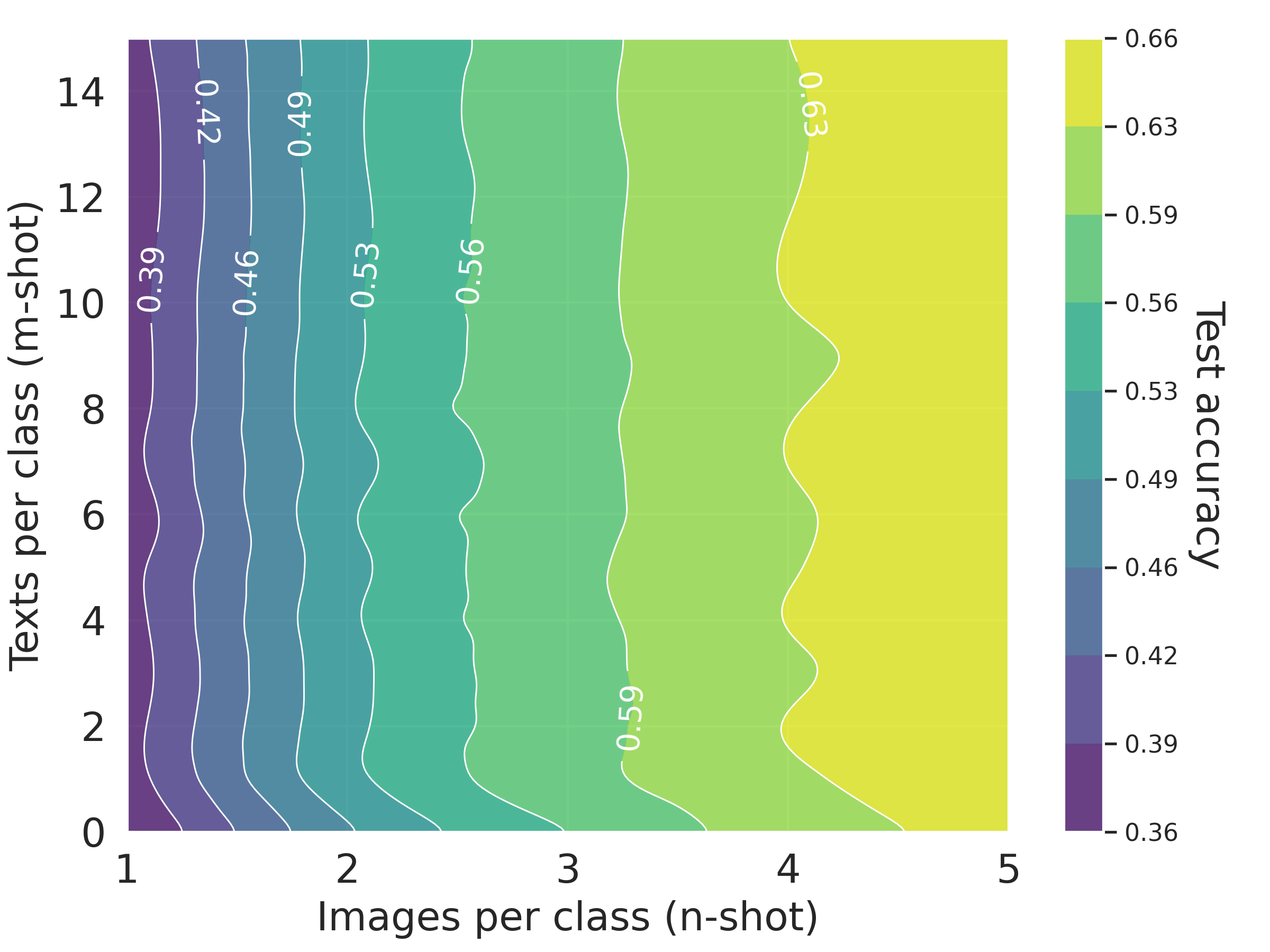}
\caption{\textbf{Food101.} 1 img $\approx$ 2608 words}
\label{fig:isoplot_food101_dino_vits}
\end{minipage}
\hfill
\begin{minipage}{0.49\textwidth}
\centering
\includegraphics[width=\textwidth]{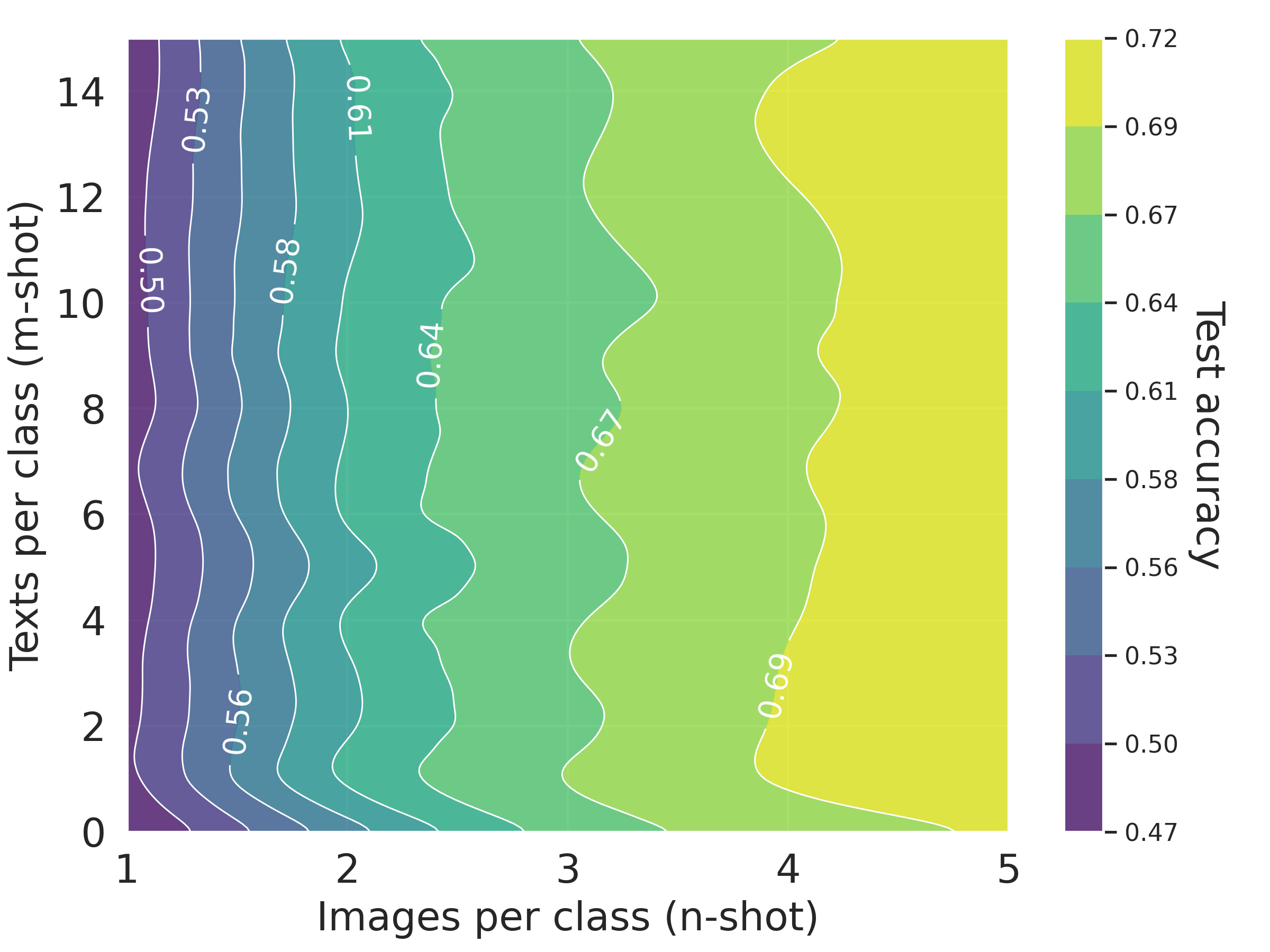}
\caption{\textbf{UCF101.} 1 img $\approx$ 2617 words}
\label{fig:isoplot_ucf101_dino_vits}
\end{minipage}
\end{figure}

\FloatBarrier

\subsubsection{Aligned Encoders (CLIP)}

\begin{figure}[!htb]
\vspace{-1mm}
\centering
\hfill
\begin{minipage}{0.49\textwidth}
\centering
\includegraphics[width=\textwidth]{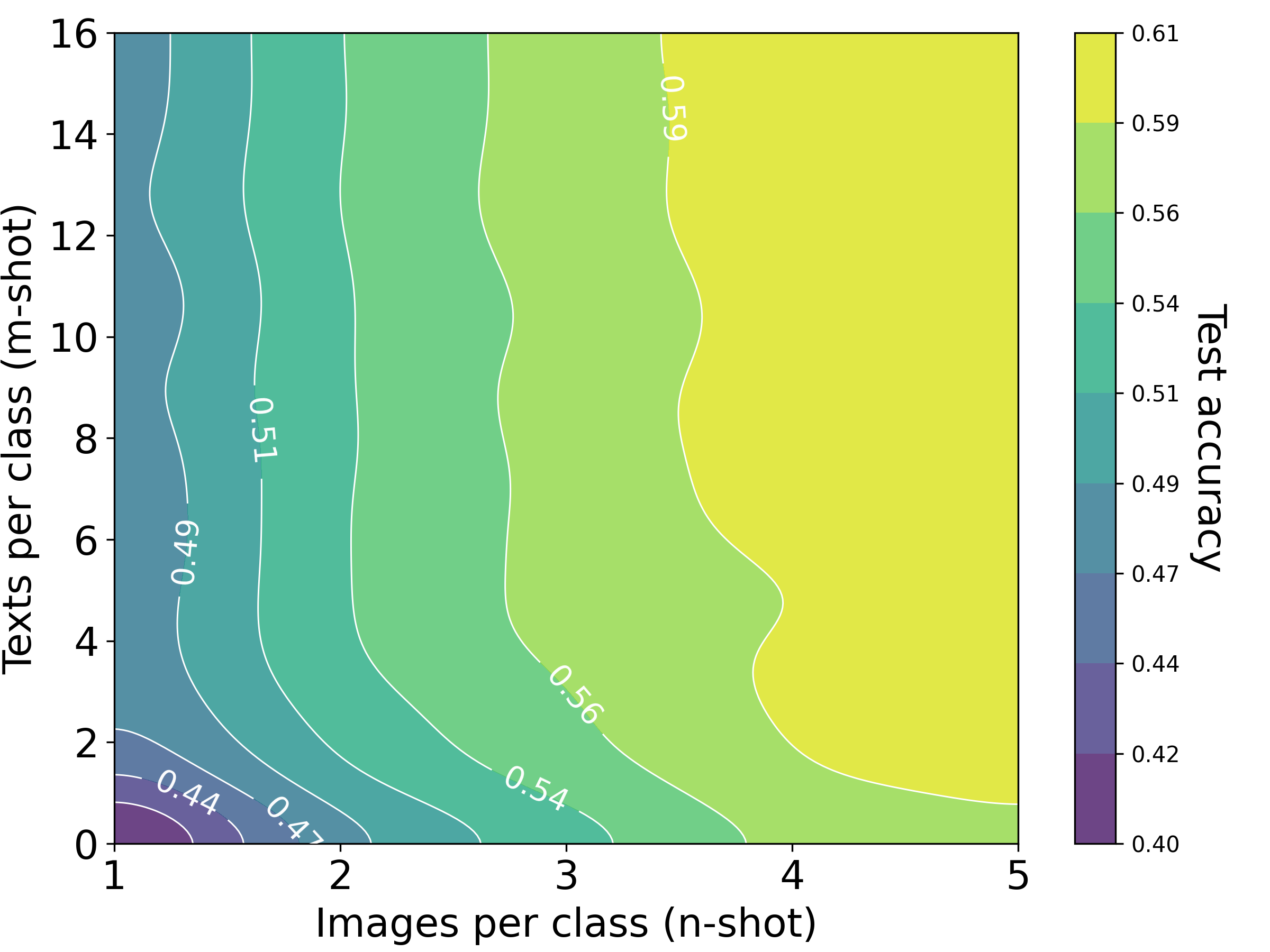}
\caption{\textbf{SUN397.} 1 img $\approx$ 221 words}
\label{fig:isoplot_sun397_clip}
\end{minipage}
\hfill
\begin{minipage}{0.49\textwidth}
\centering
\includegraphics[width=\textwidth]{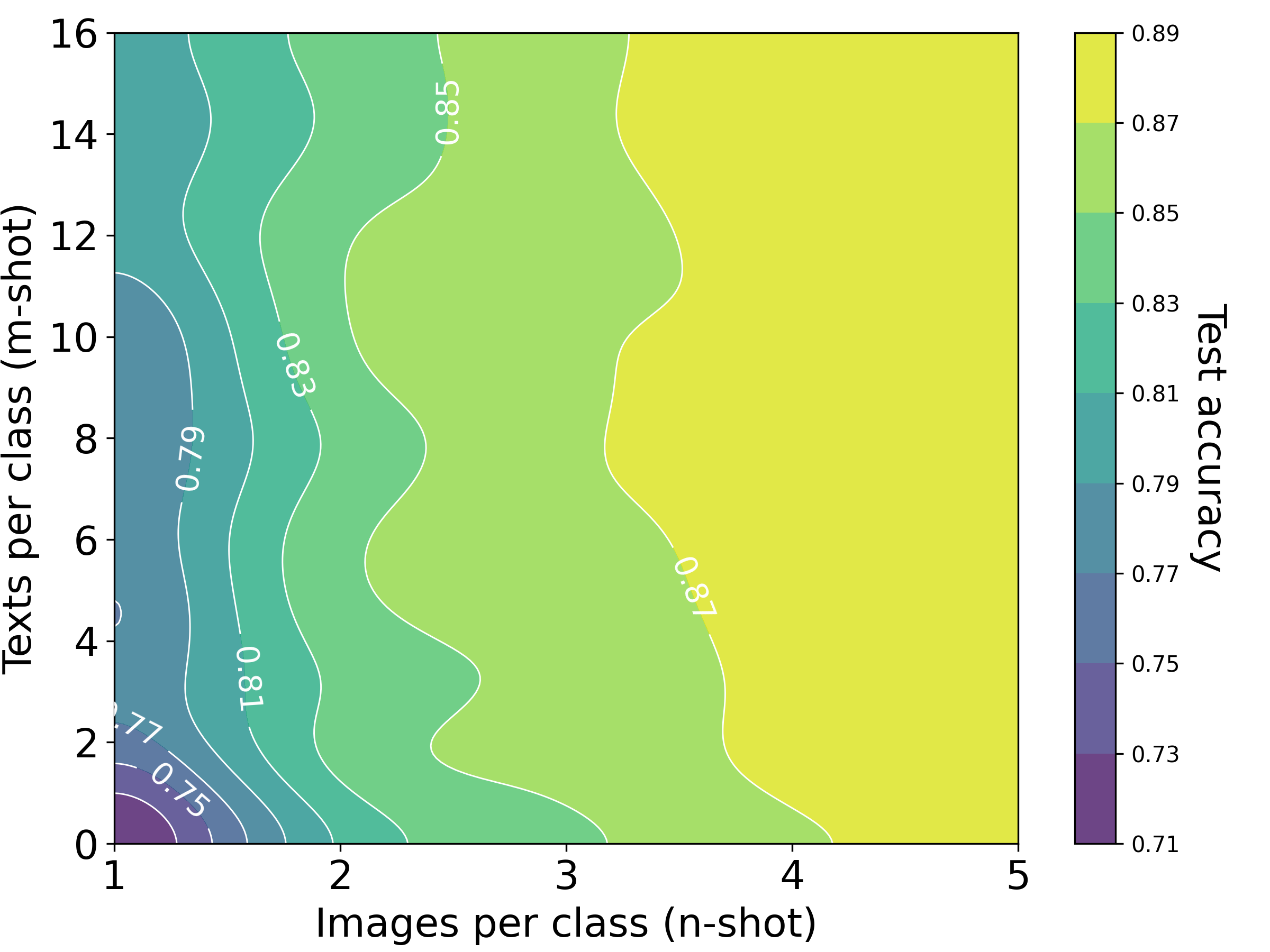}
\caption{\textbf{Caltech101.} 1 img $\approx$ 256 words}
\label{fig:isoplot_caltech101_clip}
\end{minipage}
\end{figure}

\begin{figure}[!htb]
\vspace{-4mm}
\centering
\hfill
\begin{minipage}{0.49\textwidth}
\centering
\includegraphics[width=\textwidth]{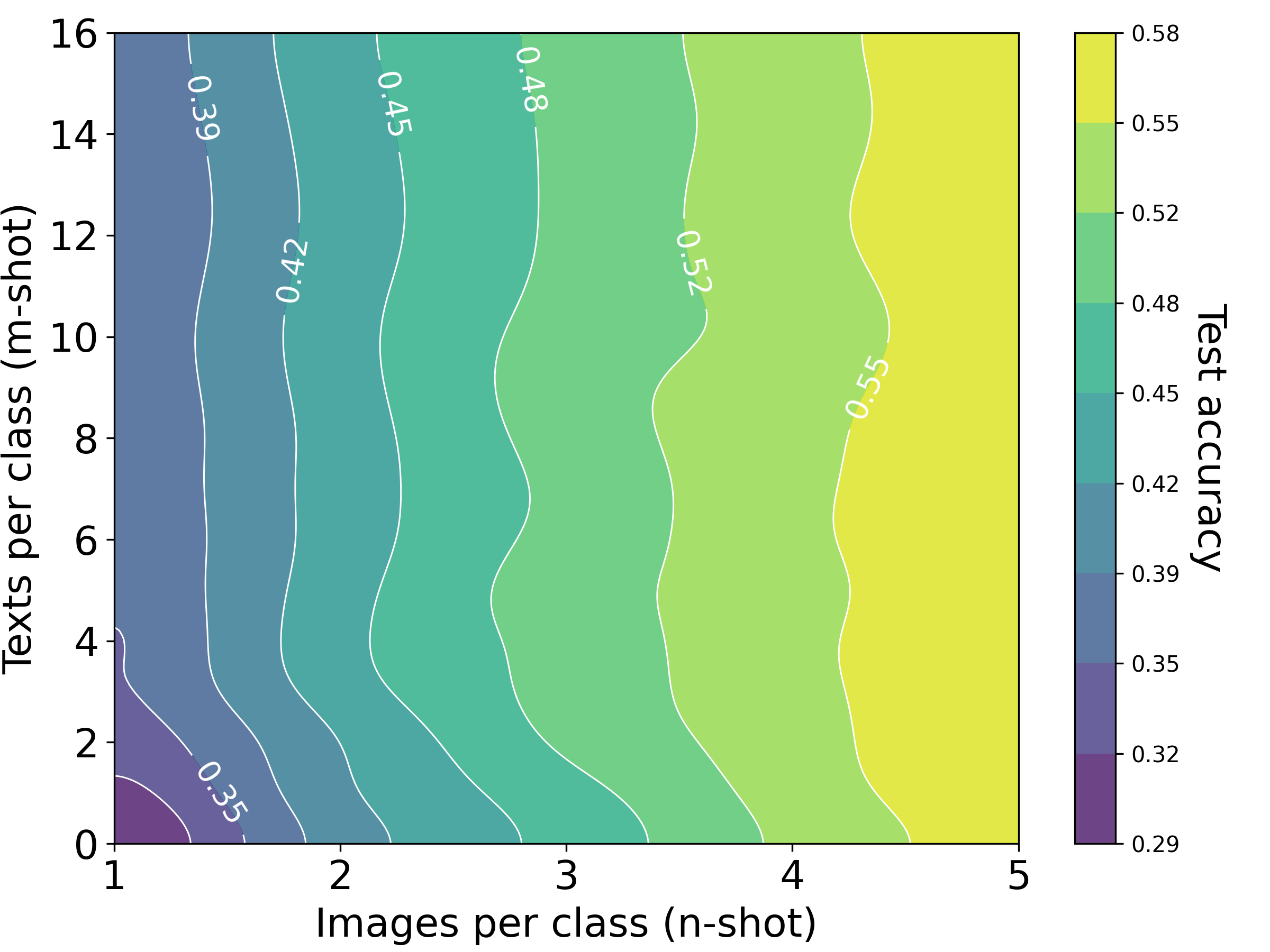}
\caption{\textbf{Stanford Cars.} 1 img $\approx$ 649 words}
\label{fig:isoplot_cars_clip}
\end{minipage}
\hfill
\begin{minipage}{0.49\textwidth}
\centering
\includegraphics[width=\textwidth]{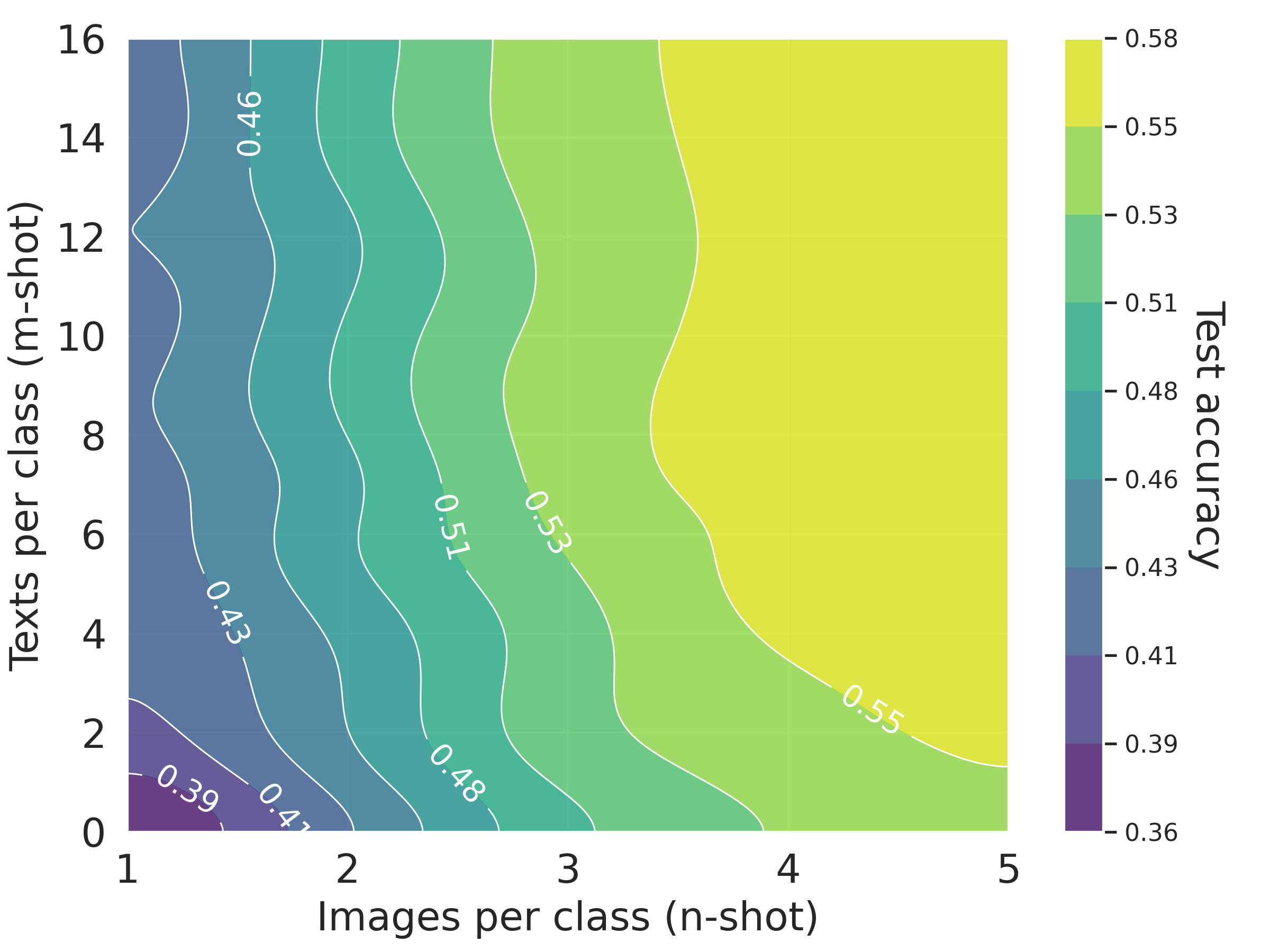}
\caption{\textbf{DTD.} 1 img $\approx$ 228 words}
\label{fig:isoplot_dtd_clip}
\end{minipage}
\end{figure}

\begin{figure}[!htb]
\vspace{-4mm}
\centering
\begin{minipage}{0.49\textwidth}
\centering
\includegraphics[width=\textwidth]{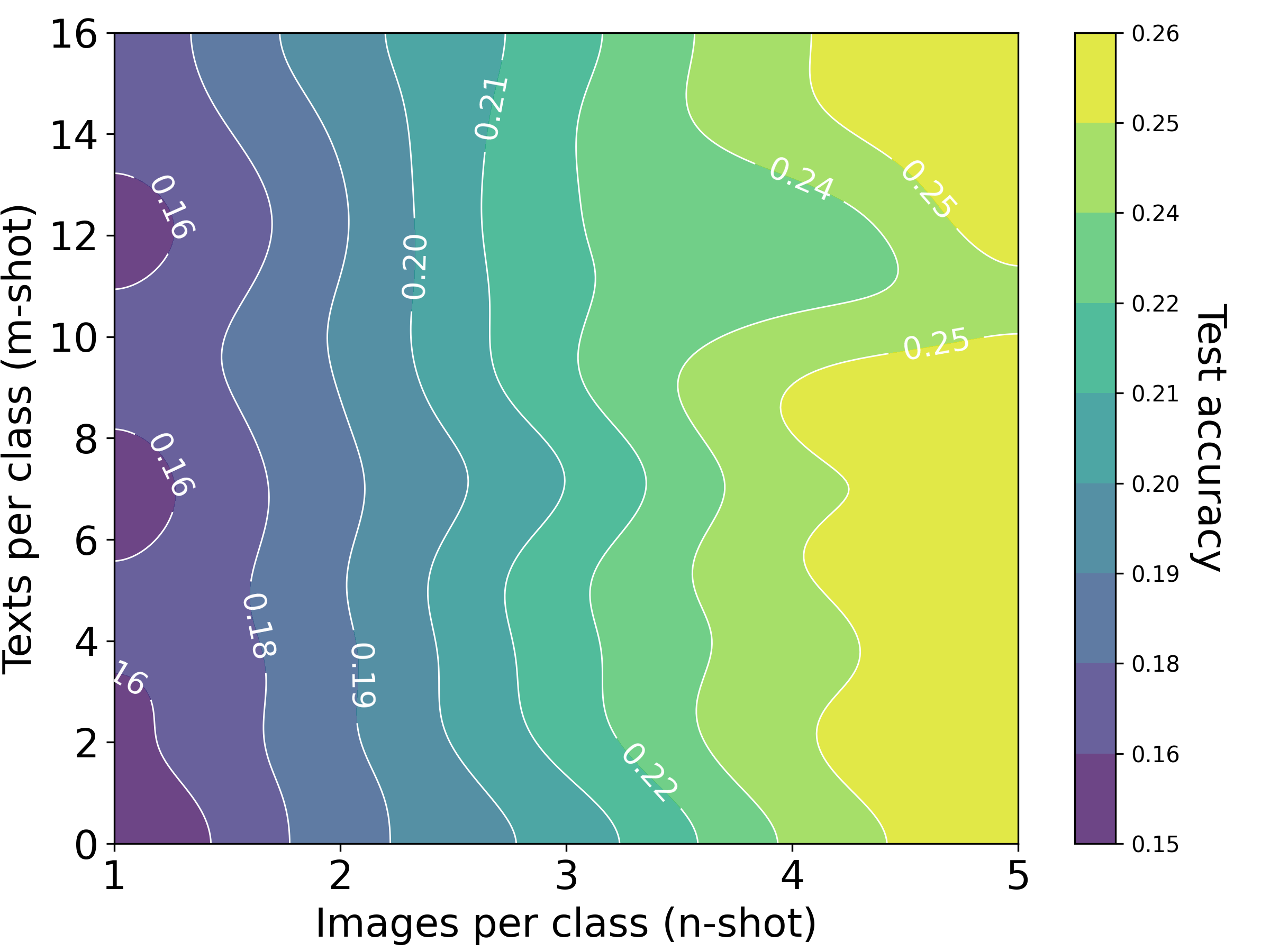}
\caption{\textbf{FGVC Aircraft.} 1 img $\approx$ 691 words}
\label{fig:isoplot_fgvc_clip}
\end{minipage}
\hfill
\begin{minipage}{0.49\textwidth}
\centering
\includegraphics[width=\textwidth]{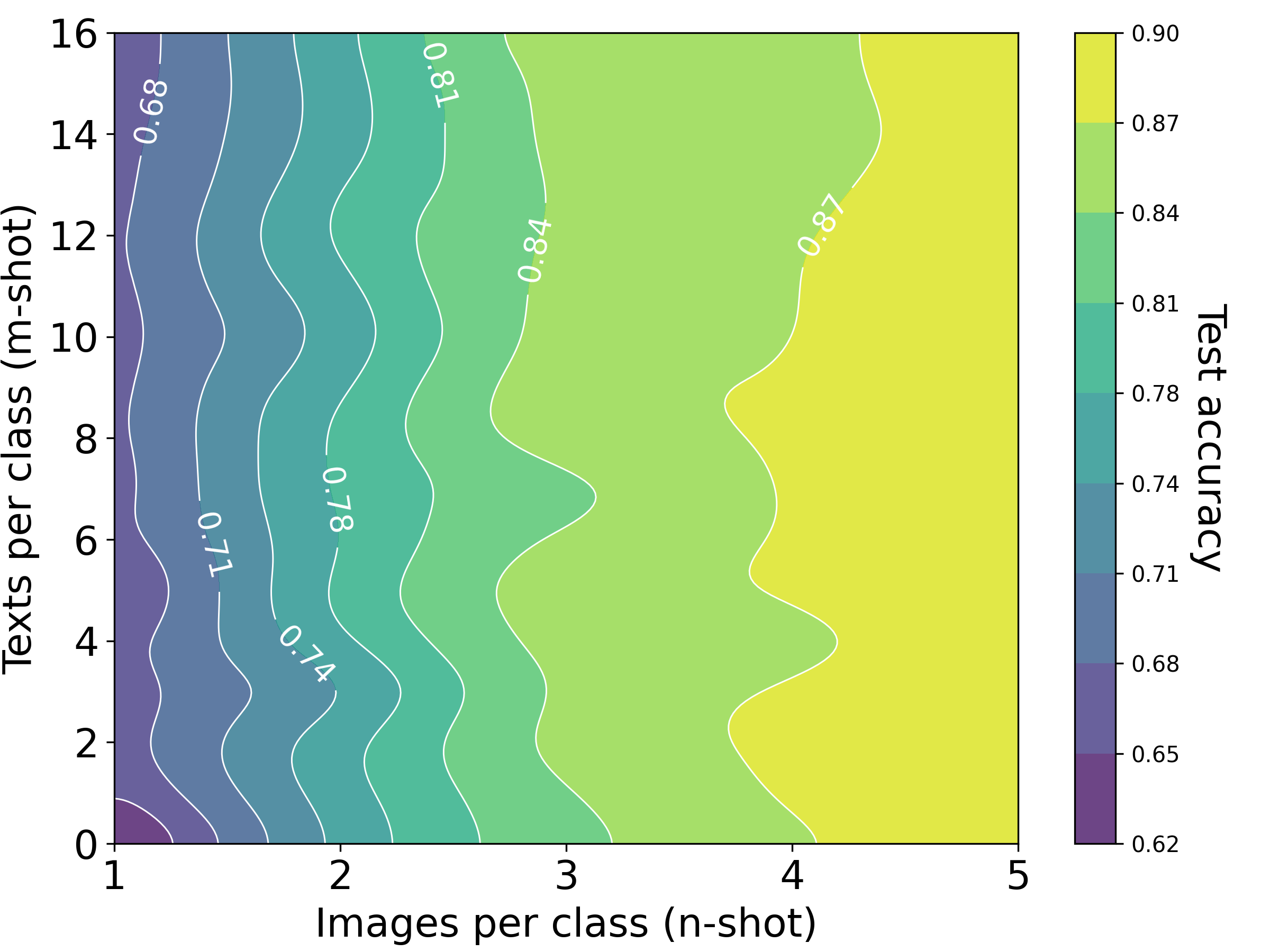}
\caption{\textbf{Oxford Flowers.} 1 img $\approx$ 851 words}
\label{fig:isoplot_flowers_clip}
\end{minipage}
\end{figure}

\begin{figure}[!htb]
\vspace{-4mm}
\centering
\hfill
\begin{minipage}{0.49\textwidth}
\centering
\includegraphics[width=\textwidth]{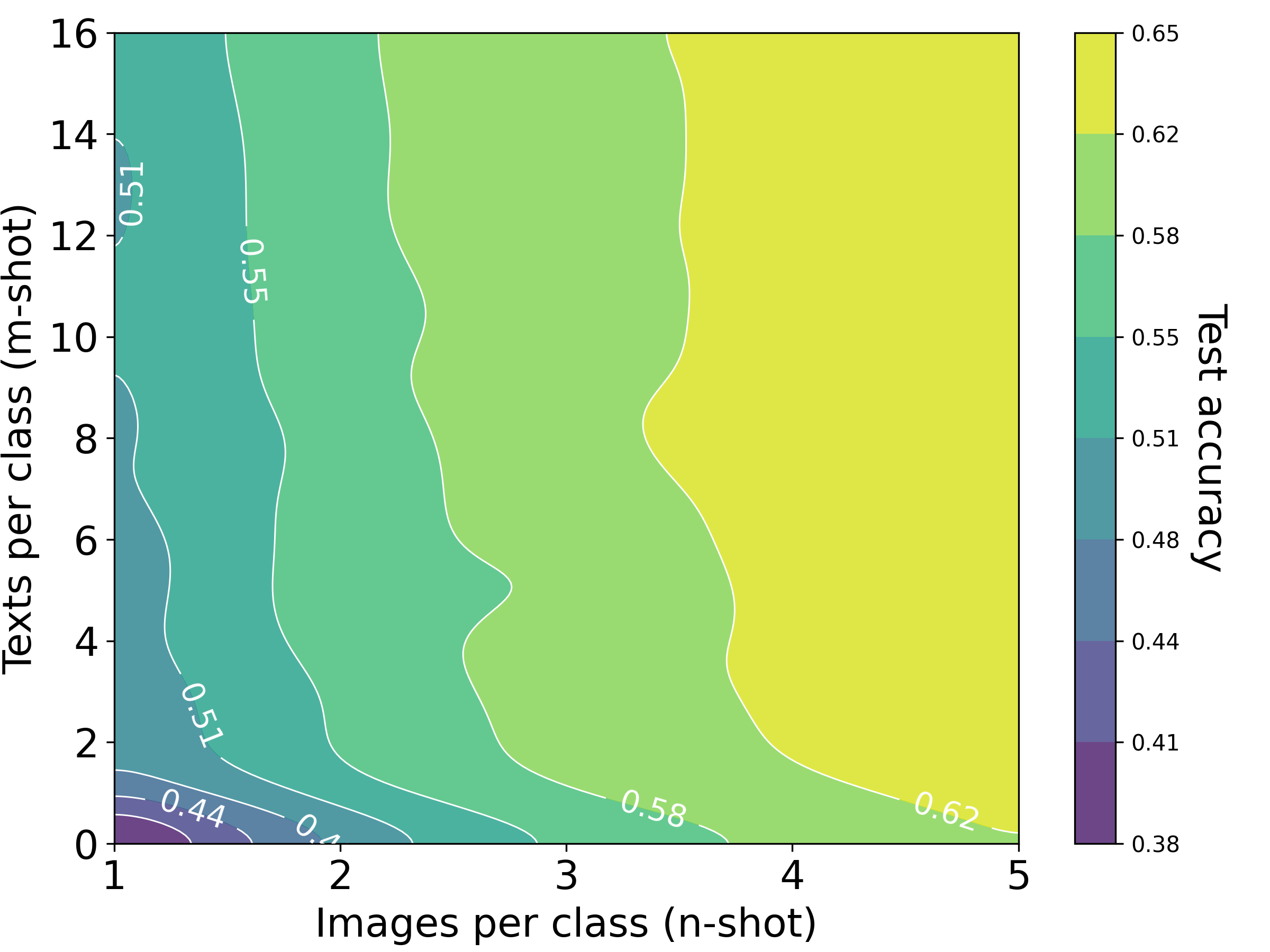}
\caption{\textbf{Food101.} 1 img $\approx$ 202 words}
\label{fig:isoplot_food101_clip}
\end{minipage}
\hfill
\begin{minipage}{0.49\textwidth}
\centering
\includegraphics[width=\textwidth]{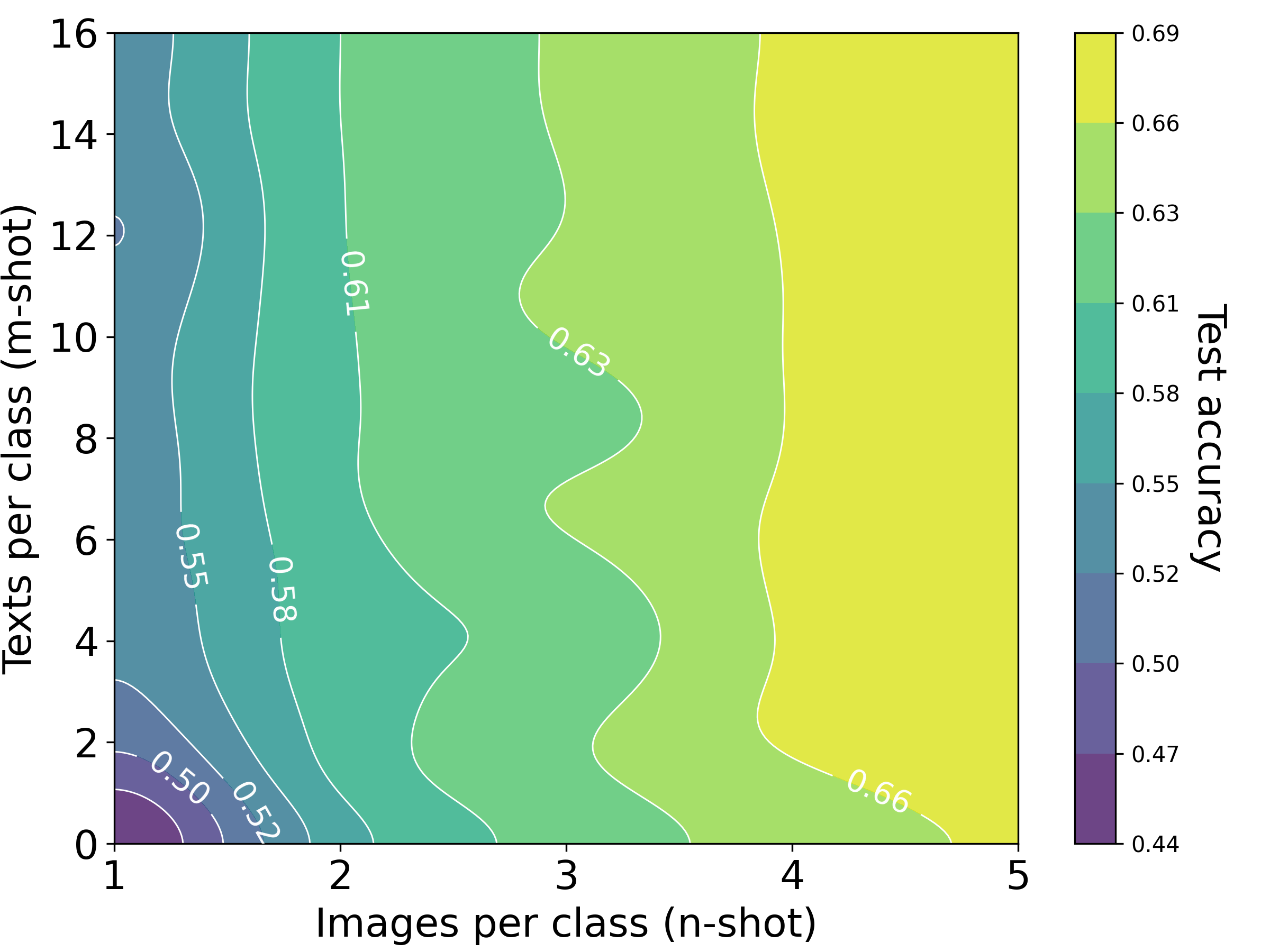}
\caption{\textbf{UCF101.} 1 img $\approx$ 393 words}
\label{fig:isoplot_ucf101_clip}
\end{minipage}
\end{figure}

\FloatBarrier

\subsection{Impact of Scaling Vision Backbone}\label{app:model_size}
In this section, we study how our method's performance scales with the size and architecture of the vision backbone. In addition to ViT-S/14 DINOv2, we extend our analysis to a range of ViT-based architectures, including ViT-B/14 and ViT-L/14 DINOv2 and ViT-B/16 and ViT-B/8 DINO models. To ensure a fair comparison, we follow the same training protocol as in previous experiments. Our method consistently outperforms the unimodal baselines in every setting. In few-shot linear probing across ViT-B/8, ViT-B/16, DINOv2-ViTs and ViT-L/14 backbones (\Cref{tab:linear_fewshot_vit_base_patch8_dino,tab:linear_fewshot_vit_base_patch16_dino,tab:linear_fewshot_dinov2_vits_full,tab:linear_fewshot_vit_base_14_dinov2,tab:linear_fewshot_dinov2_vit_l14}), we see clear gains. The same holds for full-dataset end-to-end fine-tuning of both encoder and head (\Cref{tab:full_finetune_full_ds_dino_vitb16,tab:full_finetune_full_ds_dino_vitb8,tab:remake_full_finetune_full_ds_vits_0,tab:full_finetune_full_ds_dinov2_vitb14}), and even when only the linear classifier is trained on the full splits (\Cref{tab:linear_full_dino_vitb16,tab:linear_full_dino_vitb8,tab:rename_linear_full_dinov2_vits14,tab:linear_full_dinov2_vitb14,tab:linear_full_dinov2_vitl14}).

\subsection{Impact of Varying Text Encoders}\label{app:text_encoders}
In this section, we study how our method's performance varies with different language models used for generating text embeddings. Through this experiment, we aim to understand how differences in embedding quality and model capacity affect the integration of textual information in our multimodal setup. Specifically, we cover LLMs with diverse architectures and scales, including BERT-Large, RoBERTa-Large and GPT-2 Large. As shown in~\Cref{fig:ablate_language}, adding unpaired text embeddings shows a significant boost in 1-shot accuracy and still decent gains at 16 shots on SUN397 dataset. Overall, OpenLLaMA-3B outperforms all other language models.

\begin{figure}[!htb]
    \centering
    \includegraphics[width=0.7\linewidth]{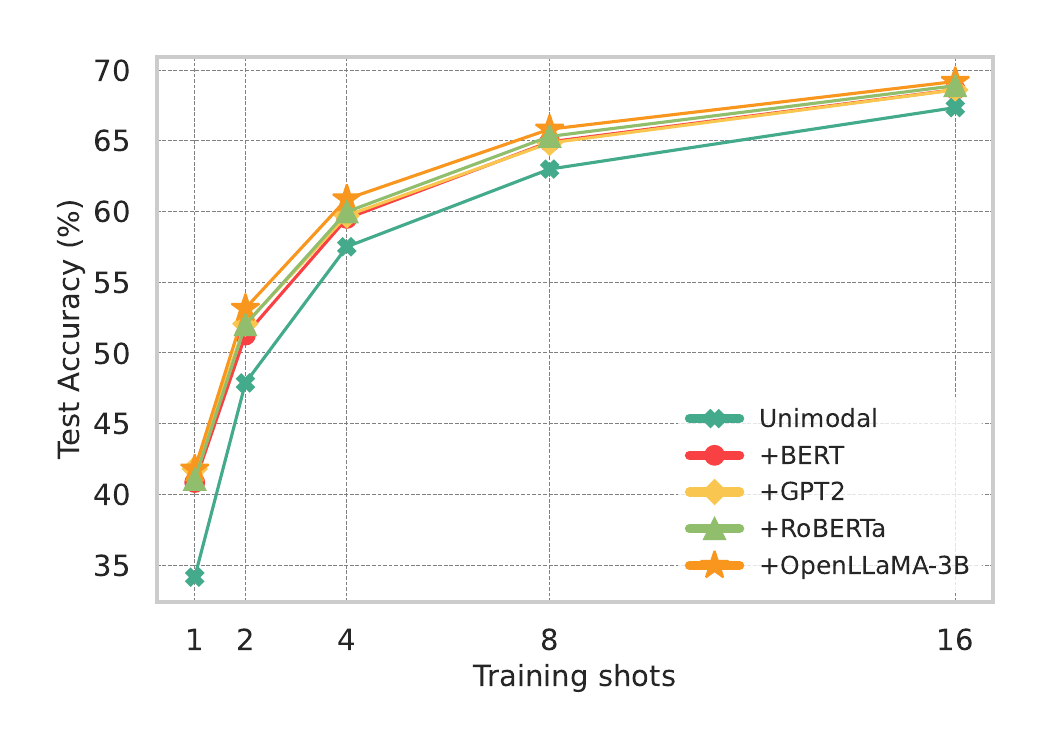}
    \caption{Few-shot classification accuracy on SUN397 using \algoname with unpaired, frozen embeddings from various pretrained language models.}
    \label{fig:ablate_language}
\end{figure}
\FloatBarrier

\subsection{Learning with Coarse-Grained vs. Fine-Grained Textual Cues}\label{app:text_complexity}
Understanding the type of information extracted from textual cues is crucial to assessing the effectiveness of our multimodal approach. A key question is whether the model merely utilizes class names or goes beyond to capture richer, more descriptive features. To investigate this, we compare the performance of our method using two types of text templates: a vanilla template that consists solely of the class name (e.g., "a photo of a [class]") and descriptive templates generated from GPT-3, as detailed in Section~\Cref{sec:exp_setup}. As shown in~\Cref{fig:ablate_vanilla_vs_gpt3_random} and~\Cref{fig:ablate_vanilla_vs_gpt3_zeroshot}, both multimodal approaches consistently outperform the unimodal baseline, with descriptions from GPT-3 offering a more substantial performance gain. This shows that leveraging richer, contextually diverse text cues can significantly enhance model performance, even in low-shot learning scenarios. 

\begin{figure}[!htb]
\centering
\begin{minipage}{0.49\textwidth}
\centering
\includegraphics[width=\textwidth]{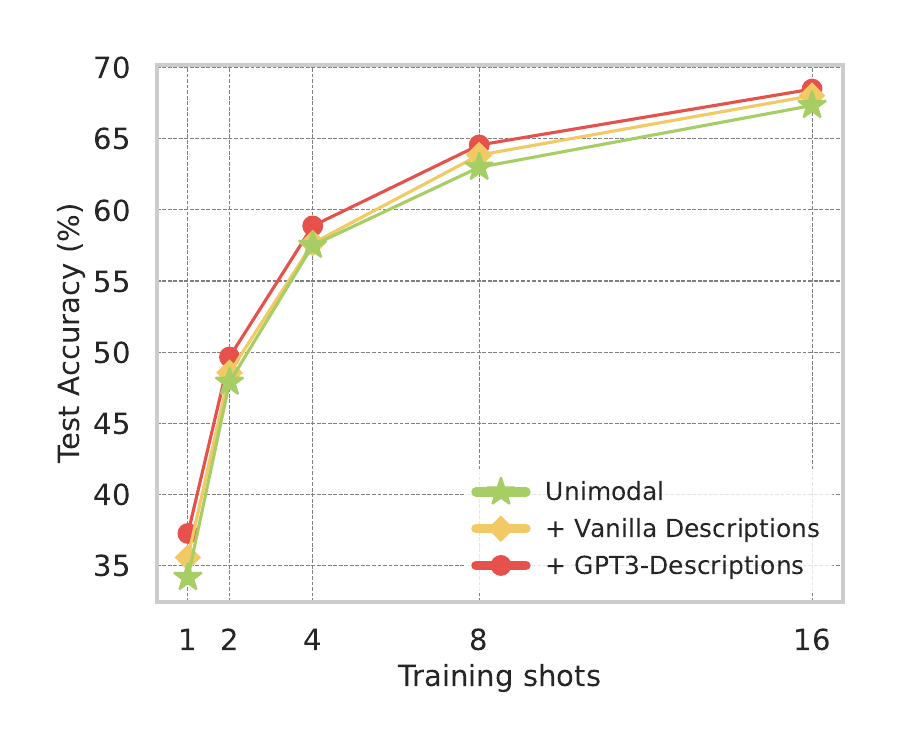}
\caption{Few‐shot SUN397 accuracy with \algoname using two levels of textual granularity: (a) vanilla class descriptions and (b) GPT-3–generated fine-grained descriptions.}
\label{fig:ablate_vanilla_vs_gpt3_random}
\end{minipage}
\hfill
\begin{minipage}{0.49\textwidth}
\centering
\includegraphics[width=\textwidth]{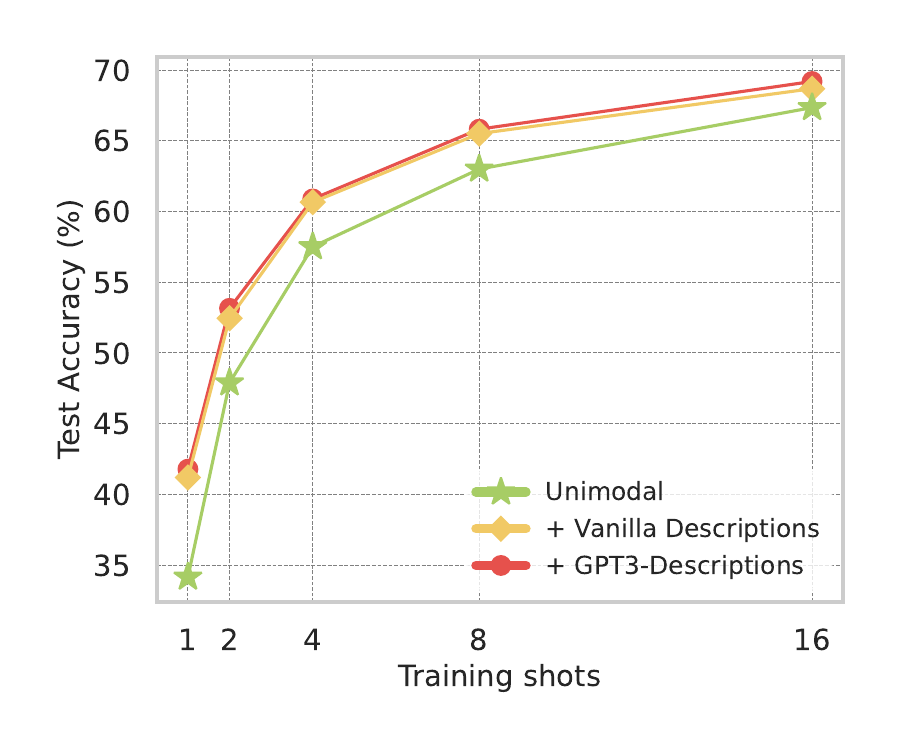}
\caption{Few‐shot SUN397 accuracy with \algoname (init) using two levels of textual granularity: (a) vanilla class descriptions and (b) GPT-3–generated fine-grained descriptions.}
    \label{fig:ablate_vanilla_vs_gpt3_zeroshot}
\end{minipage}
\end{figure}

\subsection{Impact on Performance with Increasing Unpaired Text Prompts}
Here, we investigate how classification accuracy evolves as we augment each image with an increasing number of unpaired text prompts .~\Cref {fig:increase_n_text_shots_all} shows these accuracy curves as we vary the number of unpaired text prompts per image shot across five image-shot budgets. In every regime, our multimodal initialization (“Ours (init)”) outperforms training the head from scratch, with most of the gain coming from the first few prompts and gains tapering off thereafter. Note that we do not enforce diversity or novelty in the unpaired text prompts—simply adding more sentences does not guarantee additional information.

\begin{figure}[!htb]
\centering
\begin{minipage}{0.32\textwidth}
\centering
\includegraphics[width=\textwidth]{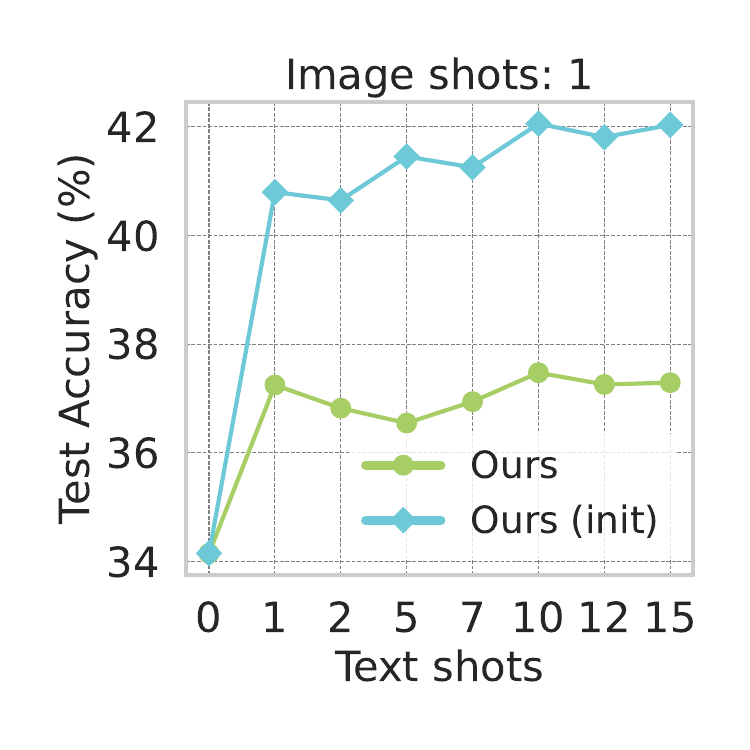}
\end{minipage}
\begin{minipage}{0.32\textwidth}
\centering
\includegraphics[width=\textwidth]{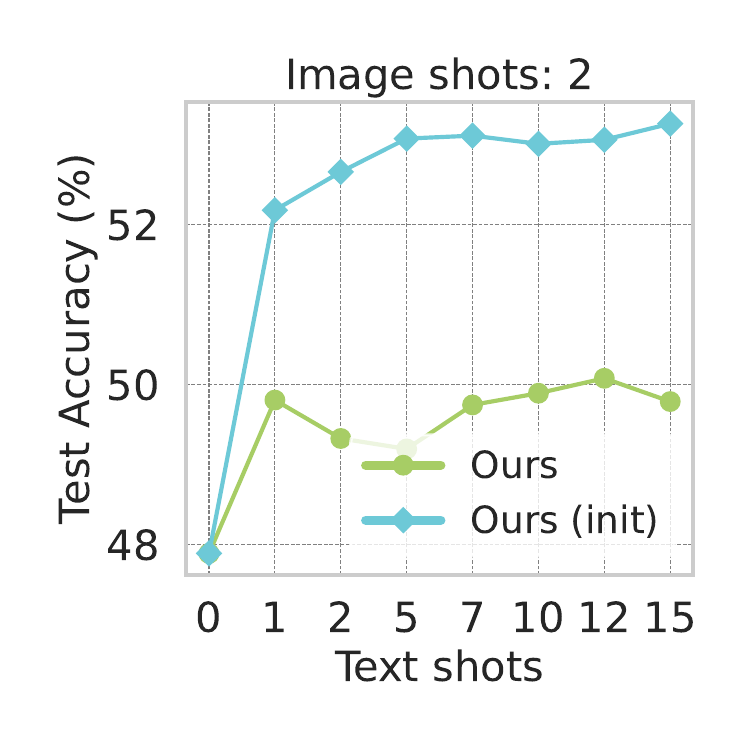}
\end{minipage}
\begin{minipage}{0.32\textwidth}
\centering
\includegraphics[width=\textwidth]{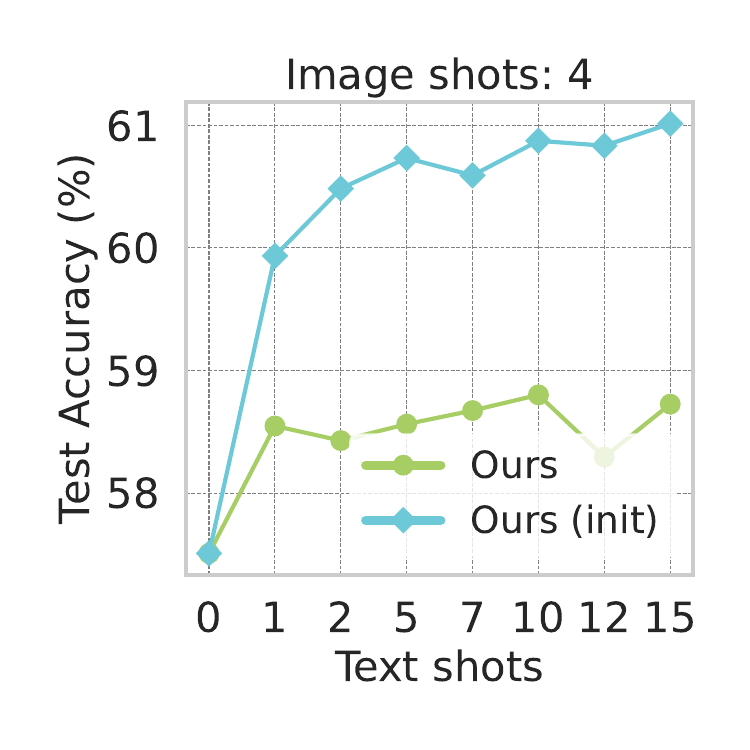}
\end{minipage}
\begin{minipage}{0.32\textwidth}
\centering
\includegraphics[width=\textwidth]{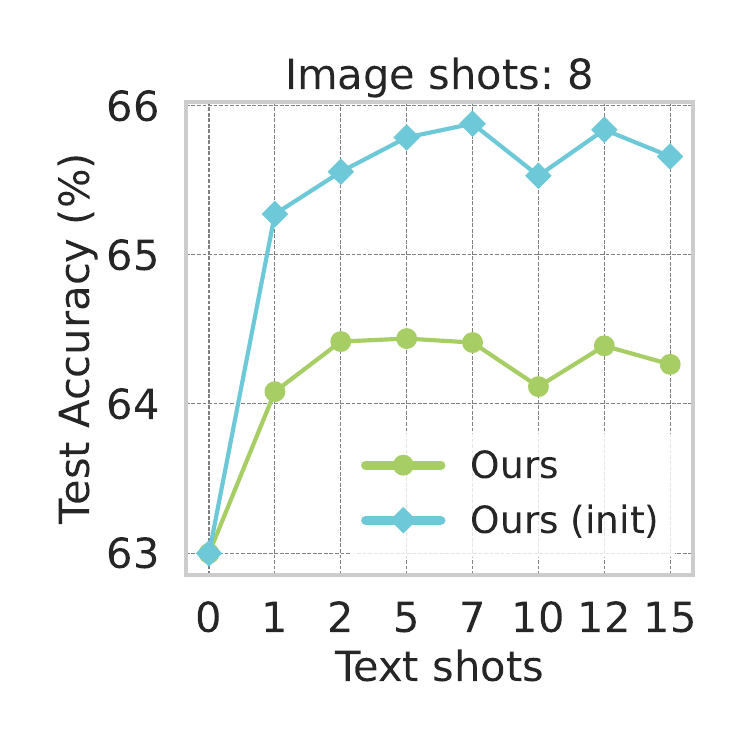}
\end{minipage}
\begin{minipage}{0.32\textwidth}
\centering
\includegraphics[width=\textwidth]{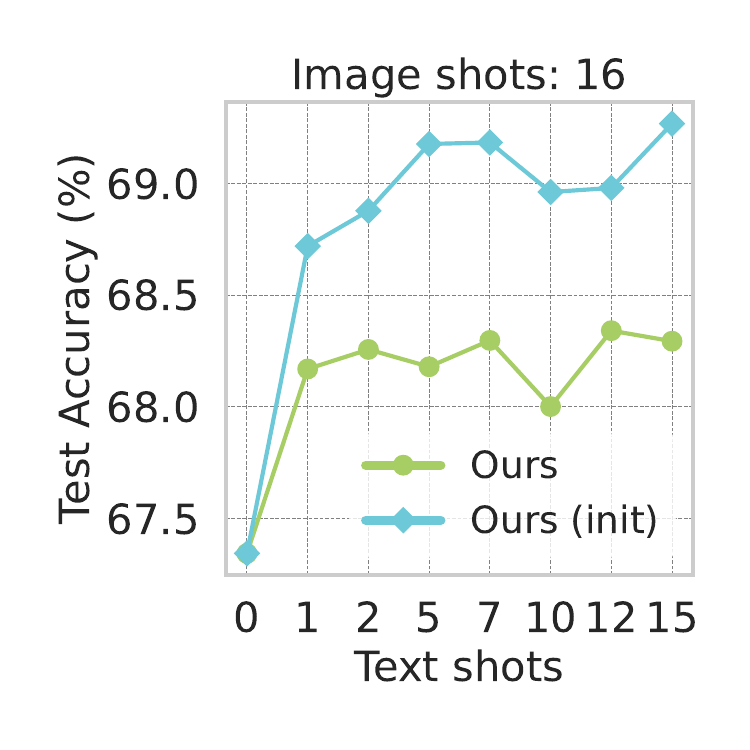}
\end{minipage}
\caption{Classification accuracy as a function of the number of text prompts per image shot for the SUN397 Dataset.}
\label{fig:increase_n_text_shots_all}
\end{figure}

\FloatBarrier


\subsection{Effect of Unrelated Auxiliary Modalities}\label{app:unrelated_text}
While our main experiments investigate the benefits of incorporating semantically related auxiliary modalities, an important question is what happens when the auxiliary data is unrelated or even adversarial. In our unpaired setting, this requires reasoning not about individual mismatched pairs, but about the relationship between entire data distributions. To meaningfully study the adversarial case, one would need a principled definition of ``negative correlation'' between modality distributions, as well as metrics for quantifying it. We leave such a rigorous framework to future work.

Instead, here, we evaluate a simpler but informative scenario: the \emph{independent} case, where the auxiliary modality is semantically unrelated to the target modality. We use our few-shot learning setup to train \algoname with image data from SUN397 and text data from Stanford-Cars, two semantically unrelated datasets. \Cref{tab:unrelated_text} reports results for few-shot image classification on the SUN397 dataset. When the auxiliary text is unrelated to the image domain, performance does not improve over the unimodal baseline. In contrast, semantically related text provides consistent gains across all shot counts. 

\begin{table}[!htb]
\centering
\caption{Training \algoname\ with unrelated auxiliary text from the Stanford Cars dataset does not yield performance gains on image classification on the SUN397 dataset. However, semantically correlated text (from SUN397) consistently improves accuracy across few-shot settings.}
\label{tab:unrelated_text}
\begin{tabular}{lccccc}
\toprule
Method & 1-shot & 2-shot & 4-shot & 8-shot & 16-shot \\
\midrule
Unimodal (Image) & 34.15 & 47.88 & 57.51 & 63.00 & 67.35 \\
\algoname\ (Image + Unrelated Text) & 35.27 & 47.12 & 57.50 & 62.45 & 67.25 \\
\algoname\ (Image + Related Text) & \textbf{41.79} & \textbf{53.15} & \textbf{60.89} & \textbf{65.82} & \textbf{69.19} \\
\bottomrule
\end{tabular}
\end{table}

\subsection{Extension to More Than Two Modalities}\label{app:two_modalities_at_once}
Our framework naturally extends beyond two modalities. We validate this by extending our image and audio classification experiments on ImageNet-ESC to use all three modalities: image, audio, and text. Training alternates batches from each modality while applying modality-specific classification losses, consistent with our two-modality setup. 

\Cref{tab:multi_audio,tab:multi_image} summarize results on audio and image classification. In both cases, incorporating additional unpaired modalities consistently improves performance over unimodal or pairwise settings. These findings demonstrate that the performance benefits of \algoname extend robustly to more than two modalities. 

From a theoretical perspective, our results also generalize directly. Since modality-specific observations are conditionally independent given the ground-truth latent $\mathcal{Z}^*$, their joint contribution to the Fisher information reduces to the sum of the unimodal blocks. Consequently, the total contribution of all auxiliary modalities (excluding the primary modality $X$) can be obtained by summing their individual Fisher information matrices.

\begin{table}[!htb]
\centering
\caption{Training \algoname with all three modalities from ImageNet-ESC outperforms unimodal or pairwise training on \textbf{audio classification}. Numbers in parentheses denote relative improvements over the unimodal baseline.}
\label{tab:multi_audio}
\begin{tabular}{llccc}
\toprule
Dataset & Method & 1-shot & 2-shot & 4-shot \\
\midrule
ESC-19 & Audio-Only            & 28.78 & 39.85 & 52.22 \\
       & Audio + Image         & 34.59 & 44.13 & 50.00 \\
       & Audio + Text          & 35.47 & 52.19 & 52.90 \\
       & Audio + Image + Text  & \textbf{44.46} (+54.4\%) & \textbf{51.48} (+29.2\%) & \textbf{56.57} (+8.3\%) \\
\midrule
ESC-27 & Audio-Only            & 25.65 & 35.99 & 44.79 \\
       & Audio + Image         & 37.15 & 42.86 & 51.15 \\
       & Audio + Text          & 41.97 & 47.02 & 53.85 \\
       & Audio + Image + Text  & \textbf{44.68} (+74.2\%) & \textbf{48.03} (+33.4\%) & \textbf{54.16} (+20.9\%) \\
\bottomrule
\end{tabular}
\end{table}

\begin{table}[!htb]
\centering
\caption{Training \algoname with all three modalities from ImageNet-ESC outperforms unimodal or pairwise training on \textbf{image classification}. Numbers in parentheses denote relative improvements over the unimodal baseline.}
\label{tab:multi_image}
\begin{tabular}{llccc}
\toprule
Dataset & Method & 1-shot & 2-shot & 4-shot \\
\midrule
ESC-19 & Image-Only            & 60.28 & 74.10 & 78.70 \\
       & Image + Audio         & 64.63 & 76.17 & 85.02 \\
       & Image + Text          & 88.84 & 89.71 & 92.07 \\
       & Image + Audio + Text  & \textbf{90.55} (+50.2\%) & \textbf{91.08} (+22.9\%) & \textbf{91.72} (+16.5\%) \\
\midrule
ESC-27 & Image-Only            & 55.33 & 65.60 & 77.85 \\
       & Image + Audio         & 59.75 & 70.93 & 78.14 \\
       & Image + Text          & 86.39 & 88.91 & 90.14 \\
       & Image + Audio + Text  & \textbf{88.22} (+59.5\%) & \textbf{88.96} (+35.6\%) & \textbf{91.78} (+17.9\%) \\
\bottomrule
\end{tabular}
\end{table}

\subsection{Effect of Ratio of Modality Batches}\label{app:batch_ratio_ablation}

In our main experiments, \algoname was trained with a simple $1{:}1$ alternation of batches across modalities. To study the effect of this schedule, we ablate the ratio of text to image batches, denoted by $r$, on SUN397 using ViT-S/14 DINOv2 (vision) and OpenLLaMA-3B (text). We evaluate both in the (a) linear probe setting and the (b) full finetuning setting. 

Across both settings and for both \algoname and \algoname(init), we observe that the choice of $r$ has little impact on performance. The gains primarily arise from the presence of auxiliary information rather than the exact frequency of its appearance during training.

\begin{table}[h]
\centering
\caption{Few-shot linear probing with different ratios of text-to-image batches on SUN397 using ViT-S/14 DINOv2 and OpenLLaMA-3B. Performance remains stable across ratios $r$, indicating robustness of \algoname to batch scheduling.}
\label{tab:ratio_linear}
\begin{tabular}{lcccccc}
\toprule
Method & Shot & $r=0.25$ & $r=0.5$ & $r=1.0$ & $r=2.0$ & $r=4.0$ \\
\midrule
\algoname          & 2-shot  & 49.03 & 49.23 & 49.65 & 49.56 & 49.97 \\
          & 4-shot  & 59.05 & 59.06 & 58.87 & 58.92 & 58.70 \\
          & 8-shot  & 64.63 & 65.24 & 64.57 & 64.98 & 65.29 \\
          & 16-shot & 68.55 & 68.79 & 68.50 & 68.86 & 68.36 \\
\midrule
\algoname(init) & 1-shot  & 42.60 & 42.13 & 41.79 & 41.95 & 42.04 \\
           & 2-shot  & 52.83 & 53.01 & 53.15 & 53.04 & 52.81 \\
           & 4-shot  & 61.10 & 61.09 & 60.89 & 60.63 & 60.99 \\
           & 8-shot  & 65.29 & 65.32 & 65.82 & 65.15 & 65.21 \\
           & 16-shot & 69.19 & 68.91 & 69.19 & 68.51 & 68.71 \\
\bottomrule
\end{tabular}
\end{table}

\begin{table}[h]
\centering
\caption{Full finetuning of the vision encoder and linear head with different text-to-image batch ratios $r$ on SUN397. Results show that performance is largely insensitive to $r$.}
\label{tab:ratio_finetune}
\begin{tabular}{lccccc}
\toprule
Method & $r=0.25$ & $r=0.5$ & $r=1.0$ & $r=2.0$ & $r=4.0$ \\
\midrule
\algoname       & 66.44 & 66.80 & 66.72 & 67.60 & 66.44 \\
\algoname (init) & 66.58 & 65.41 & 66.03 & 65.86 & 64.25 \\
\bottomrule
\end{tabular}
\end{table}

\subsection{Effect of Freezing vs.\ Unfreezing the Text Encoder}\label{app:freeze_text_enc_ablation}

In all our main experiments, we freeze the text encoder. This design choice allows us to isolate the role of the auxiliary modality, ensuring that improvements in the primary modality (e.g., vision) arise from cross-modal transfer rather than joint training of both encoders. 

In principle, however, one can also unfreeze the text encoder and update it during training. To study this, we ablate freezing versus unfreezing on Stanford Cars and SUN397 using ViT-S/14 DINOv2 (vision) and OpenLLaMA-3B (text). As shown in Table~\ref{tab:freeze}, unfreezing the text encoder improves performance on Stanford Cars, but slightly reduces performance on SUN397—likely due to the larger number of trainable parameters and increased optimization complexity. 

\begin{table}[h]
\centering
\caption{Effect of freezing versus unfreezing the text encoder when training with \algoname. Freezing stabilizes training and often yields slightly stronger gains.}
\label{tab:freeze}
\begin{tabular}{lcc}
\toprule
Method & Stanford Cars & SUN397 \\
\midrule
Unimodal (Image Only) & 79.45 & 66.20 \\
\algoname (Unfrozen Text Encoder) & 84.23 & 65.80 \\
\algoname (Frozen Text Encoder)   & \textbf{84.87} & \textbf{66.72} \\
\bottomrule
\end{tabular}
\end{table}

\subsection{Additional Experiments for Audio-Visual Setting}\label{app:audio_clip}
In this section, we extend our unpaired multimodal framework to the tri‐modal ImageNet–ESC benchmark, examining how unpaired audio and text signals can enhance image classification under both aligned (\Cref{app:img_classif_audio_text_clip}) and unaligned encoders(\Cref{app:img_classif_audio_text_dino}). We then reverse the setting, showing that unpaired visual and textual context likewise improves audio classification (\Cref{app:audio_clip_img_text_clip}).

\subsubsection{Improving Image Classification with Unpaired Audio and Text (Unaligned encoders)}\label{app:img_classif_audio_text_dino}

\begin{figure}[!htb]
    \centering
    \includegraphics[width=1.\linewidth]{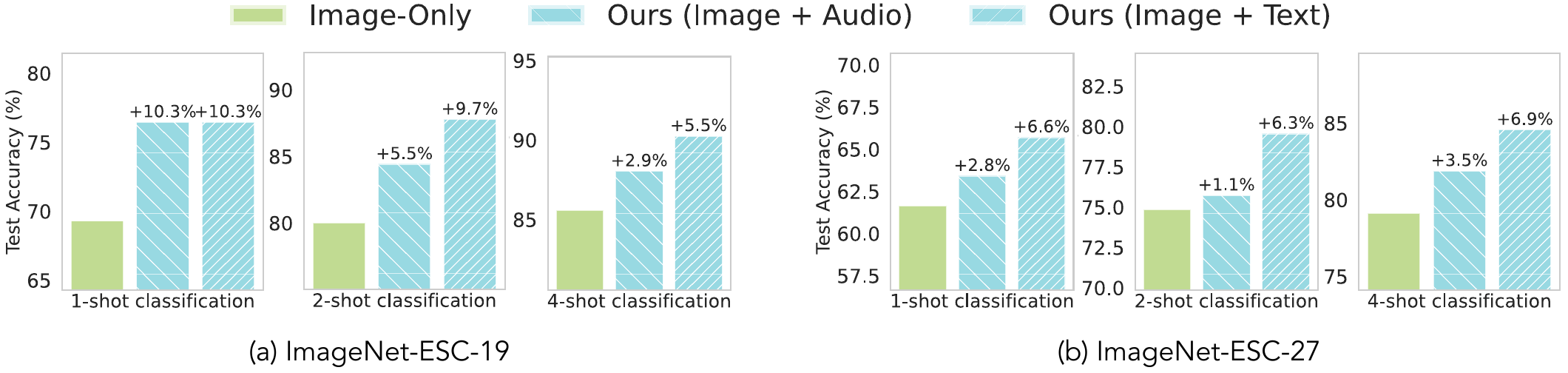}
    \caption{\algoname improves image classification using unpaired audio and text samples on both
    ImageNet-ESC-19 and ImageNet-ESC-27 benchmarks when trained on top of DINOv2 VIT-S/14 and OpenLLaMa-3B.}
    \label{fig:imagenet_esc_image_classif_dino}
\end{figure}

\FloatBarrier
\subsubsection{Improving Image Classification with Unpaired Audio and Text (Aligned encoders)}\label{app:img_classif_audio_text_clip}

\begin{figure}[!htb]
    \centering
    \includegraphics[width=1.\linewidth]{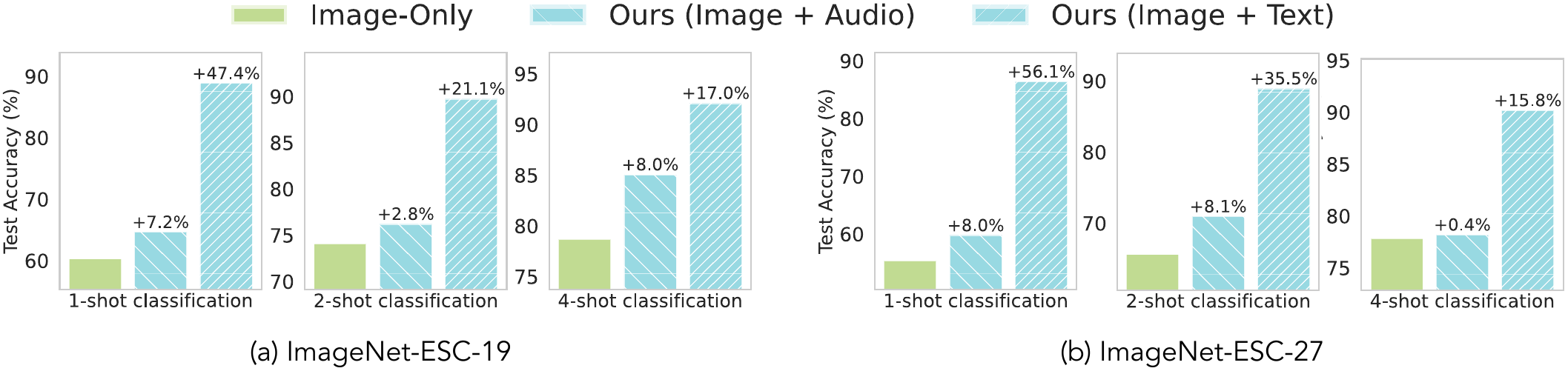}
    \caption{\algoname improves image classification using unpaired audio and text samples on both
    ImageNet-ESC-19 and ImageNet-ESC-27 benchmarks when trained on top of CLIP ResNet-50 image and text encoders}
    \label{fig:imagenet_esc_image_classif_clip_rn50}
\end{figure}

\FloatBarrier
\subsubsection{Improving Audio Classification with Unpaired Image and Text (Aligned encoders)}\label{app:audio_clip_img_text_clip}

\begin{figure}[!htb]
    \centering
    \includegraphics[width=1.\linewidth]{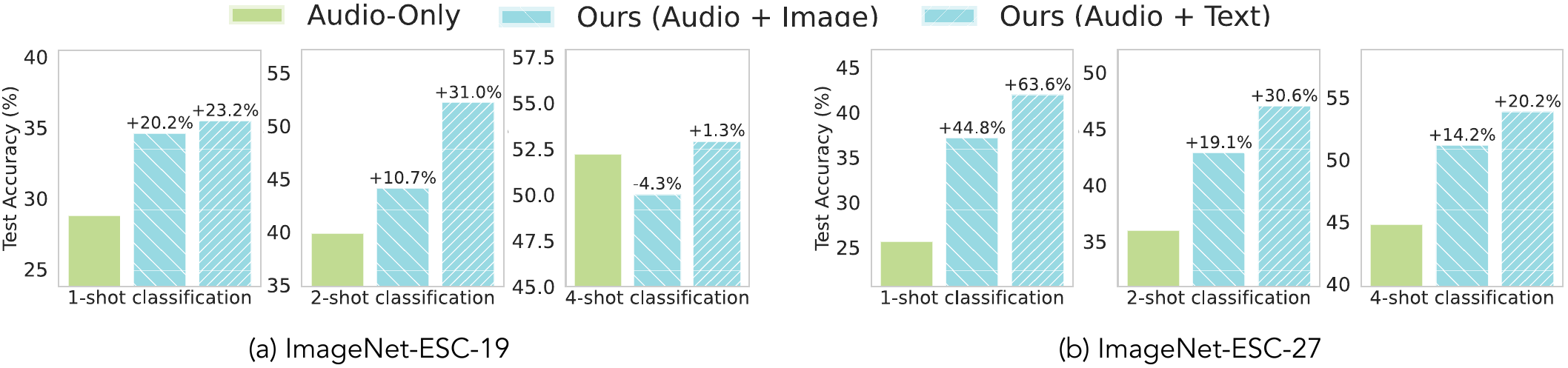}
    \caption{\algoname improves audio classification using unpaired image and text samples on both
    ImageNet-ESC-19 and ImageNet-ESC-27 benchmarks when trained on top of CLIP ResNet-50 image and text encoders}
    \label{fig:imagenet_esc_audio_classif_clip_rn50}
\end{figure}

\FloatBarrier

\subsection{Gaussian Experiments}\label{app:gaussian_details}
Here, we shift our attention to a more nuanced and intriguing question: can incorporating unpaired multimodal data actually improve the \textit{reconstruction} quality of a single modality? At first glance, this seems unlikely—why would adding data from a different modality make $X$ reconstruction better than training with $X$? Moreover, we push this question further: can incorporating data from a different modality, \textit{while keeping the total dataset size fixed}, still improve the reconstruction of $X$ compared to using the same number of samples $X$ dataset alone? This setup isolates the importance of multimodal information from mere data scaling, and surprisingly, our experiments show that this improvement is indeed possible.

To investigate this, we design a synthetic experiment inspired by our theoretical framework in~\Cref{sec:theory}. We generate data from two partially overlapping modalities, $X$ and $Y$, derived from a shared latent space $\theta_c$, while also containing unique components ($\theta_x$ and $\theta_y$). The observations follow the same linear structure as in our theory:  
\begin{align*}
   X_i &= A_{c,i} \theta_c + A_{x,i} \theta_x + \epsilon_{X,i} \\
    Y_j &= B_{c,j} \theta_c + B_{y,j} \theta_y + \epsilon_{Y,j} 
\end{align*}

The training data are generated from Gaussian latents with dimensions $\dim(\theta_c)=10$, $\dim(\theta_x)=5$, and $\dim(\theta_y)=5$. For $X$, only the first $10\%$ of the shared components in $\theta_c$ are retained at full strength, while the rest are downscaled by $0.05$; $Y$ observes all shared components at full strength. This asymmetry makes $Y$ informative about structure that is only weakly present in $X$. The validation set is constructed from the same projections but without attenuation, so both modalities fully observe $\theta_c$. Observations are 50-dimensional with Gaussian noise $\epsilon_X,\epsilon_Y \sim \mathcal{N}(0,0.09 I)$. \emph{In the unimodal setting, training on $X$ alone uses $10{,}000$ samples from $X$. When training \algoname on unpaired $X$ and $Y$, we instead use $5{,}000$ samples from each modality to keep the total sample budget fixed, ensuring a fair comparison.}

Our architecture is a shared autoencoder. Each modality $X \in \mathbb{R}^{50}$, $Y \in \mathbb{R}^{50}$ is projected into a common space of dimension $128$ through modality-specific linear layers. A shared encoder (two linear layers with ReLU) maps into a latent space of dimension $10$, followed by a shared decoder (two linear layers) that expands back to dimension $128$. Finally, modality-specific heads reconstruct the original inputs. The shared pathway enforces cross-modal alignment, while the separate adapters preserve modality fidelity.
 
As shown in~\Cref{fig:appendix_gaussian}, the surprising outcome is that training on both modalities, even when they are unpaired, consistently improves the reconstruction of $X$ compared to training solely on $X$. More strikingly, this improvement holds even when the total number of training samples is fixed, with half the data coming from $X$ and half from $Y$; showing that the model is not just benefiting from increased data quantity but from the diversity and complementary information provided by the second modality. 
\begin{figure}[!htb]
    \centering
    \includegraphics[width=0.5\linewidth]{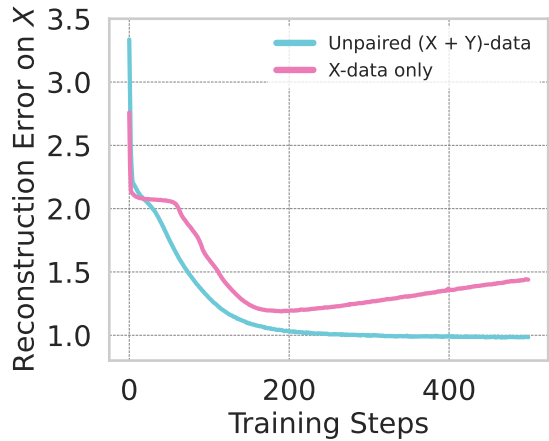}
    \caption{Training on $N/2$ samples from $X$ and $N/2$ unpaired samples from $Y$ improves test reconstruction on $X$, more than training on $N$ samples from $X$.}
    \label{fig:appendix_gaussian}
\end{figure}

\section{Analysis of the Learned Predictor}\label{app:analysis_of_classifier}

\subsection{Multimodal Neurons}\label{app:multimodal_neuron}
\Cref{fig:appendix_multimodal_neurons_fig} tracks how cross-modal correlations between vision and text embeddings evolve during training for the top-performing neurons in our sarcasm detection model. Each subplot shows a single neuron's correlation values across 30 training epochs, separately measured across all samples (green), correlations conditioned on \texttt{label}=1 (sarcastic; blue), and correlations conditioned on \texttt{label}=0 (non-sarcastic; pink). The neurons are ranked by their final overall correlation magnitude, revealing both positive correlation neurons (\#50, \#144, \#46, \#140, \#124, \#16) and negative correlation neurons (\#136, \#89, \#114) that encode the same semantic pattern with opposite signs.

The most striking observation across neurons is that sarcasm consistently appears as cross-modal discord. Non-sarcastic samples exhibit strong correlations (around 0.6–0.8) as training progresses as opposed to sarcastic samples, which exhibit weaker correlations. Across all neurons, the same trend holds: non-sarcastic correlations are highest, overall correlations fall in the middle, and sarcastic correlations remain lowest. This pattern shows that the model may have learnt to recognize sarcasm not by what is said or shown alone, but by detecting when the two disagree or agree.

\begin{figure}[!htb]
\centering
\begin{minipage}{0.32\textwidth}
\centering
\includegraphics[width=\textwidth]{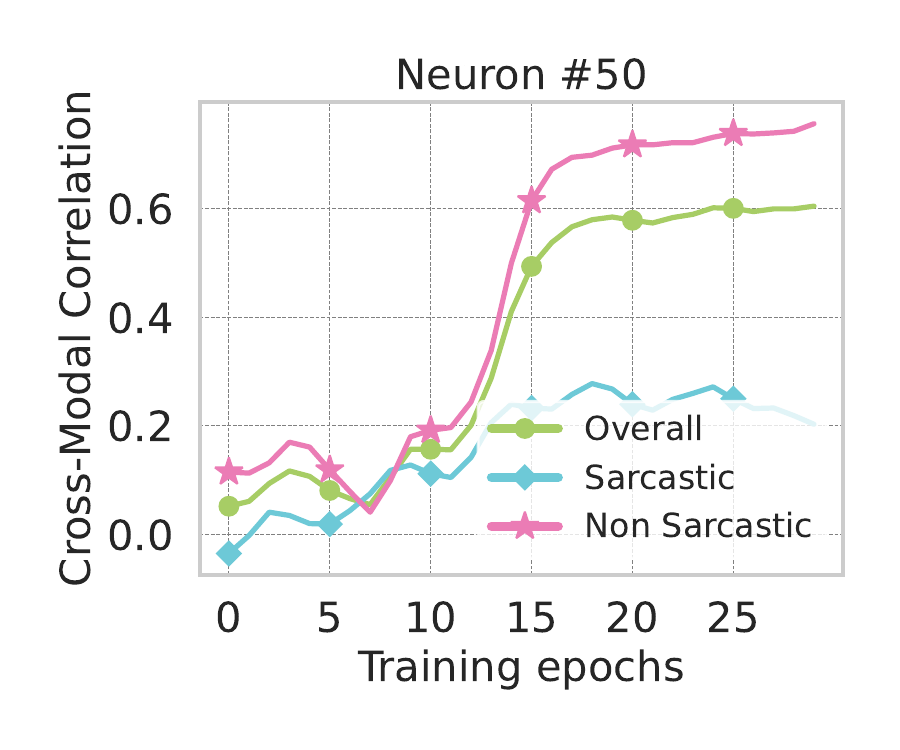}
\end{minipage}
\begin{minipage}{0.32\textwidth}
\centering
\includegraphics[width=\textwidth]{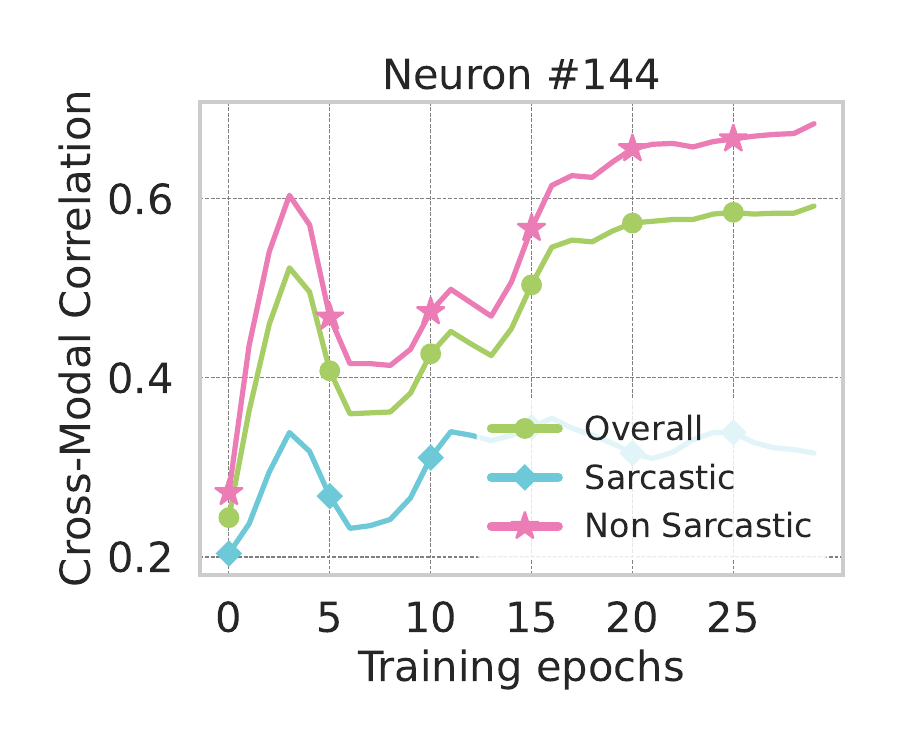}
\end{minipage}
\begin{minipage}{0.32\textwidth}
\centering
\includegraphics[width=\textwidth]{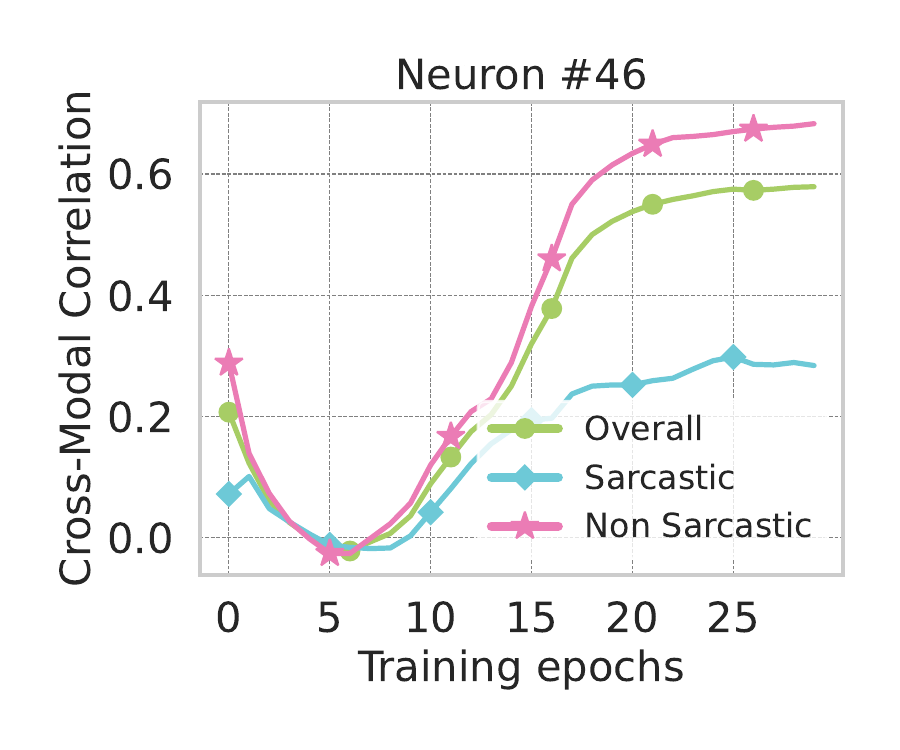}
\end{minipage}
\begin{minipage}{0.32\textwidth}
\centering
\includegraphics[width=\textwidth]{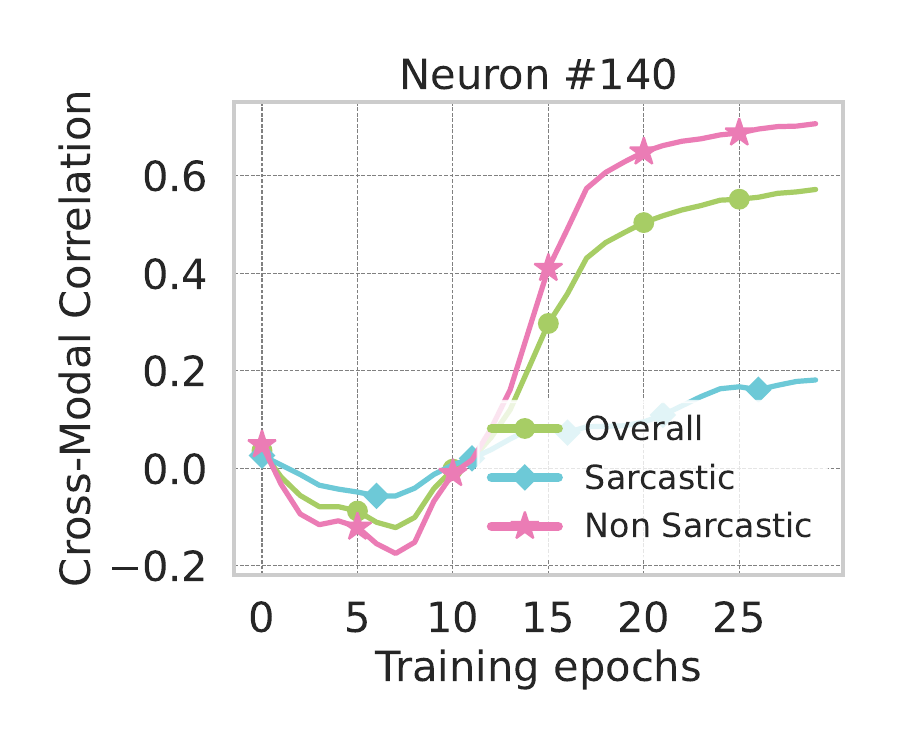}
\end{minipage}
\begin{minipage}{0.32\textwidth}
\centering
\includegraphics[width=\textwidth]{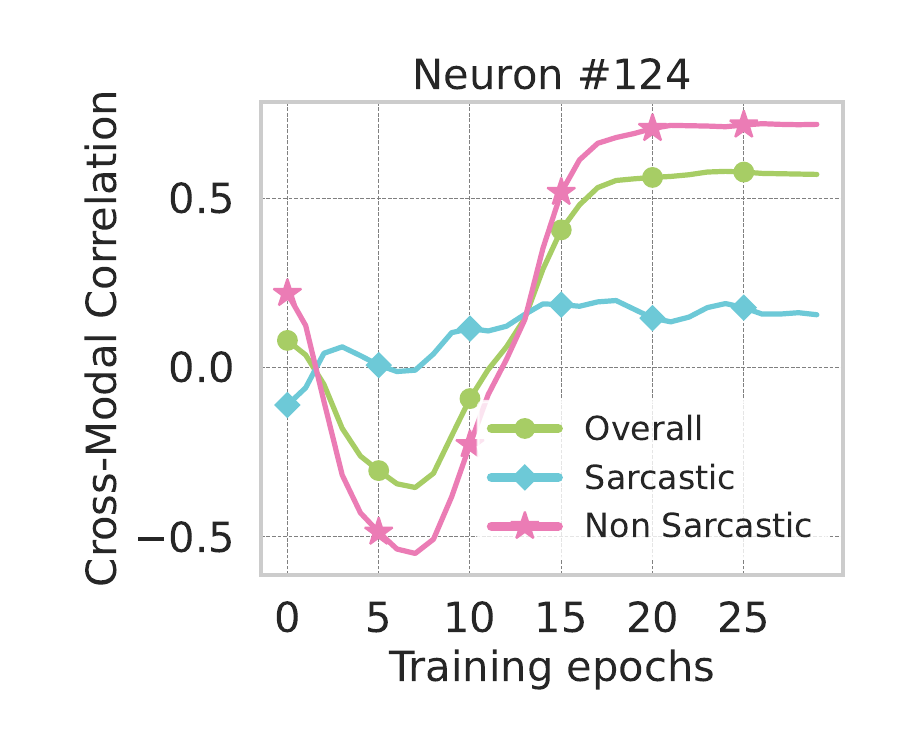}
\end{minipage}
\begin{minipage}{0.32\textwidth}
\centering
\includegraphics[width=\textwidth]{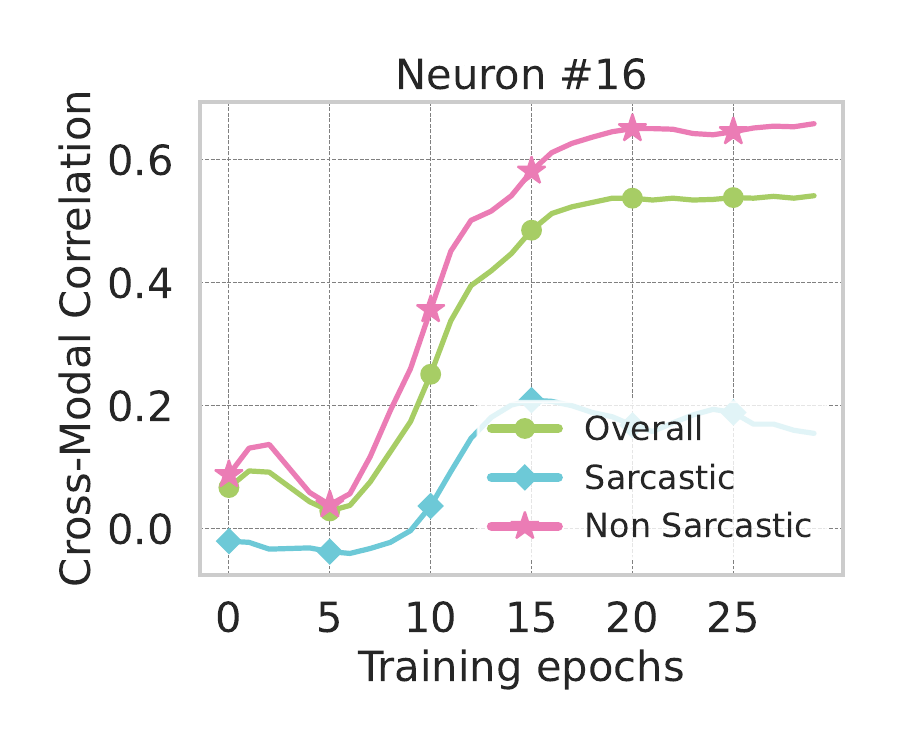}
\end{minipage}
\begin{minipage}{0.32\textwidth}
\centering
\includegraphics[width=\textwidth]{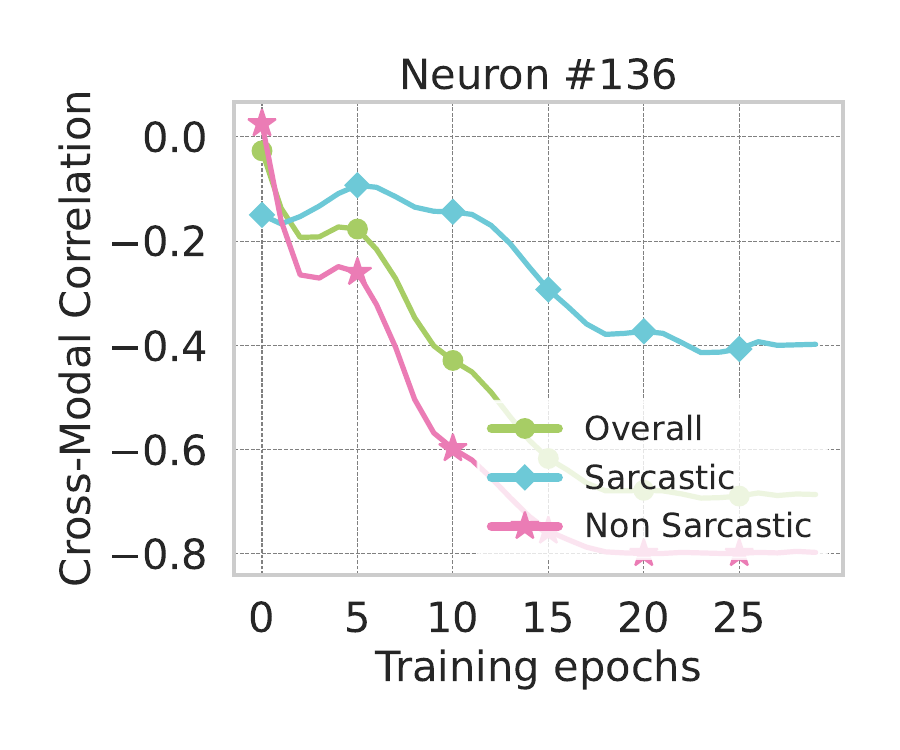}
\end{minipage}
\begin{minipage}{0.32\textwidth}
\centering
\includegraphics[width=\textwidth]{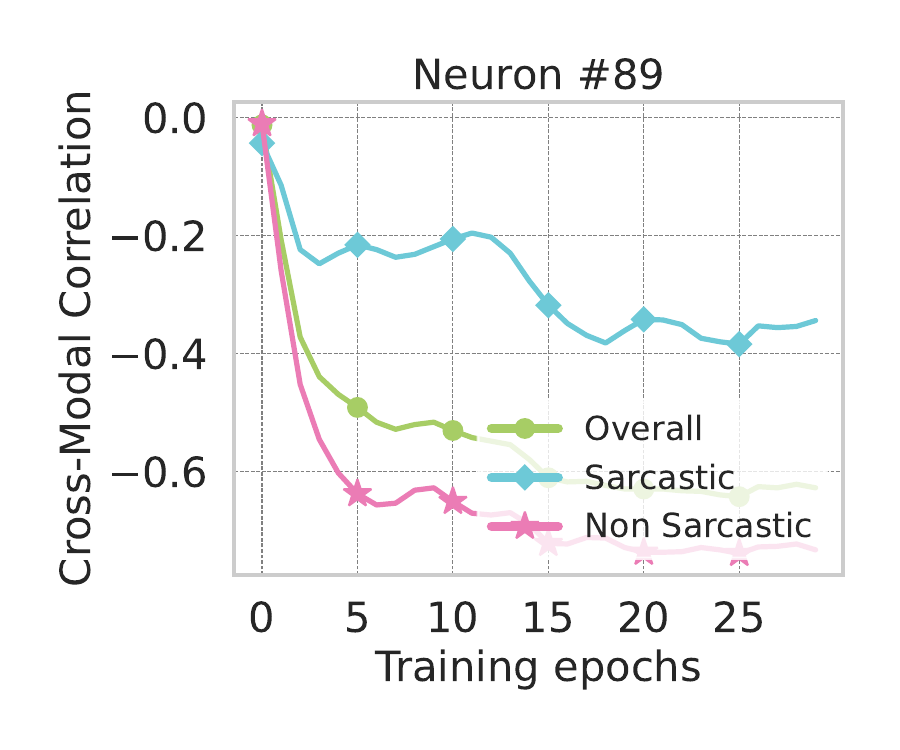}
\end{minipage}
\begin{minipage}{0.32\textwidth}
\centering
\includegraphics[width=\textwidth]{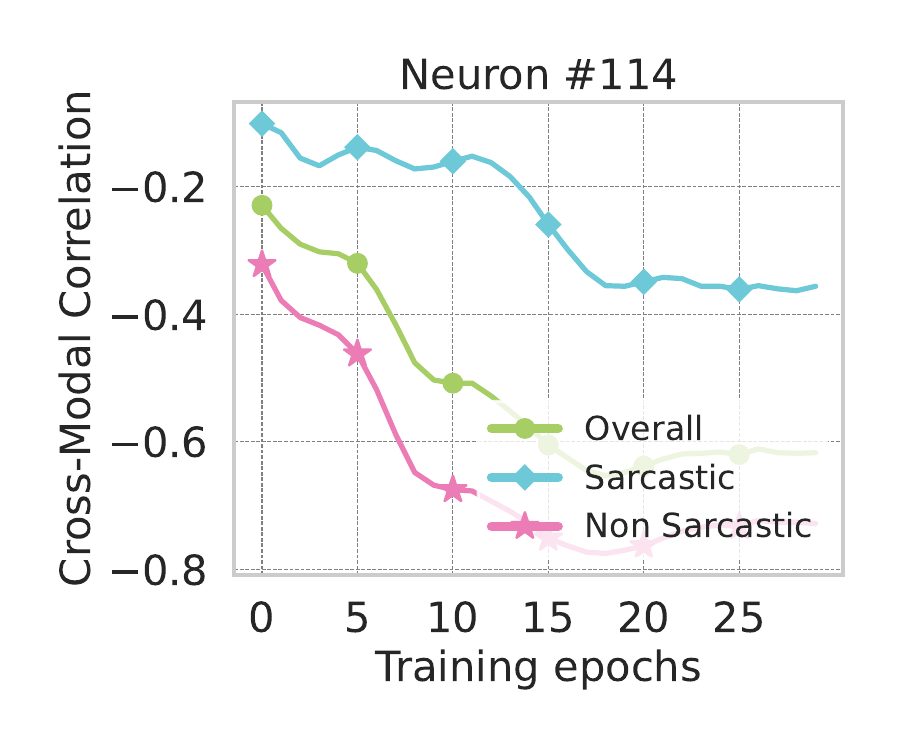}
\end{minipage}
\caption{\textbf{Evolution of cross-modal correlation during unpaired co-training}. Each curve shows the correlation between visual and textual activations over training epochs for sarcastic (\texttt{label}=1) and non-sarcastic (\texttt{label}=0) samples. Most neurons develop positive correlations for non-sarcastic content, while others exhibit negative correlations, suggesting functional specialization for multimodal alignment and the detection of incongruence.}
\label{fig:appendix_multimodal_neurons_fig}
\end{figure}

\FloatBarrier

\subsection{Change in Decision Boundaries with Unpaired Data from Another Modality}\label{app:dec-boundary}
Our decision boundary visualizations are constructed by projecting the high-dimensional embedding space of a given classifier to a 2D plane. Axis 1 is computed as the normalized difference between the classifier weights of the two selected classes, representing the primary decision direction. Axis 2 is chosen to be orthogonal to Axis 1, constructed from the difference between the class mean embeddings after removing the component parallel to Axis 1. This orthogonalization ensures that the two axes capture complementary aspects: Axis 1 reflects the primary model decision boundary, while Axis 2 captures the variation orthogonal to that decision. The final 2D projection matrix combines these two vectors as columns, and embedding vectors are then mapped to this plane using a simple dot product.~\Cref{fig:decision_boundary_oxford} ,~\Cref{fig:decision_boundary_dtd} and~\Cref{fig:decision_boundary_flowers} show the change in decision boundary when adding unpaired textual information for 2-shot classification on top of frozen CLIP ResNet-50 features for Oxford Pets, DTD and Oxford Flowers datasets. 

\begin{figure}[!htb]
    \centering
    \includegraphics[width=1.\linewidth]{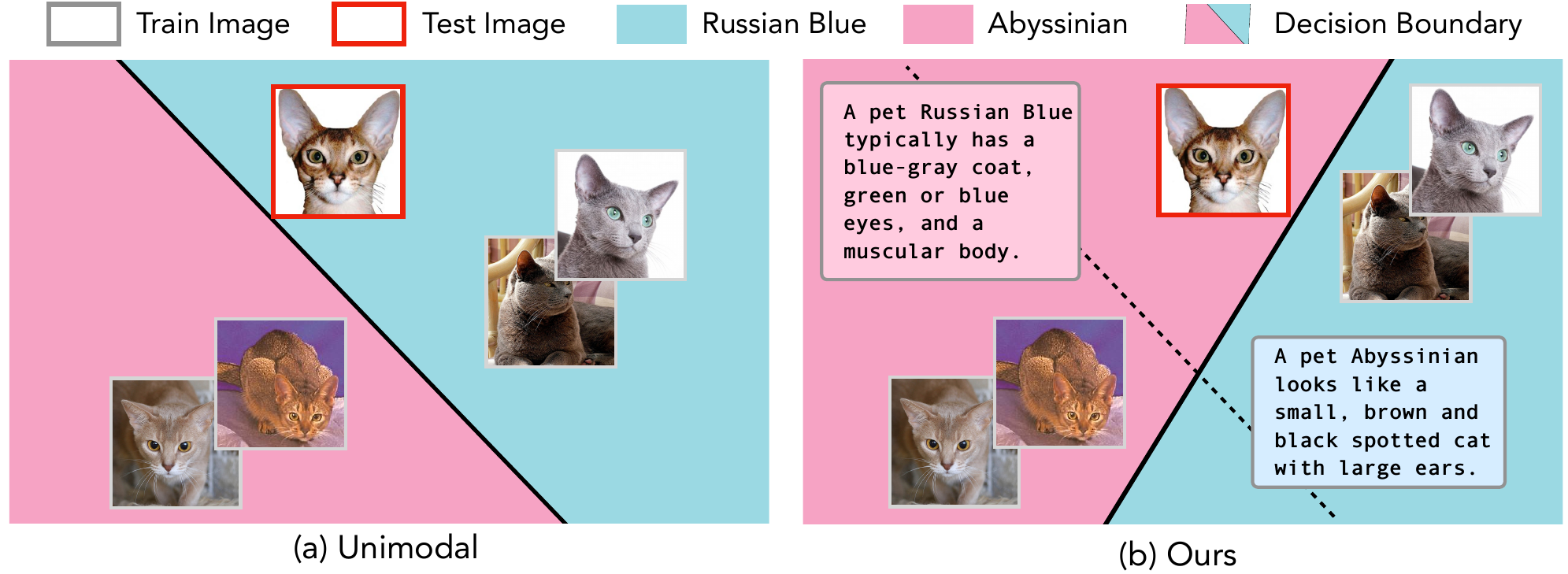}
    \caption{{\textbf{Impact of unpaired text on decision boundaries (CLIP ResNet50).} (Left) Visual features alone learn ambiguous class boundaries between Russian Blue and Abyssinian cats. (Right) Adding unpaired text sharpens the boundary, leveraging semantic cues to better distinguish similar categories.}}
    \label{fig:decision_boundary_oxford}
\end{figure}

\begin{figure}[!htb]
    \centering
    \includegraphics[width=1.\linewidth]{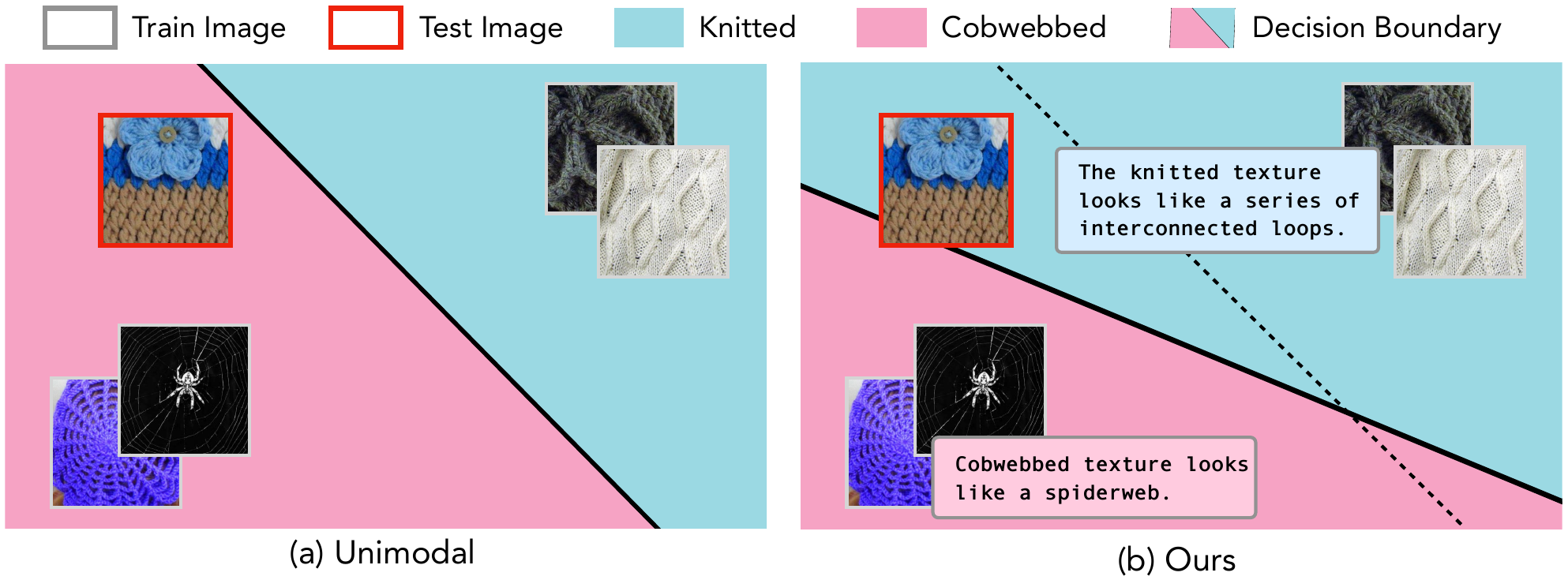}
    \caption{\textbf{Impact of unpaired text on decision boundaries (CLIP ResNet50).} (Left) Visual features alone learn ambiguous class boundaries between knitted and cobwebbed. (Right) Adding unpaired text sharpens the boundary, leveraging semantic cues to better distinguish similar categories}
    \label{fig:decision_boundary_dtd}
\end{figure}

\begin{figure}[!htb]
    \centering
    \includegraphics[width=1.\linewidth]{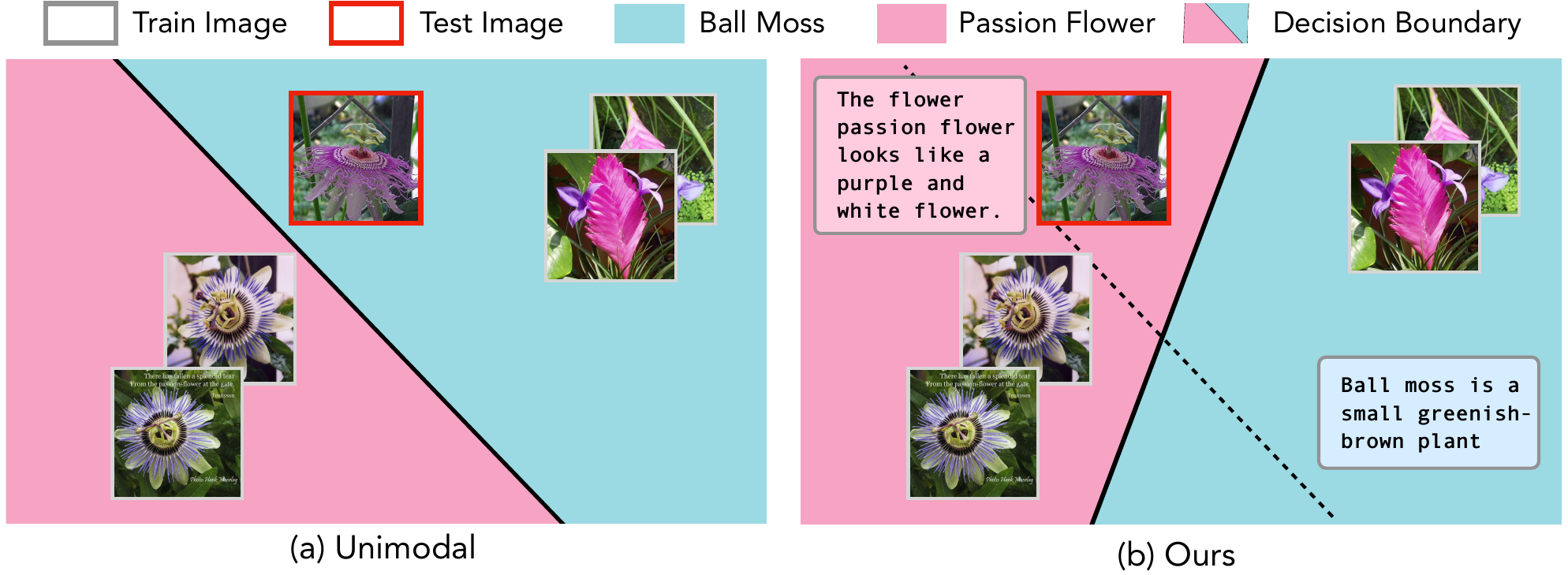}
    \caption{\textbf{Impact of unpaired text on decision boundaries (CLIP ResNet50).} (Left) Visual features alone learn ambiguous class boundaries between ball moss and passion flower. (Right) Adding unpaired text sharpens the boundary, leveraging semantic cues to better distinguish similar categories}
    \label{fig:decision_boundary_flowers}
\end{figure}

\subsection{What do models learn from unpaired data?}\label{app:qualitative}
To understand what the model is truly learning and how its weights evolve, we develop and analyze three key metrics: functional margin, silhouette score, and class-prototype vectors. These metrics inform on how well the model distinguishes between classes and how text information influences the structure of feature-space 

\textbf{Functional margin.} This quantifies how confidently a model separates a given sample from the decision boundary. For a sample $i$ belonging to class $y$, we calculate the margin relative to the next highest competing class. Specifically, we identify the second-highest logit among the incorrect classes, denoted as class $j^*$, and compute the functional margin as
\begin{equation}
    \gamma_i = \frac{w_y^Tx_i- w_{j^*}^Tx_i}{\|w_y- w_{j^*}\|_2}
\end{equation}
where $w_y^Tx_i$ represents the logit for the true class, while $w_{j^*}^Tx_i$ represents the highest logit among the competing classes.  Larger margins indicate more confident and robust classification, while smaller margins imply that the sample lies closer to a misclassification boundary. As shown in~\Cref{fig:classif_margin_sun397_dinov2}, both \textit{Ours} and \textit{Ours (init)} exhibit substantially larger classification margins than the unimodal baseline, demonstrating that augmenting primary‐modality training with unpaired multimodal data improves confidence in predictions over the primary modality.

\begin{figure}[!htb]
    \centering
    \includegraphics[width=1.\linewidth]{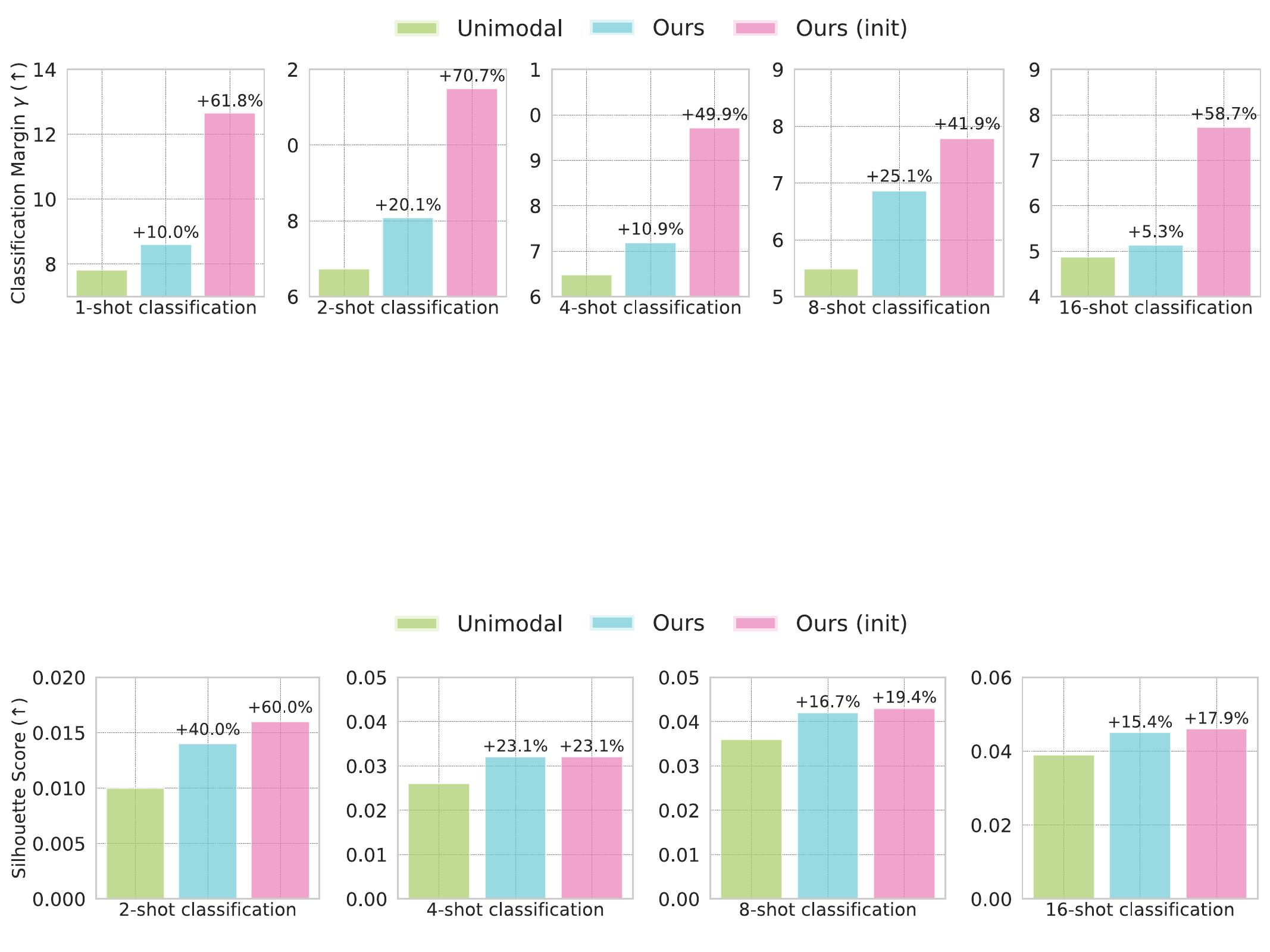}
    \caption{Functional margin of the linear head trained on SUN397 dataset for few-shot classification significantly increases when training with both \algoname and \algoname with linear head initialization.}
    \label{fig:classif_margin_sun397_dinov2}
\end{figure}

\textbf{Silhouette Score and DB-Index.} The Silhouette Score indicates how well-separated the clusters are, while the DB-Index measures intra-class compactness versus inter-class separation. Higher silhouette and lower DB-Index values mean better-defined clusters, indicating that text helps tighten intra-class spread and widen inter-class gaps. As shown in~\Cref{fig:sillouette_sun397_dinov2} and~\Cref{fig:db_idx_sun397_dinov2}, both \textit{Ours} and \textit{Ours (init)} exhibit reduced intra-class distances and increased inter-class separations, further confirming improved class separability.

\begin{figure}[!htb]
    \centering
    \includegraphics[width=1.\linewidth]{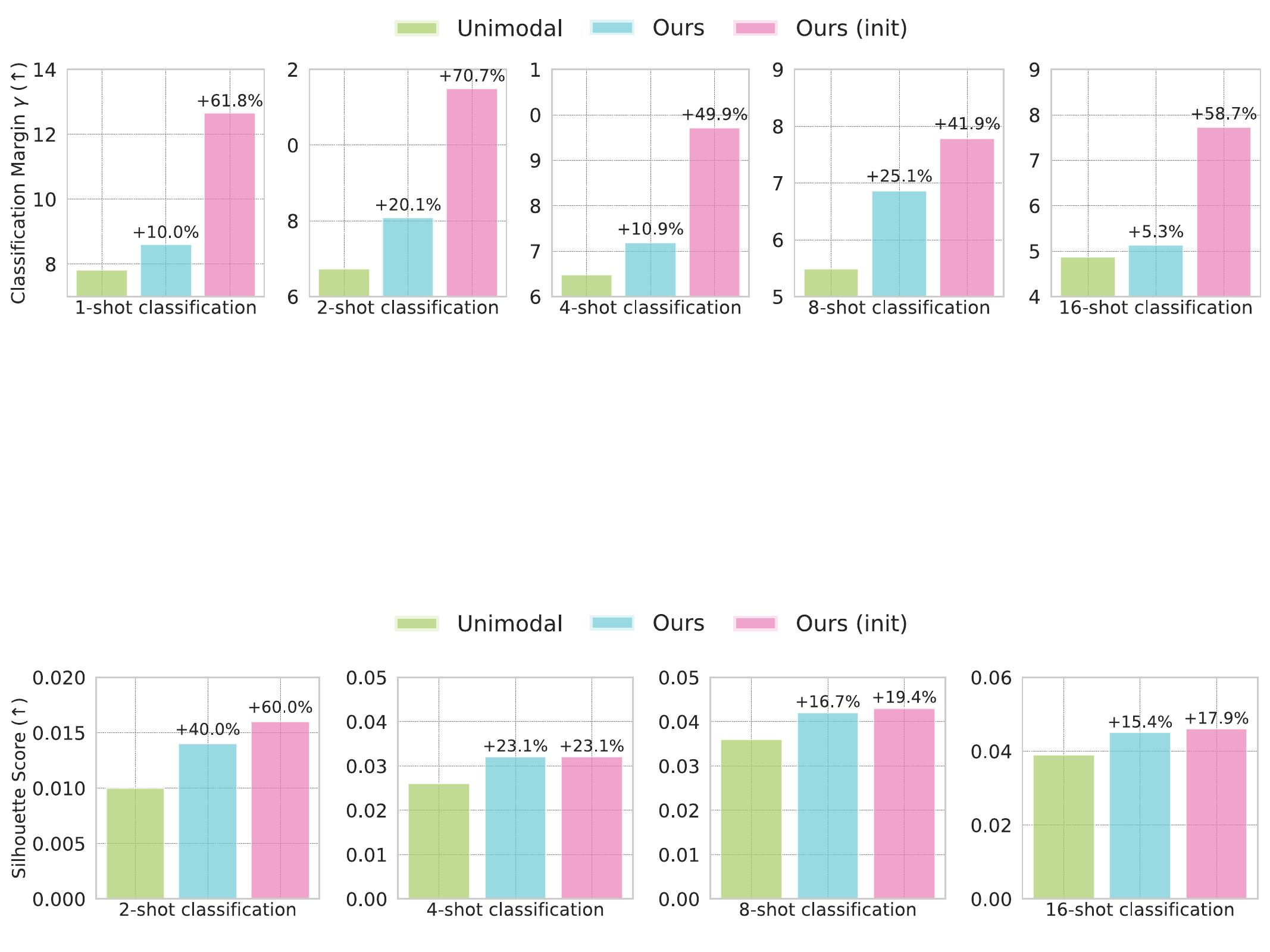}
    \caption{Silhouette Score of the linear head trained on SUN397 dataset for few-shot classification significantly increases when training with both \algoname and \algoname with linear head initialization.}
    \label{fig:sillouette_sun397_dinov2}
\end{figure}

\begin{figure}[!htb]
    \centering
    \includegraphics[width=1.\linewidth]{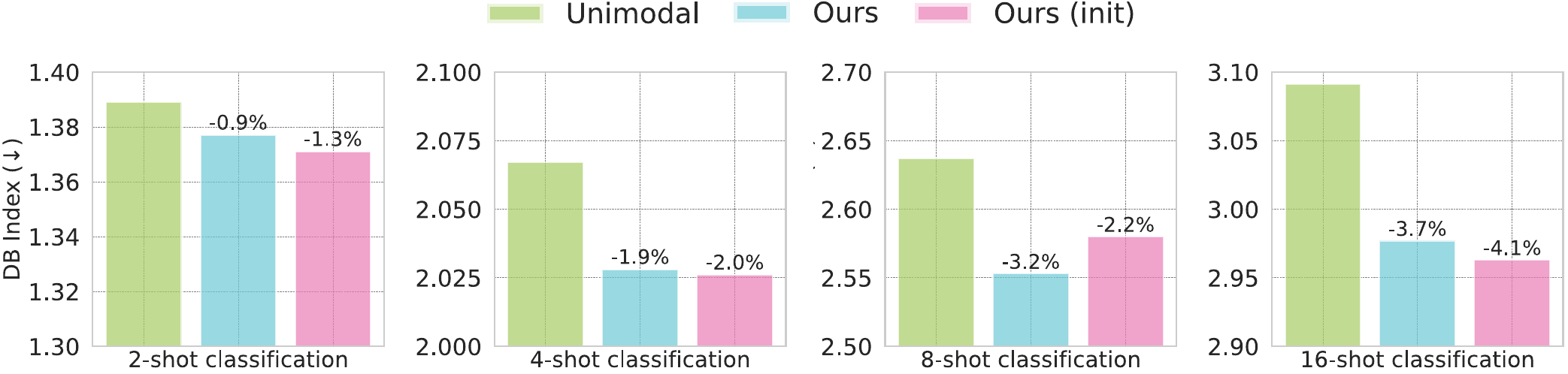}
    \caption{DB-Index of the linear head trained on SUN397 dataset for few-shot classification significantly improves when training with both \algoname and \algoname with linear head initialization.}
    \label{fig:db_idx_sun397_dinov2}
\end{figure}

\textbf{Class-Prototype Vectors.} These vectors are the rows of the final linear layer’s weight matrix, representing the class centroids in the shared embedding space. We compute a heatmap of inner products between class prototypes and average text embeddings of the corresponding class to assess how well text features align with class centers. This helps reveal how the model organizes multimodal information.~\Cref{fig:class_prototypes_sun397_dinov2} shows a pronounced diagonal structure, indicating that each class’s text embedding aligns closely with the learned weights of the model.

\begin{figure}[!htb]
    \centering
    \includegraphics[width=0.8\linewidth]{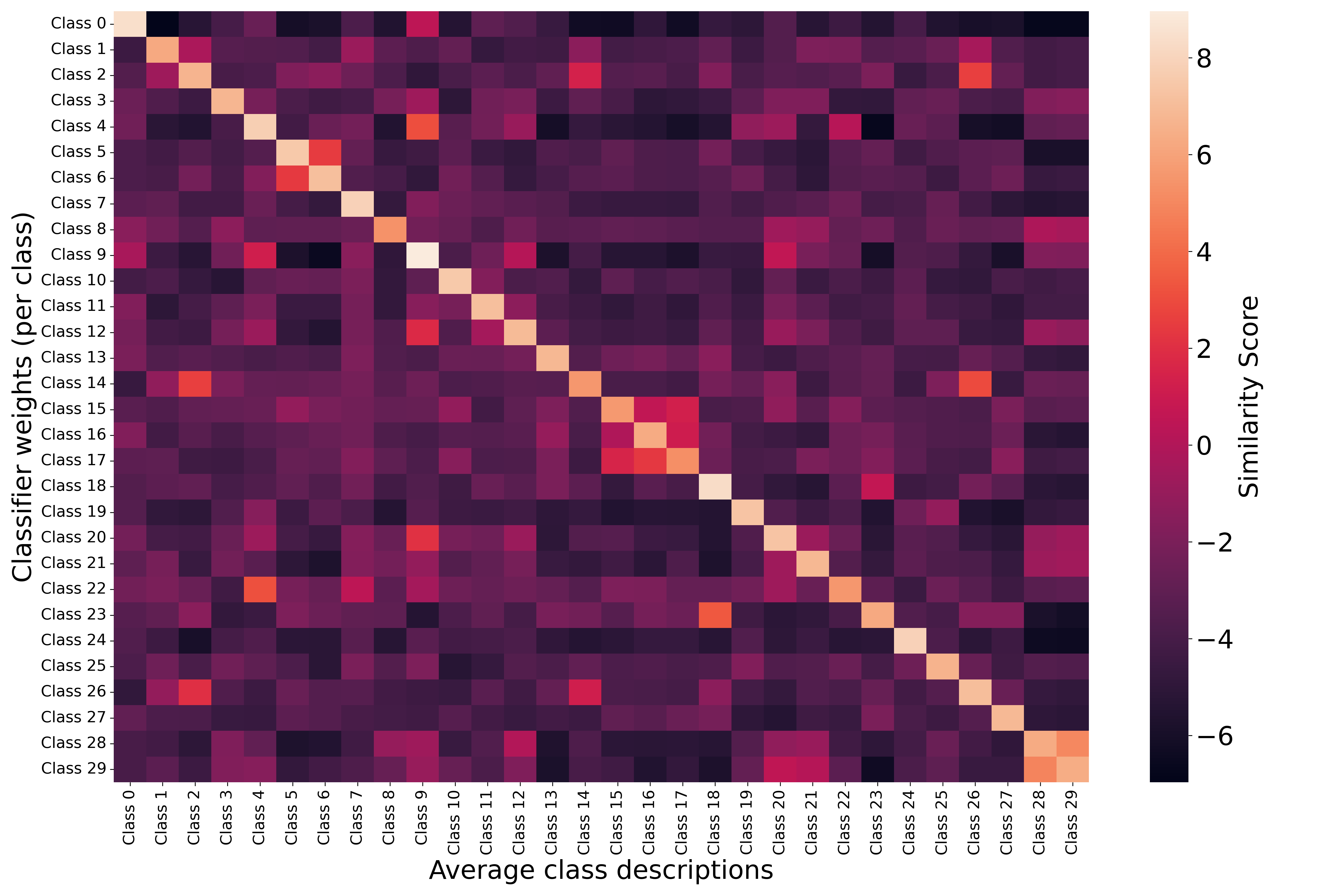}
   \caption{Inner products between each linear-head weight vector and its class’s mean text embedding, demonstrating that text features align well with class prototypes.}
    \label{fig:class_prototypes_sun397_dinov2}
\end{figure}

\end{document}